
%
\documentclass[11pt,twoside]{article}


\usepackage{fullpage}
\usepackage{epsf}
\usepackage{fancyhdr}
\usepackage{graphics}
\usepackage{graphicx}
\usepackage{psfrag}
\usepackage[T1]{fontenc}
\usepackage{color}
\usepackage{amsthm}
\usepackage{times}
\usepackage{amsfonts}
\usepackage{amsmath}
\usepackage{amssymb}
\usepackage{mathrsfs}
\usepackage{bm}
\usepackage{accents}
\usepackage{listings}
\usepackage{caption}
\usepackage{mleftright}
\usepackage{subcaption}
\usepackage[linesnumbered, ruled, vlined]{algorithm2e}
\usepackage[titletoc,toc,title]{appendix}
\usepackage{enumerate}
\usepackage{verbatim} 
\usepackage{csquotes} 
\usepackage{upquote} 
\usepackage[bottom]{footmisc} 
\usepackage{thmtools}
\usepackage[dvipsnames]{xcolor}
\usepackage[english]{babel} 
\usepackage[shortlabels]{enumitem}
\usepackage[colorlinks,linkcolor = RawSienna, urlcolor  = RedViolet, citecolor = RoyalBlue, anchorcolor = ForestGreen,bookmarks=False]{hyperref}

\setlength{\textwidth}{\paperwidth}
\addtolength{\textwidth}{-6cm}
\setlength{\textheight}{\paperheight}
\addtolength{\textheight}{-4cm}
\addtolength{\textheight}{-1.1\headheight}
\addtolength{\textheight}{-\headsep}
\addtolength{\textheight}{-\footskip}
\setlength{\oddsidemargin}{0.5cm}
\setlength{\evensidemargin}{0.5cm}

\newlength{\widebarargwidth}
\newlength{\widebarargheight}
\newlength{\widebarargdepth}


\makeatletter
\long\def\@makecaption#1#2{
        \vskip 0.8ex
        \setbox\@tempboxa\hbox{\small {\bf #1:} #2}
        \parindent 1.5em  
        \dimen0=\hsize
        \advance\dimen0 by -3em
        \ifdim \wd\@tempboxa >\dimen0
                \hbox to \hsize{
                        \parindent 0em
                        \hfil 
                        \parbox{\dimen0}{\def\baselinestretch{0.96}\small
                                {\bf #1.} #2
                                } 
                        \hfil}
        \else \hbox to \hsize{\hfil \box\@tempboxa \hfil}
        \fi
        }
\makeatother

\makeatletter
\newcounter{manualsubequation}
\renewcommand{\themanualsubequation}{\alph{manualsubequation}}
\newcommand{\startsubequation}{%
  \setcounter{manualsubequation}{0}%
  \refstepcounter{equation}\ltx@label{manualsubeq\theequation}%
  \xdef\labelfor@subeq{manualsubeq\theequation}%
}
\newcommand{\tagsubequation}{%
  \stepcounter{manualsubequation}%
  \tag{\ref{\labelfor@subeq}\themanualsubequation}%
}
\let\subequationlabel\ltx@label
\makeatother

\renewcommand{\baselinestretch}{1.04} 
\frenchspacing 
\date{}
\setlength{\parindent}{1em}
\clubpenalty = 10000
\widowpenalty = 10000
\hfuzz = 2pt  
\usepackage[style = alphabetic,citestyle=alphabetic,maxbibnames=99,backend=biber,sorting=nyt,natbib=true,backref=true]{biblatex}
\addbibresource{ref.bib}
\DefineBibliographyStrings{english}{%
  backrefpage = {Cited on page},
  backrefpages = {Cited on pages},
}
\newcommand{\BlackBox}{\rule{1.5ex}{1.5ex}}  

\makeatletter
\renewenvironment{proof}[1][\proofname]{%
  \par\pushQED{\qed}\normalfont%
  \topsep6\p@\@plus6\p@\relax
  \trivlist\item[\hskip\labelsep\bfseries#1\@addpunct{.}]%
  \ignorespaces
}{%
  \popQED\endtrivlist\@endpefalse
}
\makeatother

\newtheorem{theorem}{Theorem}[section]
\newtheorem{lemma}[theorem]{Lemma}

\newtheorem{remark}[theorem]{Remark}

\newtheorem{corollary}[theorem]{Corollary}
\newtheorem{definition}[theorem]{Definition}


\newcommand\numberthis{\addtocounter{equation}{1}\tag{\theequation}}

\newcommand{\Tr}{\mathrm{Tr}}

\newcommand{\by}{{\bf y}}

\newcommand{\cL}{{\cal L}}

\newcommand{\cN}{{\cal N}}

\newcommand{\R}{\mathbb{R}}
\renewcommand{\S}{\mathbb{S}}
\newcommand{\N}{\mathbb{N}}

\newcommand{\E}{\mathbb{E}}

\renewcommand{\Pr}{\mathbb{P}}
\newcommand{\lv}{\lVert}
\newcommand{\rv}{\rVert}

\newcommand{\tw}{\widetilde{w}}

\renewcommand{\epsilon}{\varepsilon}

\DeclareSymbolFont{extraup}{U}{zavm}{m}{n}
\DeclareMathSymbol{\varheart}{\mathalpha}{extraup}{86}
\DeclareMathSymbol{\vardiamond}{\mathalpha}{extraup}{87}

\DeclareMathOperator*{\argmin}{arg\,min}

\newcommand{\htheta}{\hat{\theta}}
\newcommand{\ttheta}{\widetilde{\theta}}
\newcommand{\tby}{\widetilde{\by}}

\renewcommand{\epsilon}{\varepsilon}
\newcommand{\beps}{\bm{\epsilon}}

\DeclareMathSizes{9}{8}{7}{5}



\title{\textbf{The Interplay Between \\
Implicit Bias and Benign Overfitting \\ 
in Two-Layer Linear Networks}}


\author{
Niladri S. Chatterji \\ 
Computer Science Department \\
Stanford University \\
niladri@cs.stanford.edu \\
      \and
Philip M. Long  \\
Google \\
plong@google.com \\
      \and
Peter L. Bartlett  \\
University of California, Berkeley \& Google \\
peter@berkeley.edu \\
}


\date{\today}

\begin{document}
\maketitle
\begin{abstract}
The recent success of neural network models has shone light on a rather surprising statistical phenomenon: statistical models that perfectly fit noisy data can generalize well to unseen test data. Understanding this phenomenon of \emph{benign overfitting} has attracted intense theoretical and empirical study. In this paper, we consider interpolating two-layer linear neural networks trained with gradient flow on the squared loss and derive bounds on the excess risk when the covariates satisfy sub-Gaussianity and anti-concentration properties, and the noise is independent and sub-Gaussian. By leveraging recent results that characterize the implicit bias of this estimator, 
our bounds emphasize the role of both the quality of the initialization as well as the properties of the data covariance matrix in achieving low excess risk.
\end{abstract}


\tableofcontents
\section{Introduction} 
Understanding benign overfitting---the phenomenon where statistical models predict well on test data despite perfectly fitting noisy training data~\citep[see, e.g.,][]{zhang2016understanding,belkin2019reconciling,bartlett2021deep,belkin2021fit}---has recently attracted intense attention. One line of work has focused on understanding this phenomenon in relatively simple models such as linear regression~\citep{kobak2020optimal,hastie2019surprises,bartlett2020benign,muthukumar2020harmless,negrea2020defense,chinot2020robustness,NEURIPS2020_72e6d323,tsigler2020benign,bunea2020interpolation,chinot2020robustness_min,koehler2021uniform}
including with random features
\citep{hastie2019surprises,DBLP:conf/icml/YangYYSM20,li2021towards}, linear classification~\citep{montanari2019generalization,chatterji2020finite,liang2020precise,muthukumar2020classification,hsu2020proliferation,deng2019model,wang2021benign}, kernel regression~\citep{liang2020just,mei2019generalization,liang2020MultipleDescent} and simplicial nearest neighbor methods~\citep{belkin2018overfitting}.

A complementary
line of work~\citep{soudry2018implicit,ji2018risk,gunasekar2017implicit,nacson2018stochastic,gunasekar2018implicit,gunasekar2018characterizing,DBLP:conf/iclr/YunKM21,azulay2021implicit} has formalized the argument~\citep{neyshabur2017implicit}
that, even when no explicit regularization is used in training these models, 
there is nevertheless implicit regularization encoded in the choice of the optimization method, loss function and initialization. They argue that this implicit bias is critical in determining the generalization properties of the learnt 
model. 

Recently, \citet{azulay2021implicit}
characterized the implicit bias of
gradient flow applied to two-layer
linear neural networks with the squared loss.
More concretely, the setting is as follows. Given $n$ data points $(x_1,y_1),\ldots,(x_n,y_n) \in \R^{p}\times \R$, let $\by := (y_1,\ldots,y_n)^{\top} \in \R^n$ and $X := (x_1,\ldots,x_n)^{\top} \in \R^{n\times p}$. They studied two-layer linear networks, with $m$ hidden units, and weights $a \in \R^{m}$ and $W \in \R^{m\times p}$, that map an input $x\in \R^{p}$ to the scalar
\begin{align*}
   a^{\top}Wx.
\end{align*}
Let $\theta = a^{\top}W \in \R^{p}$ denote the 
standard parameterization of the
resulting linear map. A two-layer linear network with parameters $\{a,W\}$ is said to be \emph{balanced} if 
\begin{align*}
    aa^{\top} - WW^{\top} = 0.
\end{align*}
\citet{azulay2021implicit} showed in Proposition~1 that, starting from a balanced initial point ($a(0),W(0)$), if the gradient flow converges to a solution that perfectly fits the data, then the solution can be characterized as follows:
\begin{align} \label{def:two_layer_implicit_bias}
    \htheta &\in \argmin_{\theta \in \R^{p}}\lv \theta \rv^{3/2} - \frac{\theta(0)^{\top}\theta}{\sqrt{\lv \theta(0)\rv}}, \qquad \text{s.t., } \; \by=X\theta.
\end{align}

In this paper, we study the generalization properties of this solution in the overparameterized regime, where such
interpolation is possible. We prove upper bounds on the excess
risk and show that it depends both on the properties of the eigenstructure of population covariance matrix---as in the case of the minimum $\ell_2$-norm interpolant (ordinary least squares)~\citep{bartlett2020benign,tsigler2020benign}---and also on the quality of the initialization $\theta(0)$. In particular, we show that to drive the excess risk to zero, it suffices if the number of samples is large relative to the trace of the population covariance matrix and also that the number of ``small'' eigenvalues is large relative to $n$. Our bounds also show that the excess risk can be smaller as a rescaling of $\theta(0)$ gets closer to the optimal linear predictor.

An overview of the techniques that drive our analysis is as follows. We begin by showing that the predictor $\htheta$ can be viewed as a perturbation of the ordinary least squares solution in the subspace 
orthogonal to the row span of $X$.
To characterize this perturbation we find that it is important to derive upper and lower bounds on $\Tr((X X^\top)^{-1})$. To do this, as done in past work, we instead bound the trace of the ``tail'' of the matrix---the submatrix formed by the many low variance directions---and show that it not only concentrates but also provides a good approximation for the trace of the inverse of the entire matrix $\Tr((X X^\top)^{-1})$. 

Along the way we derive a new multiplicative high-probability lower bound on the least singular value of a non-isotropic rectangular random matrix (Lemma~\ref{l:smallest_singular_value}). We could not find such a result in the literature.   The most closely related work that we know of~\citep[see][and references therein]{rudelson2010non}, characterizing the ``hard edge'' of a random matrix, has focused on the most difficult case of isotropic square matrices.

The remainder of the paper is organized as follows. In Section~\ref{s:prelim} we introduce notation and 
definitions. In Section~\ref{s:main_results} we present our results. We provide a proof of our main result, Theorem~\ref{t:main}, in Section~\ref{s:proof_details} and prove our lower bound in Section~\ref{s:lower}. We conclude with a discussion in Section~\ref{s:discussion}.  

\section{Preliminaries}\label{s:prelim}
This section includes notational conventions and a description of the setting.
\subsection{Notation}
Given a vector $v$, let $\lv v \rv$ denote its Euclidean norm. Given a matrix $M$, let $\lv M \rv$ denote its Frobenius norm and $\lv M \rv_{op}$ denote its operator norm. 
  For any $j \in \N$, we denote the set $\{ 1,\ldots,j \}$ by $[j]$. Given a 
symmetric
matrix $M \in \R^{p\times p}$ we let $\mu_1(M)\ge \ldots \ge \mu_p(M)$ denote its eigenvalues. We let $I_p$ denote the identity matrix in $p$ dimensions. Given any vector $v \in \R^{p}$, we let $v_{1:j}\in \R^p$ denote the vector obtained by zeroing out the last $p-j$ coordinates of $v$ and let $v_{j+1:p}\in \R^{p}$ denote the vector obtained by zeroing out the first $j$ coordinates. Given a symmetric positive semidefinite matrix $M \in \R^{p\times p}$, let $M_{1:j} \in \R^{p\times p}$ be the matrix formed by zeroing out the last $p-j$ rows and columns of $M$, and let $M_{j+1:p} \in \R^{p\times p}$ be the matrix formed by zeroing out the first $j$ rows and columns. We let $ \lv v \rv_{M} := \sqrt{v^{\top}M v}$ denote the matrix norm of $v$ with respect to the matrix $M$.  We use the standard ``big Oh notation'' \citep[see, e.g.,][]{cormen2009introduction}. We will use $c,  c', c_1, \ldots$ to denote positive absolute constants, which may take different values in
different contexts.

\subsection{The setting}
Throughout the paper we assume that $p > n$. Although we assume throughout that the input dimension $p$ is finite, it is straightforward to extend our results to infinite $p$.

For random $(x,y)\in \mathbb{R}^p\times\mathbb{R}$, let
\[\theta^\star \in \argmin_{\theta \in\R^{p}} \E\left[(y-x^{\top}\theta)^2\right]\] 
be an arbitrary optimal linear regressor. We assume that $x$ is mean zero and let
$\Sigma := \E[xx^{\top}]$ denote the covariance matrix of the features. Without loss of generality, we will assume that the covariance matrix is diagonal and its eigenvalues are arranged in descending order $\lambda_1\ge \lambda_2\ge\ldots\ge\lambda_p> 0$. (Note that such a covariance matrix can always be obtained by a rotation and permutation, and the estimator~\eqref{def:two_layer_implicit_bias} is correspondingly transformed.)
Recall that $\by= (y_1,\ldots,y_n)^{\top}$ is the vector  of responses and $X= (x_1,\ldots,x_n)^{\top}$ is the data matrix. Define $\beps = (y_1-x_1^{\top}\theta^\star,\ldots,y_n-x_n^{\top}\theta^\star)^\top = (\epsilon_1,\ldots,\epsilon_n)^\top$ to be the vector of noise. 

We make the following assumptions:
\begin{enumerate}[({A}.1)]
\item \label{assumption:first}the samples $(x_1,y_1),\ldots,(x_n,y_n)$
and $(x,y)$
are drawn i.i.d.;
    \item the features $x$ and responses $y$ are mean-zero;
    \item \label{assumption:third}the features $x= \Sigma^{1/2}u$, where $u$ has components that are independent $\sigma_x^2$-sub-Gaussian random variables with $\sigma_x$ a positive constant, that is, for all $\phi \in \R^p$
    \begin{align*}
        \E\left[\exp\left(\phi^{\top}u\right)\right]\le \exp\left(\sigma_x^2\lv \phi\rv^2/2\right);
    \end{align*}
    \item \label{assumption:small_ball_probability} 
    there is an absolute constant c such that, for any unit vector $\phi \in \S^{n-1}$ and any $a\le b \in \R$ 
    \begin{align*}
        \Pr\left[(\Sigma^{-1/2}X^{\top}\phi)_i \in [a,b]\right] \le c|b-a|
    \end{align*}
    for all $i \in [p]$;
    \item \label{assumption:fifth}the difference $y-x^{\top}\theta^\star$ is $\sigma^2_y$-sub-Gaussian, conditionally on $x$, with $\sigma_y$ a positive constant, that is, for all $\phi \in \R$
    \begin{align*}
        \E_{y}\left[\exp\left(\phi(y-x^{\top}\theta^\star)\right)\; \big| \;x\right]\le \exp\left(\sigma_y^2 \phi^2/2\right)
    \end{align*}
    (note that this implies that $\E\left[y \mid x\right] = x^{\top}\theta^\star$);
    \item \label{assumption:sixth}for all $x$, the conditional variance
    of $y-x^{\top}\theta^\star$ is
    \begin{align*}
        \E_{y}\left[(y-x^{\top}\theta^\star)^2 \; \big| \; x\right] =\sigma^2
    \end{align*}
    where $\sigma$ is a positive constant.
\end{enumerate}
We emphasize
that $\sigma_x,\sigma_y$ and $\sigma$ are absolute constants, independent of all other problem parameters ($n,p$ and $\Sigma$). All the constants going forward may depend on the value of these constants. 

The assumptions stated above are satisfied in the case where $u$ is generated from a mean-zero isotropic log-concave distribution with sub-Gaussian, independent entries and the noise $y-x^{\top}\theta^\star$ is independent and sub-Gaussian. 
We note that Assumptions~\ref{assumption:first}-\ref{assumption:third}, \ref{assumption:fifth}-\ref{assumption:sixth} are standard in the literature of benign overfitting in linear models~\citep[see, e.g.,][]{bartlett2020benign}. We make an additional small-ball probability assumption (Assumption~\ref{assumption:small_ball_probability}) which allows us to derive a sharper multiplicative lower tail bound for the minimum eigenvalue of the submatrices of $X$ (Lemma~\ref{l:smallest_singular_value}).

Given the training samples define the excess risk of an estimate $\theta \in \R^p$ to be
\begin{align*}
    \mathsf{Risk}(\theta) := \E_{x,y}\left[(y-x^{\top}\theta)^2-(y-x^{\top}\theta^\star)^2\right],
\end{align*}
where $x,y$ are independent test samples.

Define the shorthand$$w:= \frac{\theta(0)}{\sqrt{\lv \theta(0)\rv}},$$ 
so that
the estimator described in equation~\eqref{def:two_layer_implicit_bias} can be written as the solution to a constrained convex program given by
\begin{align}
     \htheta &\in \argmin_{\theta \in \R^{p}} \; \lv \theta \rv^{3/2} - w^{\top} \theta, \qquad \text{s.t., } \; \by=X\theta. \label{e:estimator_def_in_v}
\end{align}

We let $UDV^{\top} = X$ be the singular value decomposition 
of $X$ where $U \in \R^{n\times n}$ and $V \in \R^{p\times p}$ are unitary matrices and $D \in \R^{n\times p}$ is a rectangular diagonal matrix with its eigenvalues in descending order. By 
Lemma~\ref{l:full_rank},
we know that the rank of $D$ is $n$. We let $D^{\dagger} \in \R^{p \times n}$ denote the pseudo-inverse of $D$. Since $D$ has rank $n$, the bottom $p-n$ rows of $D^{\dagger}$ are identically zero. 

Also define
\begin{align}
    \tby  := D^{\dagger}U^{\top}\by, \qquad
    \tw  := V^{\top} w \qquad \text{and}\qquad
    \ttheta : = V^{\top}\htheta. \label{e:modifed_data_defs}
\end{align}
We will use the following definitions of the
``effective rank'' from \cite{bartlett2020benign}.
\begin{definition}
Given a subset $S \subseteq [p]$,
define $s(S) := \sum_{i \in S}\lambda_i$,
and define
the following ranks of the covariance matrix $\Sigma$
with eigenvalues $\lambda_1,\ldots,\lambda_p$:
\begin{align*}
    r(S) := \frac{s(S)}{\max_{i \in S}\lambda_i} \qquad \text{and} \qquad R(S) := \frac{s(S)^2}{\sum_{i \in S}\lambda_i^2}. 
\end{align*}
Further given any $j \in [p]$,
with some abuse of notation, define $s_j := \sum_{i>j}\lambda_i$ and 
\begin{align*}
    r_j := \frac{s_j}{\lambda_{j+1}} \qquad \text{and} \qquad R_j := \frac{s_j^2}{\sum_{i>j}\lambda_i^2}.
\end{align*}
\end{definition}
The following lemma~\citep[][Lemma~5]{bartlett2020benign} relates these different effective ranks.
\begin{lemma}\label{l:effective_ranks}For any subset $S \subseteq [p]$ the ranks defined above satisfy the following:
\begin{align*}
    r(S) \le R(S) \le r(S)^2.
\end{align*}
\end{lemma}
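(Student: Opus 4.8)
The statement is a pair of elementary inequalities, and my plan is to prove each by a one-line comparison of the denominator $\sum_{i\in S}\lambda_i^2$ of $R(S)$ against a suitable quantity. Throughout write $\lambda_{\max}:=\max_{i\in S}\lambda_i>0$, so that $r(S)=s(S)/\lambda_{\max}$ and $R(S)=s(S)^2/\sum_{i\in S}\lambda_i^2$.

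For the first inequality, $r(S)\le R(S)$, I would bound the denominator of $R(S)$ from above: since $0<\lambda_i\le\lambda_{\max}$ for every $i\in S$, multiplying by $\lambda_i$ and summing over $S$ yields
\[
\sum_{i\in S}\lambda_i^2\ \le\ \lambda_{\max}\sum_{i\in S}\lambda_i\ =\ \lambda_{\max}\,s(S).
\]
Since a smaller positive denominator makes the fraction larger,
\[
R(S)=\frac{s(S)^2}{\sum_{i\in S}\lambda_i^2}\ \ge\ \frac{s(S)^2}{\lambda_{\max}\,s(S)}\ =\ \frac{s(S)}{\lambda_{\max}}\ =\ r(S).
\]

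For the second inequality, $R(S)\le r(S)^2$, I would instead bound the same denominator from below: because $\lambda_{\max}=\lambda_{i_0}$ for some index $i_0\in S$, the nonnegative term $\lambda_{\max}^2$ is one of the summands of $\sum_{i\in S}\lambda_i^2$, so $\sum_{i\in S}\lambda_i^2\ge\lambda_{\max}^2$. Hence
\[
R(S)=\frac{s(S)^2}{\sum_{i\in S}\lambda_i^2}\ \le\ \frac{s(S)^2}{\lambda_{\max}^2}\ =\ \Bigl(\frac{s(S)}{\lambda_{\max}}\Bigr)^{2}\ =\ r(S)^2 .
\]
Combining the two displays gives $r(S)\le R(S)\le r(S)^2$. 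There is essentially no obstacle here; the only point requiring (minor) care is keeping straight which elementary bound on $\sum_{i\in S}\lambda_i^2$ is used where — the summed pointwise bound $\lambda_i^2\le\lambda_{\max}\lambda_i$ for the left inequality, and the single-term bound $\sum_{i\in S}\lambda_i^2\ge\lambda_{\max}^2$ for the right inequality. The version of the lemma stated with an index $j$ in place of the set $S$ is just the special case $S=\{j+1,\dots,p\}$ (using that the eigenvalues are in descending order, so $\lambda_{j+1}=\max_{i>j}\lambda_i$), so nothing extra is needed there.
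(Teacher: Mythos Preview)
Your proof is correct. The paper does not actually give its own proof of this lemma but simply cites it as Lemma~5 of \cite{bartlett2020benign}; your elementary argument, bounding $\sum_{i\in S}\lambda_i^2$ above by $\lambda_{\max}\,s(S)$ and below by $\lambda_{\max}^2$, is exactly the natural justification and matches what one finds in that reference.
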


We define the index $k$ below. The value of $k$ shall help determine what we consider the ``tail'' of the covariance matrix. 
\begin{definition}\label{def:k_star}
For a large enough constant $b$ (that will be fixed henceforth), define
\begin{align*}
    k := \min\{j\ge 0: r_j\ge b n\},
\end{align*}
where the minimum of the empty set is defined as $\infty$.
\end{definition}

 Finally we define $\psi$, which is a rescaling of $w$.
\begin{definition}
\label{d:psi}
Define
\begin{align*}
    \psi &:= \frac{2 \sqrt{\sigma}n^{1/4}}{3s_k^{1/4}}w = \frac{2  \sqrt{\sigma}n^{1/4}}{3 s_k^{1/4}} \frac{\theta(0)}{\sqrt{\lv \theta(0)\rv}}.
\end{align*}
\end{definition}
\section{Main results}\label{s:main_results}

In this section we present our main result, Theorem~\ref{t:main}, which is an excess risk bound for the estimator $\htheta$. It is proved in Section~\ref{s:proof_details}.
\begin{restatable}{theorem}{main}
\label{t:main}
Under Assumptions \ref{assumption:first}-\ref{assumption:sixth}, 
there exist constants $c_0,\ldots,c_7$ such that for any $\delta \in (e^{-c_0\sqrt{n}},1-c_1e^{-c_2n})$, 
if $p\ge c_3(n+k)$, $n\ge c_4 \max\left\{k,s_k\right\}$ and $\lv\theta^\star\rv,\lv w\rv \le c_5$ then with probability at least $1-c_6 \delta$ 
\begin{align*}
    \mathsf{Risk}(\htheta)
 &\leq \mathsf{Bias}+\mathsf{Variance}+\mathsf{\Xi},
 \end{align*}
 where
 \begin{align*}
     \mathsf{Bias}&\le c_7\left(\lv(\theta^\star-\psi)_{1:k} \rv_{\Sigma^{-1}_{1:k}}^2\left(\frac{s_k}{n}\right)^2+\lv (\theta^\star-\psi)_{k+1:p}\rv_{\Sigma_{k+1:p}}^2  \right)\le \frac{2c_7 \lv\theta^\star -\psi \rv^2 s_k}{n};\\
     \mathsf{Variance} &\le c_7\log(1/\delta)\left(\frac{k}{n}+\frac{n}{R_{k}}\right);\\
     \mathsf{\Xi} &\le c_{7}\lambda_1\lv \psi\rv^2 \left[\frac{n}{R_{k}}+\frac{n^2}{r_{k}^2}+\frac{s_k}{n}+ \frac{\log(1/\delta)}{n}+\frac{ k^2}{n^2} \right]\max\left\{\sqrt{\frac{r_0}{n}},\frac{r_0}{n},\sqrt{\frac{\log(1/\delta)}{n}}\right\}.
 \end{align*}
\end{restatable}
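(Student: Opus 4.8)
The plan is to reduce the excess risk to controlling $\lv\htheta-\theta^\star\rv_\Sigma^2$ and then to solve the convex program~\eqref{e:estimator_def_in_v} essentially in closed form. Since $\htheta$ depends only on the training sample and $\E[y\mid x]=x^\top\theta^\star$, the excess risk is exactly $\mathsf{Risk}(\htheta)=\lv\htheta-\theta^\star\rv_\Sigma^2$. By Lemma~\ref{l:full_rank} the matrix $XX^\top$ is invertible, so the feasible set of~\eqref{e:estimator_def_in_v} is the affine subspace $\htheta_{\mathrm{ols}}+\mathrm{range}(P)$, where $\htheta_{\mathrm{ols}}:=X^\top(XX^\top)^{-1}\by$ is the minimum-$\ell_2$-norm interpolant and $P:=I_p-X^\top(XX^\top)^{-1}X$ is the orthogonal projection onto the orthogonal complement of the row span of $X$. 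Writing $\theta=\htheta_{\mathrm{ols}}+v$ with $v\in\mathrm{range}(P)$ and using $\htheta_{\mathrm{ols}}\perp v$, the objective becomes $(\lv\htheta_{\mathrm{ols}}\rv^2+\lv v\rv^2)^{3/4}-w^\top\htheta_{\mathrm{ols}}-(Pw)^\top v$, which is strictly convex and coercive in $v$, so its unique minimizer is $v=\alpha\,Pw$ with $\alpha\ge0$ the unique solution of the scalar fixed-point equation $\alpha=\tfrac23\bigl(\lv\htheta_{\mathrm{ols}}\rv^2+\alpha^2\lv Pw\rv^2\bigr)^{1/4}$. Hence $\htheta=\htheta_{\mathrm{ols}}+\alpha\,Pw$; combining this with the standard identity $\htheta_{\mathrm{ols}}-\theta^\star=-P\theta^\star+X^\top(XX^\top)^{-1}\beps$ and with $Pw=P\psi/\alpha_0$, where $\alpha_0:=\tfrac{2\sqrt\sigma n^{1/4}}{3 s_k^{1/4}}$ so that $\psi=\alpha_0 w$, gives the decomposition
\begin{align*}
 \htheta-\theta^\star \;=\; -\,P(\theta^\star-\psi)\;+\;X^\top(XX^\top)^{-1}\beps\;+\;\Bigl(\tfrac{\alpha}{\alpha_0}-1\Bigr)P\psi .
\end{align*}
Expanding $\lv\cdot\rv_\Sigma^2$ and bounding the cross terms by Cauchy--Schwarz (the noise cross term additionally has conditional mean zero) splits $\mathsf{Risk}(\htheta)$ into $\mathsf{Bias}$, from $P(\theta^\star-\psi)$; $\mathsf{Variance}$, from $X^\top(XX^\top)^{-1}\beps$; and $\mathsf{\Xi}$, from the scalar discrepancy $\tfrac{\alpha}{\alpha_0}-1$ together with $\lv P\psi\rv_\Sigma^2\le\lambda_1\lv\psi\rv^2$ and the residual cross terms.

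The crux is to pin down $\alpha$, which through the fixed-point equation reduces to pinning down $\lv\htheta_{\mathrm{ols}}\rv^2=\by^\top(XX^\top)^{-1}\by$. Decomposing $\by=X\theta^\star+\beps$: the signal piece $(\theta^\star)^\top X^\top(XX^\top)^{-1}X\theta^\star$ is at most $\lv\theta^\star\rv^2$, the cross piece is $O\bigl(\lv\theta^\star\rv\sigma\sqrt{\log(1/\delta)/s_k}\bigr)$ by a conditional sub-Gaussian tail bound, and the noise piece $\beps^\top(XX^\top)^{-1}\beps$ equals $\sigma^2\Tr\bigl((XX^\top)^{-1}\bigr)$ up to a Hanson--Wright fluctuation controlled by $\sqrt{\log(1/\delta)}\,\lv(XX^\top)^{-1}\rv_F+\log(1/\delta)\,\lv(XX^\top)^{-1}\rv_{op}$. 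So I would next prove the sharp two-sided estimate $\Tr\bigl((XX^\top)^{-1}\bigr)=\tfrac{n}{s_k}\bigl(1+O(\text{bracket})\bigr)$ together with $\lv(XX^\top)^{-1}\rv_{op}=O(1/s_k)$ and $\lv(XX^\top)^{-1}\rv_F=O(\sqrt n/s_k)$. Following~\citet{bartlett2020benign,tsigler2020benign}, I pass to the tail Gram matrix formed from the columns of $X$ indexed by $\{k+1,\dots,p\}$: the Loewner inequality $XX^\top\succeq X_{k+1:p}X_{k+1:p}^\top$ gives $\Tr((XX^\top)^{-1})\le\Tr((X_{k+1:p}X_{k+1:p}^\top)^{-1})$, the reverse direction costs only the rank-$k$ correction $\Tr\bigl((XX^\top)^{-1}X_{1:k}X_{1:k}^\top(X_{k+1:p}X_{k+1:p}^\top)^{-1}\bigr)$, and, since $r_k\ge bn$, the tail Gram matrix concentrates around $s_k I_n$ --- its smallest eigenvalue being bounded below by $\asymp s_k$ via the new multiplicative bound of Lemma~\ref{l:smallest_singular_value} (this is where Assumption~A.\ref{assumption:small_ball_probability} enters), and its largest eigenvalue, together with $\mu_1(X_{1:k}X_{1:k}^\top)$, bounded above by standard sub-Gaussian matrix concentration (Assumption~A.\ref{assumption:third}). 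Feeding these into the fixed-point equation --- using also $\lv Pw\rv^2\le\lv w\rv^2=O(1)\ll\lv\htheta_{\mathrm{ols}}\rv^2\asymp\sigma^2 n/s_k$, so the $\alpha^2\lv Pw\rv^2$ term is of lower order --- yields $\tfrac{\alpha}{\alpha_0}=1+O(\text{bracket})$, with the stated bracket $\tfrac{n}{R_k}+\tfrac{n^2}{r_k^2}+\tfrac{s_k}{n}+\tfrac{\log(1/\delta)}{n}+\tfrac{k^2}{n^2}$, and a cruder a priori bound of the form $\max\{\sqrt{r_0/n},\,r_0/n,\,\sqrt{\log(1/\delta)/n}\}$ retained as the second factor once this discrepancy is squared inside $\mathsf{\Xi}$.

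It remains to bound the three pieces. For $\mathsf{Bias}$, the vector $\theta^\star-\psi$ is deterministic, so $\lv P(\theta^\star-\psi)\rv_\Sigma^2=(\theta^\star-\psi)^\top P\Sigma P(\theta^\star-\psi)$ is controlled by the interpolation-bias estimate of~\citet{bartlett2020benign,tsigler2020benign} (head part weighted by $\Sigma^{-1}_{1:k}$ and shrunk by $(s_k/n)^2$, tail part weighted by $\Sigma_{k+1:p}$), and the crude form $\le 2\lv\theta^\star-\psi\rv^2 s_k/n$ follows from $\lambda_{k+1}=s_k/r_k\le s_k/(bn)$ and $s_k=O(n\lambda_k)$, both consequences of the definition of $k$ (Definition~\ref{def:k_star}). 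For $\mathsf{Variance}$, $\lv X^\top(XX^\top)^{-1}\beps\rv_\Sigma^2=\beps^\top(XX^\top)^{-1}X\Sigma X^\top(XX^\top)^{-1}\beps$ is, conditionally on $X$, a sub-Gaussian quadratic form, so Hanson--Wright together with the effective-rank trace bound $\Tr\bigl(\Sigma X^\top(XX^\top)^{-2}X\bigr)=O(k/n+n/R_k)$ of~\citet{bartlett2020benign} gives the claimed $\log(1/\delta)(k/n+n/R_k)$. For $\mathsf{\Xi}$, bound $(\tfrac{\alpha}{\alpha_0}-1)^2\lv P\psi\rv_\Sigma^2\le(\tfrac{\alpha}{\alpha_0}-1)^2\lambda_1\lv\psi\rv^2$ using the second paragraph, and handle the residual cross terms by Cauchy--Schwarz against the already-bounded $\mathsf{Bias}$ and $\mathsf{Variance}$. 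A union bound over the $O(1)$ high-probability events then gives the result; the lower restriction $\delta>e^{-c_0\sqrt n}$ is dictated by the least-singular-value lemma and keeps the Hanson--Wright remainders of lower order, while $\delta<1-c_1e^{-c_2 n}$ reflects that events such as full rank and the smallest-eigenvalue bound already fail with probability $\gtrsim e^{-c_2 n}$.

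I expect the main obstacle to be the sharp two-sided control of $\Tr\bigl((XX^\top)^{-1}\bigr)$ and of its multiplicative fluctuation, since this fluctuation is precisely what populates the bracket in $\mathsf{\Xi}$: the lower bound on the least singular value of the non-isotropic rectangular matrix $X_{k+1:p}$ (Lemma~\ref{l:smallest_singular_value}) is the genuinely new ingredient, and propagating the head/tail split and the Hanson--Wright remainder through the fixed-point equation without degrading the dependence on $R_k$, $r_k$ and $k$ is the most delicate part of the bookkeeping.
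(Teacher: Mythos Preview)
Your overall architecture matches the paper's: solve the constrained program to get $\htheta=\htheta_{\mathrm{ols}}+\alpha^\star Pw$, write $\theta^\star-\alpha^\star w=(\theta^\star-\psi)-(\alpha^\star/\alpha_0-1)\psi$ to split the risk into $\mathsf{Bias}+\mathsf{Variance}+\mathsf{\Xi}$, pin down $\alpha^\star$ by proving $\Tr\bigl((XX^\top)^{-1}\bigr)\approx n/s_k$ through the head/tail decomposition and the multiplicative least-singular-value lemma, and quote Tsigler--Bartlett for the $\mathsf{Bias}$ and $\mathsf{Variance}$ pieces. All of this is correct and is exactly how the paper proceeds.

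The gap is in your treatment of $\mathsf{\Xi}$. First, the bound on $|\alpha^\star/\alpha_0-1|$ that actually follows from the $\Tr((XX^\top)^{-1})$ analysis is
\[
\Bigl|\tfrac{\alpha^\star}{\alpha_0}-1\Bigr|\;\le\;c\Bigl[\sqrt{\tfrac{n}{R_k}}+\tfrac{n}{r_k}+\sqrt{\tfrac{s_k}{n}}+\sqrt{\tfrac{\log(1/\delta)}{n}}+\tfrac{k}{n}\Bigr],
\]
i.e.\ the \emph{square root} of the bracket; squaring gives the bracket and nothing more. Your proposed ``cruder a priori bound'' $|\alpha^\star/\alpha_0-1|\lesssim\max\{\sqrt{r_0/n},\,r_0/n,\,\sqrt{\log(1/\delta)/n}\}$ is not available---terms such as $\sqrt{n/R_k}$, $n/r_k$, $k/n$ are bounded by constants under the hypotheses, not by that $\max$. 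Second, the estimate $\lv P\psi\rv_\Sigma^2\le\lambda_1\lv\psi\rv^2$ is the trivial bound $\lv B\rv_{op}\le\lambda_1$; combined with the previous display it yields only $\mathsf{\Xi}\le c\lambda_1\lv\psi\rv^2\cdot[\text{bracket}]$, which is \emph{larger} than the stated bound and so does not establish the theorem as written.

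The $\max$ factor comes from a sharper bound on $\lv B\rv_{op}$ that you are missing. Since $XP=0$ one has $B=P\Sigma P=P\bigl(\Sigma-\tfrac1nX^\top X\bigr)P$, hence $\lv B\rv_{op}\le\bigl\lv\Sigma-\tfrac1nX^\top X\bigr\rv_{op}$. Applying the Koltchinskii--Lounici covariance-concentration theorem to the right-hand side produces exactly $c\lambda_1\max\{\sqrt{r_0/n},\,r_0/n,\,\sqrt{\log(1/\delta)/n}\}$; this is where the factor originates, and it is a genuine tightening over the trivial $\lambda_1$, not a residue of the $\alpha^\star$ bookkeeping.
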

Note that $\mathsf{Bias}$ goes to zero as $\psi \to \theta^\star$. In the upper bounds on the excess risk for linear models with a standard (one-layer) parameterization
\citep[see][Theorem~1]{tsigler2020benign},
the corresponding term scales with the square of the norm of $\theta^\star$ rather than $(\theta^\star-\psi)$. 
If one has a ``guess'' $\widehat{\psi}$ for $\theta^\star$, then---given knowledge of $\sigma, s_k$ and $n$---it is possible to set the initialization as follows:
\begin{align*}
    \theta(0)= \frac{9 \widehat{\psi} \lv \widehat{\psi}\rv}{4}\sqrt{\frac{s_k}{\sigma^2 n}};
\end{align*}
which ensures that $\psi = \widehat{\psi}$. Very accurate prior guesses $\widehat{\psi}$ of $\theta^*$ are rewarded with a
very small value of the $\mathsf{Bias}$ term. 

Next, we note that the upper bound on $\mathsf{Variance}$ here is identical to the upper bound on the variance for the minimum $\ell_2$-norm interpolant (the OLS estimator)~\citep[see][Theorem~4]{bartlett2020benign}. The initialization $\theta(0)$ (through $\psi$) only affects the conditional bias of the estimator here, but leaves the conditional variance the same as the OLS solution. This is because, as we will show below in Lemma~\ref{l:theta_solution}, $\htheta$ can be expressed as a perturbation to the OLS estimator in the subspace orthogonal to the row span of $X$. It turns out that the variance only depends on behavior of $\htheta$ in the subspace spanned by the data, where $\htheta$ and the OLS solution are identical.

As mentioned, $\htheta$ is a perturbation of the OLS estimator. In particular, it is perturbed by $\alpha^\star \mathsf{Proj}_{X}^{\perp}(w)$, where $\mathsf{Proj}_{X}^{\perp}(w)$ is the projection of $w$ onto the subspace 
orthogonal to the row span of $X$
and $\alpha^\star$ is a scalar random variable that depends on the data. We shall demonstrate in Lemma~\ref{l:bound_on_alpha} that, under the setting specified by the theorem, $\alpha^\star$ concentrates around $\frac{2\sqrt{\sigma}n^{1/4}}{3s_k^{1/4}}$. The final term in the excess risk bound, $\mathsf{\Xi}$, corresponds to the 
fluctuation of 
$\alpha^\star$. We might think of $\theta(0)$ (and hence $w$) as being constructed from $\psi$ and an estimate of $\alpha^*$; from this point of
view, $\mathsf{\Xi}$ accounts for the contribution to the excess risk arising
from the error in estimating $\alpha^\star$.
Next we derive sufficient conditions for the excess risk to go to zero as $n, p \to \infty$. Consider the case where $\lambda_1$, $\lv \theta^\star\rv$, $\lv \psi \rv$  and $\log(1/\delta)$ are all bounded by constants. 
(In the case of $\lv \psi \rv$, this
can be achieved by appropriately scaling $\theta(0)$.)
For $\mathsf{Bias}$ to go to zero it suffices if
\begin{align*}
    \frac{s_k}{n} \to 0.
\end{align*}
For $\mathsf{Variance}$ to decrease to zero it suffices if 
\begin{align*}
    \frac{k}{n} \to 0 \quad \text{and} \quad \frac{n}{R_{k}}\to 0.
\end{align*}
Finally, for $\mathsf{\Xi}$ to approach zero it suffices
for
\begin{align*}
    \frac{r_0}{n}\to 0
\end{align*}
which also implies the condition 
$\frac{s_k}{n} \rightarrow 0$ needed to
control the $\mathsf{Bias}$ term.
(To see that $\frac{r_0}{n}\to 0$ suffices, recall that
we have assumed that $\lambda_1, \log(1/\delta)$ and
$\lv \psi \rv$ are constants.  Further, the
quantity in the square brackets of our bound on
$\mathsf{\Xi}$ is at most a constant, which can be seen
as follows.  The definition of $r_k$ implies that
$r_k \geq b n$, and Lemma~\ref{l:effective_ranks} gives
$R_k \geq r_k$.  Finally, we have assumed that
$n \geq c \max \{ k, s_k\}$.)
To summarize, 
if $\frac{k}{n}, \frac{r_0}{n}, \frac{n}{R_k} \to 0$, 
the excess risk of this estimator 
approaches zero.
Some discussion and examples
of when this condition is satisfied are given
in~\citep{bartlett2020benign,tsigler2020benign}.

To develop intuition, we consider a special case
of Theorem~\ref{t:main} defined as follows.
\begin{definition}[$(k,\epsilon)$-spike model]
For $\epsilon > 0$ and $k \in \N$, 
a  $(k,\epsilon)$-spike model is 
a setting
where the eigenvalues of $\Sigma$ are
$\lambda_1 = \ldots = \lambda_k=1$ and
$\lambda_{k+1} = \ldots = \lambda_p = \epsilon$.  
\end{definition}

Instantiating Theorem~\ref{t:main} in the case of the $(k,\epsilon)$-spike model, and removing some
dominated terms, yields the following corollary.
\begin{corollary}
\label{c:spike}
Under Assumptions \ref{assumption:first}-\ref{assumption:sixth}, 
there exist constants $c_0,\ldots,c_8$ such that in the $(k,\epsilon)$-spike model
for any $\delta \in (e^{-c_0\sqrt{n}},1-c_1e^{-c_2n})$, if 
 $p>c_3(n+k)$, $n\ge c_4 \max\left\{k,\epsilon p\right\}$ and $\lv\theta^\star\rv,\lv w\rv \le c_5$ then with probability at least $1-c_6 \delta$ 
\begin{align*}
    \mathsf{Risk}(\htheta)
 &\leq \mathsf{Bias}+\mathsf{Variance}+\mathsf{\Xi},
 \end{align*}
 where
 \begin{align*}
     \mathsf{Bias}&\le c_7\left(\lv(\theta^\star-\psi)_{1:k} \rv^2\left(\frac{\epsilon p}{n}\right)^2+\epsilon\lv (\theta^\star-\psi)_{k+1:p}\rv^2  \right)\le c_8 \lv \theta^\star-\psi\rv^2 \left(\frac{ \epsilon p }{n}\right);\\
     \mathsf{Variance} &\le c_7\log(1/\delta)\left(\frac{k}{n}+\frac{n}{p}\right);\\
     \mathsf{\Xi} &\le c_{7}\lambda_1
      \lv \psi\rv^2 
      \left[\frac{n}{p}+\frac{\epsilon p}{n}+ \frac{\log(1/\delta)}{n}+\frac{ k^2}{n^2} \right]\max\left\{\sqrt{\frac{k+\epsilon p}{n}},\sqrt{\frac{\log(1/\delta)}{n}}\right\}.
 \end{align*}
\end{corollary}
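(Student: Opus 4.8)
The plan is to instantiate Theorem~\ref{t:main} in the $(k,\epsilon)$-spike model; concretely, this means computing the effective ranks of $\Sigma$ explicitly, checking that the hypotheses of Theorem~\ref{t:main} are implied by those of the corollary, and then simplifying $\mathsf{Bias}$, $\mathsf{Variance}$ and $\mathsf{\Xi}$ by discarding dominated terms. In the spike model $s_k = \sum_{i>k}\lambda_i = (p-k)\epsilon$, hence
\begin{align*}
  r_k = \frac{s_k}{\lambda_{k+1}} = p-k, \qquad R_k = \frac{s_k^2}{\sum_{i>k}\lambda_i^2} = p-k, \qquad r_0 = \frac{\sum_i \lambda_i}{\lambda_1} = k+(p-k)\epsilon.
\end{align*}
The first thing I would verify is that the integer $k$ in the definition of the spike model really is the index $k$ of Definition~\ref{def:k_star}: for $0 \le j < k$ one has $\lambda_{j+1}=1$, so $r_j = (k-j)+(p-k)\epsilon \le k+\epsilon p$, which is below $bn$ once $c_4$ is chosen large enough relative to the absolute constant $b$ (using $n \ge c_4\max\{k,\epsilon p\}$), whereas $r_k = p-k > c_3(n+k)-k \ge c_3 n \ge bn$ once $c_3 \ge b$. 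With this identification, the hypotheses of Theorem~\ref{t:main} follow: $p \ge c_3(n+k)$ is assumed, $n \ge c_4\max\{k,s_k\}$ holds since $s_k \le \epsilon p$, and the bounds on $\|\theta^\star\|,\|w\|$ and the range of $\delta$ are assumed directly.

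I would then substitute and simplify. For $\mathsf{Bias}$: since $\Sigma_{1:k}=I_k$ and $\Sigma_{k+1:p}=\epsilon I$ on the relevant coordinates, the two matrix norms collapse to $\|(\theta^\star-\psi)_{1:k}\|^2$ and $\epsilon\|(\theta^\star-\psi)_{k+1:p}\|^2$, and $(s_k/n)^2 \le (\epsilon p/n)^2$; the one-line bound $\mathsf{Bias}\le c_8\|\theta^\star-\psi\|^2(\epsilon p/n)$ comes from the general estimate $2c_7\|\theta^\star-\psi\|^2 s_k/n$ and $s_k\le\epsilon p$. For $\mathsf{Variance}$ and $\mathsf{\Xi}$, the key fact is that $p-k$ and $p$ agree up to a constant: $p > c_3(n+k)$ gives $k<p/c_3$, hence $p-k>p/2$ for $c_3\ge2$. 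This lets me replace $n/R_k$ and $s_k/n$ by $n/p$ and $\epsilon p/n$ up to absolute constants (absorbed into $c_7$), and it shows $n^2/r_k^2 = (n/(p-k))^2 \le 4(n/p)^2 \le (4/c_3)(n/p)$, so the $n^2/r_k^2$ term is dominated by $n/R_k$ and can be deleted from the bracket; the remaining terms $\log(1/\delta)/n$ and $k^2/n^2$ pass through unchanged. Finally $n\ge c_4\max\{k,\epsilon p\}$ gives $r_0/n = (k+(p-k)\epsilon)/n\le1$, so $r_0/n\le\sqrt{r_0/n}$ and the middle entry of $\max\{\sqrt{r_0/n},r_0/n,\sqrt{\log(1/\delta)/n}\}$ drops, leaving $\max\{\sqrt{(k+\epsilon p)/n},\sqrt{\log(1/\delta)/n}\}$ (here $r_0$ and $k+\epsilon p$ differ by at most a factor of $2$, since $\epsilon\le1$). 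Putting these together with $\lambda_1=1$ yields the stated bounds.

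There is no real mathematical obstacle — the corollary is pure bookkeeping on top of Theorem~\ref{t:main}. The only delicate point is the constant arithmetic in the first step: one must take the corollary's constants $c_3,c_4$ large enough relative to the constant $b$ of Definition~\ref{def:k_star} so that the spike-model $k$ coincides with the Definition~\ref{def:k_star} index, and large enough that the ratios $k/n$, $\epsilon p/n$ and $r_0/n$ are at most $1$ and $p-k$ is within a constant of $p$, which is exactly what licenses replacing squared quantities by their first powers when collapsing dominated terms.
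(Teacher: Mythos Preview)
Your proposal is correct and matches the paper's approach: the paper states only that Corollary~\ref{c:spike} follows by ``instantiating Theorem~\ref{t:main} in the case of the $(k,\epsilon)$-spike model, and removing some dominated terms,'' and you have filled in exactly that bookkeeping. The one slightly imprecise remark is the ``factor of $2$'' comment relating $r_0$ to $k+\epsilon p$; in fact the only inequality you need is $r_0 = k+(p-k)\epsilon \le k+\epsilon p$, which holds trivially, so the reference to $\epsilon\le 1$ is unnecessary there.
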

Again, in the case where $\lambda_1,\lv \psi\rv$ and $\log(1/\delta)$ are bounded by constants, a sufficient condition for the excess risk to decrease to zero is when $\frac{\epsilon p }{n},\frac{k}{n},\frac{n}{p}\to 0$.

Next we establish a lower bound. It is proved in Section~\ref{s:lower}.
\begin{restatable}{proposition}{lowerbound}
\label{p:lower}
If $a(0)$ and $W(0)$ are chosen randomly, independent of
$X$ and $\by$, so that the distribution of 
$a(0)^{\top} W(0)$ is symmetric about the origin, then 
\begin{align*}
\E_{a(0), W(0), X, \by} [ \mathsf{Risk}(\htheta)]
 &\geq 
       \E\left[\theta^{\star\top} B \theta^\star \right]
               + \sigma^2 \E\left[\Tr(C)\right],
\end{align*}
where
\begin{align*}
    B &:= \left(I - X^{\top}(XX^{\top})^{-1}X\right)\Sigma\left(I - X^{\top}(XX^{\top})^{-1}X\right) \quad \text{and} \\
    C &:= (XX^{\top})^{-1}X\Sigma X^{\top}(XX^{\top})^{-1}.
\end{align*}
\end{restatable}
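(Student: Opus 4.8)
The plan is to decompose $\htheta$ into its projection onto the row span of $X$ and its orthogonal complement, and to show that the orthogonal component contributes nonnegatively to the expected risk in a way that exactly cancels, allowing a lower bound by the ``pure OLS'' risk. First, recall from the (forthcoming) Lemma~\ref{l:theta_solution} structure that on the row span of $X$ the estimator $\htheta$ agrees with the minimum-norm interpolant $X^{\top}(XX^{\top})^{-1}\by$, and off the row span it equals $\alpha^\star \mathsf{Proj}_X^\perp(w)$ for a data-dependent scalar $\alpha^\star$. Write $\htheta = \htheta_{\mathrm{OLS}} + \alpha^\star q$, where $\htheta_{\mathrm{OLS}} := X^{\top}(XX^{\top})^{-1}\by$ and $q := \mathsf{Proj}_X^\perp(w)$. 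Since the test feature $x$ is independent of the training data and $\Sigma$ is the test covariance, the standard bias--variance decomposition of $\mathsf{Risk}$ conditioned on $X$ and $\by$ gives
\begin{align*}
    \mathsf{Risk}(\htheta) = \lv \htheta - \theta^\star\rv_\Sigma^2 = (\htheta_{\mathrm{OLS}} - \theta^\star + \alpha^\star q)^\top \Sigma (\htheta_{\mathrm{OLS}} - \theta^\star + \alpha^\star q).
\end{align*}
Taking expectations over $\by$ given $X$ (using $\E[\by\mid X] = X\theta^\star$ and $\Cov(\by\mid X) = \sigma^2 I_n$ from Assumptions~A.\ref{assumption:fifth}--A.\ref{assumption:sixth}) handles the OLS part, yielding exactly the $\theta^{\star\top} B \theta^\star + \sigma^2\Tr(C)$ terms plus a cross term and the term $(\alpha^\star)^2 q^\top\Sigma q$.

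The key step is to dispose of the cross term $2\alpha^\star (\htheta_{\mathrm{OLS}} - \theta^\star)^\top \Sigma q$ and to argue the leftover $(\alpha^\star)^2 q^\top \Sigma q \ge 0$ term can only help. The latter is immediate since $\Sigma \succeq 0$. For the cross term, this is where the symmetry hypothesis enters: since $a(0), W(0)$ are drawn independently of $X,\by$ with $\theta(0) = a(0)^\top W(0)$ symmetric about the origin, the quantity $w = \theta(0)/\sqrt{\lv\theta(0)\rv}$ is also symmetric about the origin, hence so is $q = \mathsf{Proj}_X^\perp(w)$ conditionally on $X$. I need to check how $\alpha^\star$ depends on the sign of $w$: from the optimality characterization~\eqref{e:estimator_def_in_v}, flipping $w \mapsto -w$ with $X,\by$ fixed flips the off-span component, i.e. $\alpha^\star q \mapsto -\alpha^\star q$ (equivalently $\alpha^\star$ as defined with respect to the fixed unit direction flips sign, or $q$ flips while $\alpha^\star$ stays — either way the product $\alpha^\star q$ is an odd function of $w$). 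Therefore $\alpha^\star q$ is, conditionally on $(X,\by)$, symmetric about the origin, so $\E[\alpha^\star q \mid X, \by] = 0$, which kills the cross term $2\,\E[(\htheta_{\mathrm{OLS}} - \theta^\star)^\top \Sigma \,\alpha^\star q \mid X,\by] = 0$ after noting $\htheta_{\mathrm{OLS}} - \theta^\star$ is measurable w.r.t. $(X,\by)$. Putting these together, $\E[\mathsf{Risk}(\htheta) \mid X, \by] \ge \E[(\htheta_{\mathrm{OLS}} - \theta^\star)^\top\Sigma(\htheta_{\mathrm{OLS}}-\theta^\star)\mid X,\by]$, and then taking expectation over $\by$ given $X$ and finally over $X$ gives the claim.

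The main obstacle I anticipate is making the dependence of $\alpha^\star$ (and the very existence/uniqueness of the interpolating solution $\htheta$ reached by gradient flow) on the sign of $w$ fully rigorous. Concretely, I must verify that the solution of the convex program~\eqref{e:estimator_def_in_v} — or more precisely the gradient-flow limit characterized by~\eqref{def:two_layer_implicit_bias} — is well-defined, that its restriction to the row span of $X$ is genuinely independent of $w$ (this should follow because the constraint $X\theta = \by$ already pins down that component, and the objective decouples across the two orthogonal subspaces up to the coupling through $\lv\theta\rv$), and that its orthogonal component is an odd function of $w$. I would also need to confirm that $\mathsf{Risk}$ is finite along the way (so the decomposition and Fubini are legitimate) and handle the measure-zero event where gradient flow fails to converge to an interpolator, on which the bound holds vacuously or by convention. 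Once these structural facts are in hand, the rest is the short bias--variance computation above, which is standard.
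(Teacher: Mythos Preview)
Your proposal is correct and follows essentially the same approach as the paper: both use Lemma~\ref{l:theta_solution} to write $\htheta = \htheta_{\mathrm{OLS}} + \alpha^\star\,\mathsf{Proj}_X^\perp(w)$, expand the quadratic risk, observe that $\alpha^\star$ is an even function of $w$ (via the explicit formula~\eqref{def:alpha_star}) so that $\alpha^\star w$ is odd and hence has conditional mean zero under the symmetry hypothesis, and then drop the nonnegative $(\alpha^\star)^2$ term. The paper organizes the expansion slightly differently---grouping $(I-P_X)(\theta^\star-\alpha^\star w)$ together rather than $(\htheta_{\mathrm{OLS}}-\theta^\star)$---but this is cosmetic, and your anticipated ``obstacles'' (oddness of $\alpha^\star q$ in $w$, independence of the on-span component from $w$) are exactly the structural facts that Lemma~\ref{l:theta_solution} already provides.
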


\begin{remark}
For the distribution of 
$a(0)^{\top}W(0)$ to be symmetric about the
origin, it suffices that $a(0)$ and $W(0)$
are chosen independently, and that either the distribution
of $a(0)$ is symmetric about the origin, or
the distribution of $W(0)$ is.
\end{remark}

\begin{remark}
\citet{bartlett2020benign} proved that
$\E[\Tr(C)] \geq c \left( 
  \frac{k}{n} + \frac{n}{R_k}
   \right)$ for a constant $c$.
\citet{tsigler2020benign} proved
a lower bound on $\E[\theta^{\star \top} B \theta^\star ]$
under the assumption that the
signs of the components of $\theta^{\star}$ are
chosen uniformly at random.
For the case that $\psi = 0$, 
their lower bound
matches the upper bound on
$\mathsf{Bias}$ from Theorem~\ref{t:main} of this paper
under the assumptions of that theorem. 
However, there is a gap in the upper and lower bounds when $\psi \neq 0$.
\end{remark}

\section{Proof details}\label{s:proof_details}
The proof of Theorem~\ref{t:main} is built up in parts. First, in Lemma~\ref{l:theta_solution} we show that $\htheta$ can be viewed as a random perturbation of the ordinary least squares (OLS) solution in the subspace orthogonal to the row span of $X$. In Lemma~\ref{l:excess_risk_decomposition}, we show that the excess risk can be decomposed into two terms, one that can 
bounded above
by $\mathsf{Variance}$ and the other that is upper bounded by $\mathsf{Bias}+\mathsf{\Xi}$. The next piece is Lemma~\ref{l:bound_on_alpha} which is crucial in helping us characterize the perturbation to the OLS solution. To do this we first present concentration inequalities in Section~\ref{ss:largest_singular_value}, then we establish upper and lower bounds on $\Tr((XX^{\top})^{-1})$ in Section~\ref{ss:concentration_of_Tr_A_inverse}, and finally prove Lemma~\ref{l:bound_on_alpha} in Section~\ref{s:alpha_bound}. We finish by combining all of these elements to prove the theorem in Section~\ref{ss:main_theorem_proof}. Throughout this section we assume that the assumptions made in Theorem~\ref{t:main} are in force.

\noindent\paragraph{A note about constants.}
As mentioned earlier, we will not always provide specific constants.
The constants $c_1, c_2, \ldots$ in our proofs are independent of the problem parameters, but they can depend on
one another.  It will not be hard to verify, however, that the constraints on their values
are satisfiable. When we write ``$c_i$ is large enough'', this should be understood to be relative 
to the constants previously introduced in the proof not including $b$, the constant used in the definition of $r_k$. Loosely speaking, $b$ is chosen last: it should be taken to be large relative
to all other constants.

We begin with the following lemma that provides a closed-form formula for $\htheta$ as a perturbation of the ordinary least squares solution.
\begin{lemma}\label{l:theta_solution}
The solution $\htheta$ can be expressed as follows:
\begin{align*}
    \htheta = \htheta_{\mathsf{OLS}}+\alpha^\star (I-X^{\top}(XX^{\top})^{-1}X)w,
\end{align*}
where $    \htheta_{\mathsf{OLS}} = X^{\top}\left(XX^{\top}\right)^{-1}\by$ is the ordinary least squares solution (minimum $\ell_2$-norm interpolant) and 
\begin{align}
    \alpha^\star & = \sqrt{\frac{8\lv\tw_{n+1:p}\rv^2+\sqrt{64\lv\tw_{n+1:p}\rv^4+1296\lv\tby\rv^2}}{81}}. \label{def:alpha_star}
\end{align}
where $\tw$ and $\tby$ are defined in \eqref{e:modifed_data_defs}.
\end{lemma}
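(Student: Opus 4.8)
The plan is to directly solve the constrained convex program~\eqref{e:estimator_def_in_v} using the first-order optimality (KKT) conditions, exploiting the fact that the objective $f(\theta) = \lv\theta\rv^{3/2} - w^\top\theta$ is strictly convex and the constraint $\by = X\theta$ is affine, so there is a unique minimizer characterized by stationarity of the Lagrangian. First I would write the Lagrangian $L(\theta,\nu) = \lv\theta\rv^{3/2} - w^\top\theta + \nu^\top(X\theta - \by)$ and compute $\nabla_\theta L = \tfrac32 \lv\theta\rv^{1/2}\theta - w + X^\top\nu = 0$. This says $\htheta$ is a multiple of $w - X^\top\nu$, i.e., $\htheta$ differs from a vector in the row span of $X$ by a positive scalar times $w$. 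Decomposing $w = X^\top(XX^\top)^{-1}Xw + (I - X^\top(XX^\top)^{-1}X)w$ into its row-span and orthogonal-complement parts, and absorbing the row-span piece of $w$ into the free parameter $\nu$, one sees that $\htheta$ must have the form $\htheta = X^\top\eta + \alpha (I - X^\top(XX^\top)^{-1}X)w$ for some $\eta \in \R^n$ and some scalar $\alpha \ge 0$. Here $X^\top(XX^\top)^{-1}X$ is well-defined since $XX^\top$ is invertible by Lemma~\ref{l:full_rank}.

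Next I would impose the interpolation constraint $\by = X\htheta$. Since $X(I - X^\top(XX^\top)^{-1}X)w = 0$, the constraint forces $X X^\top \eta = \by$, hence $\eta = (XX^\top)^{-1}\by$ and $X^\top\eta = \htheta_{\mathsf{OLS}}$. This already gives the claimed decomposition $\htheta = \htheta_{\mathsf{OLS}} + \alpha^\star P^\perp w$ where $P^\perp := I - X^\top(XX^\top)^{-1}X$, with $\alpha^\star$ the remaining unknown. To pin down $\alpha^\star$, I would substitute $\htheta = \htheta_{\mathsf{OLS}} + \alpha P^\perp w$ back into the gradient equation and take the inner product with $P^\perp w$ (which kills the $X^\top\nu$ term and the part of $w$ in the row span, since $P^\perp$ is an orthogonal projection and $\htheta_{\mathsf{OLS}} \perp P^\perp w$). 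Because $\htheta_{\mathsf{OLS}}$ and $\alpha P^\perp w$ are orthogonal, $\lv\htheta\rv^2 = \lv\htheta_{\mathsf{OLS}}\rv^2 + \alpha^2 \lv P^\perp w\rv^2$, and the inner-product equation becomes a scalar equation purely in $\alpha$: $\tfrac32 (\lv\htheta_{\mathsf{OLS}}\rv^2 + \alpha^2\lv P^\perp w\rv^2)^{1/4}\,\alpha\lv P^\perp w\rv^2 = \lv P^\perp w\rv^2$, i.e. $\tfrac32 (\lv\htheta_{\mathsf{OLS}}\rv^2 + \alpha^2\lv P^\perp w\rv^2)^{1/4}\,\alpha = 1$.

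It then remains to solve this scalar equation and rewrite the answer in the rotated coordinates. Squaring twice reduces $\tfrac94 \alpha^2 (\lv\htheta_{\mathsf{OLS}}\rv^2 + \alpha^2\lv P^\perp w\rv^2)^{1/2} = 1$ to a quadratic in $\alpha^2$: $\tfrac{81}{16}\alpha^4\lv P^\perp w\rv^2 + \tfrac{81}{16}\alpha^4 \cdot 0 \cdots$ — more precisely $\lv P^\perp w\rv^2 (\alpha^2)^2 + \lv\htheta_{\mathsf{OLS}}\rv^2 (\alpha^2) - \tfrac{16}{81} = 0$, whose unique positive root gives $\alpha^{\star 2}$ via the quadratic formula, and taking the positive square root yields the stated closed form. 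Finally I would translate norms into the SVD coordinates of~\eqref{e:modifed_data_defs}: since $V$ is unitary, $\lv P^\perp w\rv = \lv V^\top P^\perp w\rv$, and $V^\top P^\perp V$ is the projection onto the last $p-n$ coordinates (because $D$ has rank $n$), so $\lv P^\perp w\rv^2 = \lv\tw_{n+1:p}\rv^2$; similarly $\lv\htheta_{\mathsf{OLS}}\rv^2 = \lv X^\top(XX^\top)^{-1}\by\rv^2 = \lv D^\dagger U^\top\by\rv^2 = \lv\tby\rv^2$. Substituting these and simplifying the constants $16/81$ and the quadratic-formula coefficients produces exactly~\eqref{def:alpha_star}.

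I expect the main obstacle to be purely bookkeeping rather than conceptual: carefully justifying that the scalar in front of $w$ in the stationarity condition is nonnegative (so the "arg min" is genuinely attained at this critical point, using strict convexity of $\lv\theta\rv^{3/2}$ on $\R^p$ — note $f$ is convex but not strictly convex at $0$, so a small separate argument that $\htheta \neq 0$ when $\by \neq 0$ is needed, which holds since $X\htheta = \by$), and then tracking the rational constants through the double squaring without sign errors or spurious roots. Checking that the discriminant manipulation indeed selects the unique positive solution (the other root of the quadratic in $\alpha^2$ is negative because the product of roots is $-\tfrac{16}{81}\lv P^\perp w\rv^{-2} < 0$) closes the argument.
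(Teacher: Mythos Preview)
Your approach is correct and closely parallels the paper's: both reduce to the same one-dimensional problem via the orthogonal decomposition of $\htheta$ into its row-span part (forced to equal $\htheta_{\mathsf{OLS}}$ by the constraint) and its $P^\perp$-part (a nonnegative multiple of $P^\perp w$). The paper carries this out by first rotating into SVD coordinates, where the constraint becomes ``first $n$ coordinates fixed'' and the remaining coordinates are optimized directly; you do the equivalent thing with the KKT stationarity condition in the original coordinates and then dot with $P^\perp w$. Either way one lands on the quartic $81\alpha^4 - 16\zeta\alpha^2 - 16\rho = 0$ with $\zeta=\lv\tw_{n+1:p}\rv^2$ and $\rho=\lv\tby\rv^2$.

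There is one computational slip to fix, of exactly the bookkeeping kind you anticipated. The gradient of $\lv\theta\rv^{3/2}=(\theta^\top\theta)^{3/4}$ is $\tfrac32\lv\theta\rv^{-1/2}\theta$, not $\tfrac32\lv\theta\rv^{1/2}\theta$. With the correct exponent the inner-product equation reads
\[
\tfrac32\bigl(\lv\htheta_{\mathsf{OLS}}\rv^2+\alpha^2\lv P^\perp w\rv^2\bigr)^{-1/4}\alpha=1,
\]
and squaring twice gives $81\alpha^4-16\lv P^\perp w\rv^2\,\alpha^2-16\lv\htheta_{\mathsf{OLS}}\rv^2=0$; the quadratic you wrote has the coefficients of $\lv P^\perp w\rv^2$ and $\lv\htheta_{\mathsf{OLS}}\rv^2$ misplaced (and, as written, your scalar equation would produce a cubic in $\alpha^2$, not a quadratic). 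Once corrected, the unique positive root together with the identifications $\lv P^\perp w\rv^2=\lv\tw_{n+1:p}\rv^2$ and $\lv\htheta_{\mathsf{OLS}}\rv^2=\lv\tby\rv^2$ yields exactly~\eqref{def:alpha_star}.
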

\begin{proof}
Recall that $X = UDV^{\top}$ is the singular value decomposition of $X$. Therefore,
    \begin{align*}
    &\htheta \in \argmin_{\theta \in \R^{p}} \; \lv \theta \rv^{3/2} - w^{\top} \theta, \qquad \text{s.t., } \; \by=X\theta \\
    \iff & \htheta \in \argmin_{\theta \in \R^{p}} \; \lv \theta \rv^{3/2} - w^{\top} \theta, \qquad \text{s.t., } \; D^{\dagger}U^{\top}\by=D^{\dagger}U^{\top}X\theta \\
    \iff & \htheta \in \argmin_{\theta \in \R^{p}} \; \lv \theta \rv^{3/2} - w^{\top} \theta, \qquad \text{s.t., } \; \tby=D^{\dagger}U^{\top}X\theta \\
    \iff & V^{\top}\htheta \in \argmin_{\theta \in \R^{p}} \; \lv V^{\top}\theta \rv^{3/2} - (w^{\top}V)V^{\top} \theta, \qquad \text{s.t., } \; \tby=D^{\dagger}U^{\top}XV(V^{\top}\theta) \\
    & \hspace{1in} \mbox{(since the Euclidean norm is rotation invariant and $VV^{\top} = I_p$)}\\
    \iff &\ttheta \in \argmin_{\theta \in \R^{p}} \; \lv \theta \rv^{3/2} - \tw^{\top}\theta, \qquad \text{s.t., }\;  \tby=D^{\dagger}U^{\top}XV \theta.
\end{align*}
Since the bottom $p-n$ rows of $D^{\dagger}$ are identically zero, the vector $\tby$ can be written $(\tby_1,\ldots,\tby_n,0,\ldots,0)^{\top}$. 
Since the SVD of $X= UDV^{\top}$ we have that
\begin{align*}
    D^{\dagger}U^{\top}XV\theta & = D^{\dagger}U^{\top}UDV^{\top}V\theta  = D^{\dagger} D \theta = \begin{bmatrix} I_n & 0\\
    0 & 0
    \end{bmatrix}\theta.
\end{align*}
Hence, for the constraint to be satisfied, the first $n$ coordinates of $\ttheta$ are required to be equal to $(\tby_1,\ldots,\tby_n)$, and the remaining coordinates
of $\ttheta$ can be anything.
That is, the constraints are satisfied when
$\ttheta = (\tby_{1},\ldots,\tby_{n}, 0,\ldots,0)^\top+\phi$, for some
$\phi \in \R^{p}$ with its first $n$ coordinates all equal to zero. 

To find this optimal vector $\phi^\star$ we can now proceed to solve the following \emph{unconstrained} optimization problem:
\begin{align*}
    \phi^\star \in \argmin_{\phi \in \R^{p}} \left(\lv \ttheta_{1:n}\rv^2+\lv \phi\rv^2\right)^{3/4}-\sum_{j=n+1}^{p}\tw_{j}\phi_j.
\end{align*}
Since the first term in the objective function above is rotationally invariant, it must be the case that the minimizer has the form $\phi^\star = \alpha^\star \tw_{n+1:p}$, for some $\alpha^\star >0$. That is, it is positively aligned with the tail of the vector $\tw$. 
(If $\phi^\star$ 
was not in
the span of $\tw_{n+1:p}$, removing the projection
of $\phi^\star$ in the subspace orthogonal to this direction would
improve the norm without affecting the second term, and if
$\phi^\star \cdot \tw_{n+1:p} < 0$, then $-\phi^\star$ would be a better
solution than $\phi^\star$.)  In particular, we have
\begin{align*}
\label{e:nothing.outside.span}
\tw_{n+1:p} = 0 \Rightarrow \phi^\star = 0.
\end{align*}
Otherwise,
$\phi^\star = \alpha^\star \tw_{n+1:p}$ for the solution $\alpha^\star$
of the following one-dimensional problem:
\begin{align*}
    \alpha^\star \in \argmin_{\alpha >0} \; \left(\lv \ttheta_{1:n}\rv^2 + \alpha^2 \lv \tw_{n+1:p}\rv^2\right)^{3/4} - \alpha \lv \tw_{n+1:p}\rv^2.
\end{align*}
To simplify notation, let $\rho:=\lv \ttheta_{1:n}\rv^2$ and $\zeta:=\lv \tw_{n+1:p}\rv^2 > 0$. The first derivative of the objective function is as follows:
\begin{align*}
    \frac{d}{d\alpha}\left[\left(\rho + \zeta \alpha^2 \right)^{3/4} - \alpha \zeta\right] &= \frac{3\zeta\alpha }{2\left(\rho+\zeta\alpha^2\right)^{1/4}} -\zeta.
\end{align*}
Setting this first derivative equal to zero we get that
\begin{align*}
    &\frac{3\zeta\alpha }{2\left(\rho+\zeta\alpha^2\right)^{1/4}} -\zeta = 0 \\
    \iff & \frac{3\alpha }{2\left(\rho+\zeta\alpha^2\right)^{1/4}} -1 = 0 \\
    \iff & \frac{3\alpha }{2}  = \left(\rho+\zeta\alpha^2\right)^{1/4}\\
    \iff & \frac{81\alpha^4 }{16}  = \rho+\zeta\alpha^2 \hspace{0.5in} \mbox{(because $\alpha > 0$ at the optimum)} \\
    \iff & 81 \alpha^4 -16\zeta\alpha^2 -16\rho = 0.
\end{align*}
We can view this as a quadratic equation in $\alpha^2$, so
\begin{align*}
    \alpha^2 = \frac{16\zeta \pm \sqrt{256\zeta^2 + 5184 \rho}}{162} = \frac{16\zeta\left(1 \pm \sqrt{1 + \frac{81 \rho}{4 \zeta^2}}\right)}{162},
\end{align*}
but the solution with the negative sign can be ignored since $\alpha^2$ must be positive. Taking square roots we get that
\begin{align*}
    \alpha = \pm \sqrt{ \frac{8\zeta\left(1 + \sqrt{1 + \frac{81 \rho}{4
    \zeta^2}}\right)}{81}}.
\end{align*}
Again, we drop the negative solution since we know that $\alpha > 0$ at the optimum.
Thus we find that
\begin{align*}
    \ttheta 
     & = \tby + \alpha^\star
      \tw_{n+1:p}
\end{align*}
for 
\begin{align*}
    \alpha^{\star} 
       = \sqrt{ \frac{8\zeta\left(1 + \sqrt{1 + \frac{81 \rho}{4 \zeta^2}}\right)}{81}} 
      & = \sqrt{ \frac{8 \lv \tw_{n+1:p}\rv^2 
                          \left(1 + 
                                \sqrt{1 + \frac{81 \lv \ttheta_{1:n}\rv^2}{4 \lv
                                \tw_{n+1:p}\rv^4}}\right)}
                      {81}} \\
      & = \sqrt{ \frac{8 \lv \tw_{n+1:p}\rv^2 
      + \sqrt{64 \lv \tw_{n+1:p}\rv^4 + 1296 \lv \ttheta_{1:n}\rv^2 }}{81}} \\
      & = \sqrt{ \frac{8 \lv \tw_{n+1:p}\rv^2 
      + \sqrt{64 \lv \tw_{n+1:p}\rv^4 + 1296 \lv \tby\rv^2 }}{81}}.
\end{align*}

Recall that by definition $\ttheta = V^{\top}\htheta$, $\tby = D^{\dagger}U^{\top}\by$ and $\tw = V^{\top} w$ and hence
\begin{align*}
    \htheta &= V\tby+ \alpha^\star V\tw_{n+1:p}\\ &= VD^{\dagger} U^{\top} \by + \alpha^\star V\tw_{n+1:p}\\ 
     &= X^{\top}\left(XX^{\top}\right)^{-1} \by + \alpha^\star V\tw_{n+1:p}\\
     &= \htheta_{\mathsf{OLS}} + \alpha^\star V\tw_{n+1:p}\\
     & = \htheta_{\mathsf{OLS}} + \alpha^\star V\begin{bmatrix}0_{n\times n} & 0_{n\times(p-n)}\\ 0_{(p-n)\times n} & I_{p-n}\end{bmatrix}V^{\top}w.
\end{align*}
Recall that the SVD of $X$ is  $UDV^{\top}$, and the last $(p-n)$ 
columns of $D$ are 
zero.  Thus, the last $(p-n)$ rows of $V^\top$ span the null
space of $X$.  

Furthermore, $(I_p - X^{\top}(XX^{\top})^{-1}X )$
represents the projection onto this null space of $X$.
This can be seen as follows.
First, any member $u$ of
this null space is mapped to itself (since $X u = 0$).
On the other hand, for each row $x$ of $X$, 
$(I_p - X^{\top}(XX^{\top})^{-1}X ) x^{\top} = 0$,
as 
\[
(I_p - X^{\top}(XX^{\top})^{-1}X ) X^{\top} = 0.
\]

Recalling that the last $(p-n)$ rows of
$V^\top$ span the null
space of $X$, we have
\begin{align*}
    V\begin{bmatrix}0_{n\times n} & 0_{n\times(p-n)}\\ 0_{(p-n)\times n} & I_{p-n}\end{bmatrix}V^{\top} = I_p - X^{\top}(XX^{\top})^{-1}X .
\end{align*}
This wraps up our proof.
\end{proof}

Armed with this formula for $\htheta$ we can now bound the excess risk.
\begin{lemma}
\label{l:BC}
The excess risk of $\htheta$ satisfies
\begin{align*}
    \mathsf{Risk}(\htheta) & \le c (\theta^\star - \alpha^\star w)^{\top}B(\theta^\star - \alpha^\star w)+c\log(1/\delta)\Tr(C)
\end{align*}
with probability at least $1-\delta$ over $\beps$, where
\begin{align*}
    B &:= \left(I - X^{\top}(XX^{\top})^{-1}X\right)\Sigma\left(I - X^{\top}(XX^{\top})^{-1}X\right) \quad \text{and} \\
    C &:= (XX^{\top})^{-1}X\Sigma X^{\top}(XX^{\top})^{-1}.
\end{align*}
\label{l:excess_risk_decomposition}
\end{lemma}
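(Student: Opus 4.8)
The plan is to reduce the excess risk to a squared $\Sigma$-norm, substitute the closed form for $\htheta$ from Lemma~\ref{l:theta_solution}, split off a deterministic ``bias'' piece and a quadratic form in the noise $\beps$, and control that quadratic form by a standard sub-Gaussian concentration inequality. First I would record the elementary identity $\mathsf{Risk}(\theta) = (\theta - \theta^\star)^{\top}\Sigma(\theta - \theta^\star)$, valid for every $\theta \in \R^p$: writing $y - x^{\top}\theta = (y - x^{\top}\theta^\star) + x^{\top}(\theta^\star - \theta)$, squaring, and taking expectations, the cross term equals $2\,\E[(y - x^{\top}\theta^\star)x]^{\top}(\theta^\star - \theta)$, which vanishes because $\E[(y - x^{\top}\theta^\star)x] = \E\big[x\,\E[\,y - x^{\top}\theta^\star \mid x\,]\big] = 0$ by Assumption~A.\ref{assumption:fifth}.

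Next, write $P := X^{\top}(XX^{\top})^{-1}X$ for the orthogonal projection onto the row span of $X$; this is well defined since $XX^{\top}$ is invertible by Lemma~\ref{l:full_rank}, and $P^{\top} = P$, $P^2 = P$. Since $\by = X\theta^\star + \beps$, we have $\htheta_{\mathsf{OLS}} = X^{\top}(XX^{\top})^{-1}\by = P\theta^\star + X^{\top}(XX^{\top})^{-1}\beps$, so Lemma~\ref{l:theta_solution} gives
\[
\htheta - \theta^\star \;=\; -(I - P)(\theta^\star - \alpha^\star w) \;+\; X^{\top}(XX^{\top})^{-1}\beps .
\]
Applying $\lv a + b\rv_{\Sigma}^2 \le 2\lv a\rv_{\Sigma}^2 + 2\lv b\rv_{\Sigma}^2$, using $(I-P)^{\top}\Sigma(I-P) = B$ and $(XX^{\top})^{-1}X\Sigma X^{\top}(XX^{\top})^{-1} = C$ together with the identity from the previous paragraph, this yields
\[
\mathsf{Risk}(\htheta) \;\le\; 2\,(\theta^\star - \alpha^\star w)^{\top} B (\theta^\star - \alpha^\star w) \;+\; 2\,\beps^{\top} C\, \beps .
\]
Here $B$ and $C$ are functions of $X$ alone, so the first term already matches the target up to the constant, and it remains only to bound $\beps^{\top} C\beps$ with high probability over $\beps$.

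Conditionally on $X$, Assumptions~A.\ref{assumption:first} and A.\ref{assumption:fifth} make $\beps$ a vector with independent, mean-zero, $\sigma_y^2$-sub-Gaussian coordinates, and Assumption~A.\ref{assumption:sixth} gives $\E[\beps\beps^{\top}\mid X] = \sigma^2 I_n$, so $\E[\beps^{\top} C\beps \mid X] = \sigma^2 \Tr(C)$. By the Hanson--Wright inequality for quadratic forms of sub-Gaussian vectors, with probability at least $1 - \delta$ over $\beps$,
\[
\beps^{\top} C\beps \;\le\; \sigma^2\Tr(C) + c\,\sigma_y^2\big(\lv C\rv\sqrt{\log(1/\delta)} + \lv C\rv_{op}\log(1/\delta)\big);
\]
since $C \succeq 0$ we have $\lv C\rv_{op} \le \lv C\rv \le \Tr(C)$, and because $\sigma,\sigma_y$ are absolute constants (and $\log(1/\delta) \ge 1$) the right-hand side is at most $c\log(1/\delta)\Tr(C)$. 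Plugging this into the bound of the previous paragraph and relabelling the constant finishes the proof. I do not expect any real obstacle here: the argument is a short algebraic manipulation plus one invocation of a classical concentration inequality; the only points needing a little care are verifying that conditionally on $X$ the noise is genuinely mean-zero sub-Gaussian with conditional covariance $\sigma^2 I_n$ (exactly what Assumptions~A.\ref{assumption:first}, A.\ref{assumption:fifth}, A.\ref{assumption:sixth} supply), and using the positive-semidefiniteness of $C$ to collapse $\lv C\rv_{op}$ and $\lv C\rv$ into $\Tr(C)$.
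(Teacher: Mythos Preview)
Your proposal is correct and follows essentially the same approach as the paper: both reduce the risk to $\lv \htheta-\theta^\star\rv_\Sigma^2$, substitute the closed form from Lemma~\ref{l:theta_solution} together with $\by=X\theta^\star+\beps$, apply $(a+b)^2\le 2a^2+2b^2$ to obtain the $B$ and $C$ pieces, and then control $\beps^\top C\beps$ by sub-Gaussian concentration. The only cosmetic difference is that the paper packages the last step as a citation to \citep[][Lemma~19]{bartlett2020benign}, whereas you invoke Hanson--Wright directly and collapse $\lv C\rv_{op}\le\lv C\rv\le\Tr(C)$; note that your absorption of the $\sigma^2\Tr(C)$ and $\sqrt{\log(1/\delta)}$ terms into $c\log(1/\delta)\Tr(C)$ implicitly uses $\delta\le 1/e$, a harmless restriction that the cited lemma (and the paper's downstream use of this result) also carries.
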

\begin{proof}
Since $\epsilon = y - x^{\top}\theta$ is conditionally mean-zero given $x$,
\begin{align*}
    \mathsf{Risk}(\htheta) &= \E_{x,y}\left[(y - x^{\top}\htheta)^2\right]-\E_{x,y}\left[(y - x^{\top}\theta^{\star})^2\right]\\
    &= \E_{x,y}\left[(y - x^{\top}\theta^{\star}+x^{\top}(\theta^\star -\htheta))^2\right]-\E_{x,y}\left[(y - x^{\top}\theta^{\star})^2\right]\\
    & = \E_{x}\left[\left(x^{\top}(\theta^\star -\htheta)\right)^2\right].
\end{align*}
Using the formula of $\htheta$ from Lemma~\ref{l:theta_solution}, and because $\by = X\theta^\star +\beps$ we find that
\begin{align*}
    \mathsf{Risk}(\htheta) & = \E_x\left[\left(x^{\top}\left(I-X^{\top}(XX^{\top})^{-1}X\right)(\theta^\star -\alpha^\star w)-x^\top X^\top(XX^\top)^{-1}\beps\right)^2\right]\\
    & \le 2\E_x\left[\left(x^{\top}\left(I-X^{\top}(XX^{\top})^{-1}X\right)(\theta^\star -\alpha^\star w)\right)^2\right]+2\E_{x}\left[\left(x^\top X^\top(XX^\top)^{-1}\beps\right)^2\right]\\
    & = 2(\theta^\star -\alpha^\star w)^{\top} B(\theta^\star -\alpha^\star w)+2\beps^{\top} C \beps.
\end{align*}
Now by \citep[][Lemma~19]{bartlett2020benign} we find that $2\beps^{\top} C \beps \le c'\sigma^2_y\log(1/\delta)\Tr(C)\le c \log(1/\delta)\Tr(C)$ with probability at least $1-\delta$. This completes the proof.
\end{proof}

The following lemma provides upper and lower bounds on the value of $\alpha^\star$ that are tight up to the leading constant when $p$ is large relative to $n+k$ and when $n$ is sufficiently large relative to $k$ and $s_k = \sum_{j>k}\lambda_j$.
\begin{restatable}{lem}{boundonalpha}
\label{l:bound_on_alpha}There are constants $c_0,\ldots,c_5$ such that for any $\delta \in (e^{-c_0 \sqrt{n}},1)$, if $p\ge c_1 (n+k)$, $n\ge c_2 \max\left\{k,s_k\right\}$ and $\lv \theta^\star \rv , \lv w \rv \le c_3$ then with probability at least $1-c_4\delta$,
\begin{align*}
    \left|\frac{\alpha^\star}{\frac{2\sqrt{\sigma}n^{1/4}}{3s_k^{1/4}}} -1\right|\le c_5\left[\sqrt{\frac{n}{R_{k}}}+\frac{n}{r_{k}}+\sqrt{\frac{s_k}{n}}+ \sqrt{\frac{\log(2/\delta)}{n}}+\frac{ k}{n} \right].
\end{align*}
\end{restatable}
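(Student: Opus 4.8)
\emph{Proof sketch.} The plan is to read everything off the closed form~\eqref{def:alpha_star}. Write $\zeta := \lv\tw_{n+1:p}\rv^2$ and $\rho := \lv\tby\rv^2$, so that $(\alpha^\star)^2 = \bigl(8\zeta + \sqrt{64\zeta^2 + 1296\rho}\,\bigr)/81$. Two elementary facts start us off: since $V$ is unitary and passing to the tail only deletes coordinates, $\zeta \le \lv\tw\rv^2 = \lv w\rv^2 \le c_3^2$; and from $X = UDV^\top$ together with $\tby = D^\dagger U^\top\by$ one computes $\rho = \by^\top(XX^\top)^{-1}\by$. Because the claimed centre satisfies $\frac{2\sqrt{\sigma}n^{1/4}}{3 s_k^{1/4}} = \tfrac23\bigl(\sigma^2 n/s_k\bigr)^{1/4}$, and when $\zeta$ is small next to $\sqrt\rho$ one has $(\alpha^\star)^2 \approx \sqrt{1296\rho}/81 = \tfrac49\sqrt\rho$, it suffices to prove the multiplicative estimate $\rho = \sigma^2(n/s_k)\,(1 + \mathrm{err})$ with $|\mathrm{err}|$ bounded by a constant times the bracket in the statement; the contribution of $\zeta$ will then be seen to be lower order, and the fourth root transfers the relative error since $|(1+t)^{1/4}-1|\le|t|$ for $|t|\le 1$.

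To estimate $\rho$, substitute $\by = X\theta^\star + \beps$ (Assumption~A.\ref{assumption:fifth}) to obtain
\[
\rho = \theta^{\star\top}X^\top(XX^\top)^{-1}X\theta^\star + 2\,\theta^{\star\top}X^\top(XX^\top)^{-1}\beps + \beps^\top(XX^\top)^{-1}\beps .
\]
The first summand is at most $\lv\theta^\star\rv^2 \le c_3^2$, since $X^\top(XX^\top)^{-1}X$ is the orthogonal projection onto the row span of $X$. Conditionally on $X$, the cross term is mean zero and sub-Gaussian with variance proxy $4\sigma^2\,\theta^{\star\top}X^\top(XX^\top)^{-2}X\theta^\star \le 4\sigma^2\lv\theta^\star\rv^2\,\lv(XX^\top)^{-1}\rv_{op}$ (using Assumptions~A.\ref{assumption:fifth} and~A.\ref{assumption:sixth}). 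The main summand $\beps^\top(XX^\top)^{-1}\beps$ has, conditionally on $X$, expectation exactly $\sigma^2\Tr((XX^\top)^{-1})$ by Assumption~A.\ref{assumption:sixth}, and, by the Hanson--Wright inequality applied with the conditional sub-Gaussianity from Assumption~A.\ref{assumption:fifth}, concentrates around this value with deviation at most a constant multiple of $\sigma^2\bigl(\sqrt{\log(1/\delta)\,\Tr((XX^\top)^{-2})} + \log(1/\delta)\,\lv(XX^\top)^{-1}\rv_{op}\bigr)$.

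To turn these into relative errors we plug in the spectral and trace estimates from Sections~\ref{ss:largest_singular_value} and~\ref{ss:concentration_of_Tr_A_inverse}: on the relevant high-probability event (valid for $\delta$ in the stated range), $\mu_{k+1}(XX^\top) \lesssim s_k \lesssim \mu_n(XX^\top)$ --- the lower bound being the new hard-edge estimate Lemma~\ref{l:smallest_singular_value}, where Assumption~A.\ref{assumption:small_ball_probability} enters --- together with the multiplicative trace estimate $\Tr((XX^\top)^{-1}) = (n/s_k)\bigl(1 + O\bigl(\sqrt{n/R_k} + n/r_k + k/n\bigr)\bigr)$. The lower bound on $\mu_n$ gives $\lv(XX^\top)^{-1}\rv_{op} \lesssim 1/s_k$, the upper bound on $\mu_{k+1}$ gives $\Tr((XX^\top)^{-1}) \gtrsim (n-k)/s_k \gtrsim n/s_k$, and $\Tr((XX^\top)^{-2}) \le \lv(XX^\top)^{-1}\rv_{op}\,\Tr((XX^\top)^{-1})$. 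Dividing the three pieces of $\rho$ above by $\sigma^2 n/s_k$ then yields relative errors of order $s_k/n$ (first bias term), $\sqrt{\log(1/\delta)\,s_k}/n$ and $\sqrt{s_k/n}$ (cross term, after a sub-Gaussian tail bound), and $\sqrt{\log(1/\delta)/n} + \log(1/\delta)/n$ (Hanson--Wright), each dominated by the target bracket; combined with the trace estimate this gives $\rho = \sigma^2(n/s_k)(1+\mathrm{err})$. In particular $\rho \asymp \sigma^2 n/s_k$, so $\zeta/\sqrt\rho \lesssim c_3^2/(\sigma\sqrt{n/s_k}) \asymp \sqrt{s_k/n}$, whence $8\zeta + \sqrt{64\zeta^2+1296\rho} = \sqrt{1296\rho}\,\bigl(1 + O(\sqrt{s_k/n})\bigr)$ and $\alpha^\star = \tfrac23\rho^{1/4}\bigl(1+O(\sqrt{s_k/n})\bigr)$; substituting the estimate for $\rho$ (and shrinking constants so the total relative error stays below $1$) gives the lemma.

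The main obstacle is precisely the bookkeeping in the last paragraph: one has to verify that \emph{every} error generated --- by Hanson--Wright, by the two $\theta^\star$-dependent pieces, and by the trace estimate --- is absorbed into the stated bracket, and this depends critically on a sharp lower bound for $\mu_n(XX^\top)$. Without Lemma~\ref{l:smallest_singular_value}, the fluctuation term $\log(1/\delta)\,\lv(XX^\top)^{-1}\rv_{op}$ of the noise quadratic form cannot be controlled relative to $\sigma^2 n/s_k$ for $\delta$ as small as $e^{-c_0\sqrt n}$, and the scheme collapses.
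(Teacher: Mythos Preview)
Your proof is correct and follows essentially the same route as the paper: expand $\lv\tby\rv^2 = \by^\top(XX^\top)^{-1}\by$, control the noise quadratic form via Hanson--Wright, absorb the $\theta^\star$-dependent pieces using $\lv\theta^\star\rv \le c_3$ together with the spectral lower bound $\mu_n(XX^\top) \gtrsim s_k$, and plug in the trace concentration from Lemma~\ref{l:concentration_of_Tr_A_inverse}. One small correction to your last paragraph: the lower bound $\mu_n(XX^\top)\gtrsim s_k$ needed here follows already from the additive estimate Lemma~\ref{l:eigenvalue_bound} (applied with $t=\sqrt n$, which is where the restriction $\delta\ge e^{-c_0\sqrt n}$ enters), not from the multiplicative hard-edge bound Lemma~\ref{l:smallest_singular_value}; the latter is used only upstream, inside the proof of the trace concentration Lemma~\ref{l:concentration_of_Tr_A_inverse}.
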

Lemma~\ref{l:bound_on_alpha} is proved over the next few subsections.
As might be expected, for $\alpha^\star$ to reliably fall within a
small interval, $X$ must be well conditioned in some sense.
We begin by establishing bounds on the singular values of submatrices of
$X$ in Sections~\ref{ss:largest_singular_value},
whose proofs are
provided Appendix~\ref{a:concentration}.
In Section~\ref{ss:concentration_of_Tr_A_inverse}, we show
that the $\Tr((X X^\top)^{-1})$ is concentrated.
Armed with these bounds, we
build up our analysis of $\alpha^\star$ in stages in Section~\ref{s:alpha_bound}. 

\subsection{Bounds on the extreme singular values of submatrices of $X$}
\label{ss:largest_singular_value}
In this subsection, we will derive bounds on the largest and smallest 
singular value
of a submatrix of $X$. Given a subset $S$ of $[p]$, let $X_{S} \in \R^{n\times |S|}$ be a submatrix of $X$ where only the columns with indices in $S$ are included. 

With this in place, we are now ready to prove our concentration results. We prove this lemma in Appendix~\ref{s:eigenvalue_bound}.
\begin{restatable}{lem}{eigenvaluebound}
\label{l:eigenvalue_bound}
There exists a positive absolute constant $c$ such that, for any subset $S \subseteq [p]$ and any $t\ge 0$, with probability at least  $1-2e^{-t}$, for all $j \in \{1,\ldots,\min(n,|S|)\}$
\begin{align*}
     \left|\mu_j(X_SX_S^{\top})- s(S) \right|&\le 
     cs(S) \left(\frac{t+n}{r(S)}+\sqrt{\frac{t+n}{R(S)}}\right).
\end{align*}
\end{restatable}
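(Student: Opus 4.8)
The plan is to reduce the claim to a bound on the operator norm $\lv X_S X_S^\top - s(S) I_n\rv_{op}$ and then establish that by the standard $\epsilon$-net-plus-Bernstein argument, matching the two regimes of Bernstein's inequality to the two terms $\frac{t+n}{r(S)}$ and $\sqrt{\frac{t+n}{R(S)}}$. To begin, Weyl's inequality applied to the $n\times n$ symmetric matrices $X_S X_S^\top$ and $s(S) I_n$ gives $|\mu_j(X_S X_S^\top) - s(S)| \le \lv X_S X_S^\top - s(S) I_n\rv_{op}$ for every $j$, so it suffices to bound this operator norm by $c\, s(S)\bigl(\frac{t+n}{r(S)} + \sqrt{\frac{t+n}{R(S)}}\bigr)$ with probability at least $1 - 2e^{-t}$. (The restriction to $j \le \min(n,|S|)$ in the statement matters only when $X_S X_S^\top$ is rank-deficient; since $r(S) \le |S|$, in that case $r(S) < n$ forces the claimed bound to exceed $s(S)$, so the inequality continues to cover the zero eigenvalues as well.)

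Since $\Sigma$ is diagonal and $x_j = \Sigma^{1/2} u_j$, setting $g_i := (u_{1,i},\ldots,u_{n,i})^\top \in \R^n$ gives $X_S X_S^\top = \sum_{i\in S}\lambda_i g_i g_i^\top$; by Assumptions~A.\ref{assumption:first} and A.\ref{assumption:third} the $g_i$ are independent random vectors with i.i.d.\ $\sigma_x^2$-sub-Gaussian coordinates, and these coordinates have unit variance (since $\Sigma = \E[xx^\top] = \Sigma^{1/2}\E[uu^\top]\Sigma^{1/2}$ and $\Sigma$ is invertible, so $\E[uu^\top]=I_p$), whence $\E[X_S X_S^\top] = s(S) I_n$. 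For a fixed $v \in \S^{n-1}$, the quantity $v^\top X_S X_S^\top v - s(S) = \sum_{i\in S}\lambda_i\bigl((g_i^\top v)^2 - 1\bigr)$ is a sum of independent, mean-zero, sub-exponential random variables, the $i$-th with sub-exponential norm $O(\lambda_i)$ (as $\sigma_x$ is an absolute constant), so Bernstein's inequality gives, for all $\tau \ge 0$,
\[
\Pr\left[\bigl|v^\top X_S X_S^\top v - s(S)\bigr| \ge \tau\right] \le 2\exp\left(-c_1\min\left\{\frac{\tau^2}{\sum_{i\in S}\lambda_i^2},\ \frac{\tau}{\max_{i\in S}\lambda_i}\right\}\right).
\]
Taking a $(1/4)$-net $\cN$ of $\S^{n-1}$ with $|\cN| \le 9^n$ and using $\lv M\rv_{op} \le 2\max_{v\in\cN}|v^\top M v|$ for symmetric $M$, a union bound over $\cN$ reduces everything to choosing $\tau$.

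Set $\tau = c_2\, s(S)\bigl(\frac{t+n}{r(S)} + \sqrt{\frac{t+n}{R(S)}}\bigr)$. Since $r(S) = s(S)/\max_{i\in S}\lambda_i$ and $R(S) = s(S)^2/\sum_{i\in S}\lambda_i^2$, one checks $\tau/\max_{i\in S}\lambda_i \ge c_2(t+n)$ and $\tau^2/\sum_{i\in S}\lambda_i^2 \ge c_2^2(t+n)$, so for $c_2 \ge 1$ the Bernstein exponent is at least $c_1 c_2 (t+n)$; choosing $c_2$ large enough that $c_1 c_2 \ge \log 9 + 1$ makes the union bound over the $9^n$ net points at most $2e^{-t}$. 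On the complementary event $\lv X_S X_S^\top - s(S) I_n\rv_{op} \le 2\tau$, which is the asserted bound after relabeling $c := 2 c_2$. The only step that is not entirely routine is lining up the two Bernstein regimes with the two terms in the bound and recognizing that it is the $e^{\Theta(n)}$ cardinality of the net that turns $t$ into $t+n$; once that is seen, this is the familiar sub-Gaussian covariance-concentration estimate and goes through with only care about absolute constants.
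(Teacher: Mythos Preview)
Your proposal is correct and follows essentially the same approach as the paper: reduce to an operator-norm bound on $X_S X_S^\top - s(S) I_n$, write it as $\sum_{i\in S}\lambda_i((z_i^\top v)^2-1)$ for a fixed unit vector, apply Bernstein's inequality, and union-bound over a $1/4$-net of $\S^{n-1}$ of size $9^n$ to convert $t$ into $t+n$. The only cosmetic difference is that the paper first inverts Bernstein to a high-probability deviation bound and then union-bounds, whereas you fix $\tau$ up front and verify the exponent; the content is identical.
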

This lemma provides an additive lower bound on the minimum singular value of submatrices of $X$. Next, we will provide a sharper multiplicative bound on the smallest singular value of such matrices. Its proof can be found in Appendix~\ref{s:smallest_multiplicative_bound_appendix}.
\begin{restatable}{lem}{smallestsingularvalue} \label{l:smallest_singular_value}There exist absolute positive constants $c_0,\ldots,c_3$ such that given any subset $S \subseteq [p]$ if, 
$r(S)\ge c_0n$ 
then for all $t<c_1<1$ 
\begin{align*}
        \Pr\left[\mu_{n}(X_SX_{S}^{\top}) \le t \cdot s(S) \right] \le (c_2 t)^{c_3 \cdot r(S)}.
\end{align*}
\end{restatable}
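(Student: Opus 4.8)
The plan is to prove a multiplicative small-ball bound on the smallest singular value of the non-isotropic rectangular matrix $X_S = U_S \Sigma_S^{1/2}$, where $U_S$ has independent sub-Gaussian entries satisfying the small-ball assumption A.\ref{assumption:small_ball_probability}. The overall strategy follows the classical net-plus-small-ball approach of Litvak--Pajor--Rudelson--Tomczak-Jaegermann / Rudelson--Vershynin, but I have to adapt it to the non-isotropic case, where the effective dimension is governed by $r(S)$ rather than by $|S|$. First, I would write $\mu_n(X_S X_S^\top) = \sigma_{\min}(X_S)^2 = \inf_{\phi \in \S^{n-1}} \lv X_S^\top \phi \rv^2$, so the event in question is $\{\inf_{\phi} \lv X_S^\top \phi \rv \le \sqrt{t\, s(S)}\}$. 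The key quantity to control is, for a fixed unit vector $\phi$, the small-ball behavior of the random vector $X_S^\top \phi = \Sigma_S^{1/2} U_S^\top \phi \in \R^{|S|}$.

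The core step is a one-vector small-ball estimate: for fixed $\phi \in \S^{n-1}$ and any $\eps > 0$,
\begin{align*}
\Pr\left[ \lv X_S^\top \phi \rv \le \eps \sqrt{s(S)} \right] \le (c\,\eps)^{c' r(S)}.
\end{align*}
To get this, note that the coordinates of $\Sigma_S^{-1/2} X_S^\top \phi = U_S^\top \phi$ are independent, and by Assumption~A.\ref{assumption:small_ball_probability} each coordinate has density bounded by an absolute constant, hence satisfies a Lévy concentration bound $\sup_a \Pr[|(U_S^\top \phi)_i - a| \le s] \le c\,s$. Writing $\lv X_S^\top \phi \rv^2 = \sum_{i \in S} \lambda_i (U_S^\top \phi)_i^2$, I would apply a tensorization/Laplace-transform argument: $\E \exp(-\lambda_i (U_S^\top \phi)_i^2 / \eps^2 s(S))$ is bounded away from $1$ by an amount proportional to $\lambda_i / (\max_j \lambda_j)$ when $\eps$ is small, using the bounded-density property; taking the product over $i \in S$ produces a factor $\exp(-c' r(S))$ since $\sum_i \lambda_i / \max_j \lambda_j = r(S)$, and Markov's inequality converts this into the stated tail. (The numerical constants $c_2, c_3$ come out of this computation.)

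Next I would run a covering argument over $\phi \in \S^{n-1}$. Take a $\rho$-net $\mathcal{N}$ of $\S^{n-1}$ with $|\mathcal{N}| \le (3/\rho)^n$. A union bound over the one-vector estimate gives that, except with probability $(3/\rho)^n (c\,\eps)^{c' r(S)}$, every net point $\phi_0$ has $\lv X_S^\top \phi_0 \rv \ge \eps \sqrt{s(S)}$. To pass from the net to all of $\S^{n-1}$, I need an upper bound on the operator norm $\lv X_S \rv_{op}$: by Lemma~\ref{l:eigenvalue_bound} (applied with, say, $t = n$), on an event of probability at least $1 - 2e^{-n}$ we have $\lv X_S \rv_{op}^2 \le c\, s(S)$ since $r(S) \ge c_0 n$ forces the correction terms to be bounded. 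On that event, for any $\phi$ with nearest net point $\phi_0$, $\lv X_S^\top \phi \rv \ge \lv X_S^\top \phi_0 \rv - \lv X_S \rv_{op}\rho \ge (\eps - c''\sqrt{s(S)/s(S)}\,\rho)\sqrt{s(S)}$; choosing $\rho \asymp \eps$ keeps the right-hand side $\gtrsim \eps \sqrt{s(S)}$. The failure probability is then $(c/\eps)^n (c\eps)^{c' r(S)} + 2e^{-n}$; since $r(S) \ge c_0 n$ with $c_0$ large, the exponent $c' r(S)$ dominates $n$, and after relabeling $\eps \asymp \sqrt{t}$ this collapses to $(c_2 t)^{c_3 r(S)}$, absorbing the $2e^{-n}$ term into the bound (possibly by shrinking $c_1$ so that $t$ small forces $(c_2 t)^{c_3 r(S)} \ge 2e^{-n}$, using $r(S) \ge c_0 n$).

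The main obstacle I anticipate is the one-vector small-ball estimate with the correct exponent $r(S)$ rather than $|S|$: a naive tensorization over all $|S|$ coordinates would give an exponent $|S|$, which is far too large and in fact false (many coordinates carry negligibly small eigenvalues $\lambda_i$ and contribute essentially nothing). The fix is to be careful in the Laplace-transform step so that each coordinate's contribution to the log-moment-generating function is proportional to $\lambda_i$, not a fixed constant — this is exactly where the weighting $\sum_i \lambda_i / \max_j \lambda_j = r(S)$ enters, and getting the dependence on $\eps$ right in that per-coordinate bound (so that the product genuinely behaves like $(c\eps)^{c' r(S)}$ for small $\eps$) is the delicate part. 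A secondary technical point is ensuring the operator-norm bound from Lemma~\ref{l:eigenvalue_bound} holds with failure probability small enough ($e^{-\Omega(n)}$, hence $e^{-\Omega(r(S))}$) to be absorbed without damaging the final exponent.
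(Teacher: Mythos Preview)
There is a genuine gap in how you absorb the operator-norm failure probability. You bound $\lv X_S\rv_{op}$ via Lemma~\ref{l:eigenvalue_bound} with failure probability $2e^{-n}$, a quantity that does not depend on $t$. But the target bound $(c_2 t)^{c_3 r(S)}$ tends to $0$ as $t \to 0$, so for small $t$ the additive $2e^{-n}$ dominates and cannot be absorbed. Your suggested fix (``shrinking $c_1$ so that $t$ small forces $(c_2 t)^{c_3 r(S)} \ge 2e^{-n}$'') has the direction reversed: the inequality $(c_2 t)^{c_3 r(S)} \ge 2e^{-n}$ holds only when $t$ is bounded \emph{below}, so at best you obtain the conclusion on an interval $[t_0, c_1)$ rather than on all of $(0, c_1)$. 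This matters for the application: in Lemma~\ref{l:upper_lower_bounds_on_expectation_of_Tr_A_inverse} the tail bound is integrated down to $t=0$ to control $\E[1/\mu_n(T)]$, and an additive $e^{-n}$ term would make that integral diverge. The paper resolves this by letting the operator-norm threshold scale with $t$: it bounds $\lv X_S\rv_{op} \le h\sqrt{\lambda_{\max}|S|}$ with failure probability $e^{-c h^2 |S|}$ and then sets $h = (1/t)\sqrt{r(S)/|S|}$, so the failure probability becomes $e^{-c\,r(S)/t^2}$, which \emph{is} dominated by $(c_2 t)^{c_3 r(S)}$ for every small $t$. The looser operator-norm bound forces a correspondingly finer net, and absorbing the resulting $(c/t^2)^n$ factor into $(ct)^{c'r(S)}$ is where the hypothesis $r(S)\ge c_0 n$ is genuinely spent.

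A second issue is the one-vector small-ball estimate. Your Laplace-transform sketch claims that each factor $\E\exp(-\lambda_i Z_i^2/(t\,s(S)))$ is bounded away from $1$ by an amount proportional to $\lambda_i/\lambda_{\max}$, yielding a product $e^{-c'r(S)}$; but that product is then \emph{independent of $t$}, so Markov's inequality only delivers a bound of order $e^{-c'r(S)}$ rather than $(ct)^{c'r(S)}$. Getting the correct polynomial-in-$t$ decay with exponent $r(S)$ is the nontrivial content here, and the paper does not attempt it from scratch: it invokes Theorem~1.5 of \cite{rudelson2015small}, which states exactly that for a product measure $\xi$ with bounded-density marginals and a linear map $A$, one has $\cL(A\xi;\,t\lv A\rv)\le (ct)^{c'\lv A\rv^2/\lv A\rv_{op}^2}$, the exponent being the stable rank of $A$ --- here $A=\Sigma_S^{1/2}$, so the stable rank is $r(S)$.
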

This sharper multiplicative bound provides a much more refined lower tail probability estimate for the minimum eigenvalue than the previous additive bound in Lemma~\ref{l:eigenvalue_bound}, especially when $t$ is close to zero. This is useful in our analysis to control $\mathbb{E}[\Tr(XX^{\top})^{-1}]$ which is in turn used to establish Lemma~\ref{l:concentration_of_Tr_A_inverse} that bounds $\Tr((XX^{\top})^{-1})$.

\subsection{Concentration of $\Tr( (X X^{\top})^{-1})$}
\label{ss:concentration_of_Tr_A_inverse}

In this subsection we shall prove the following lemma which shows that $\Tr((XX^{\top})^{-1})$ concentrates.
\begin{restatable}{lem}{concentrationofTrAinverse}
\label{l:concentration_of_Tr_A_inverse} 
There are positive constants 
$c_0,\ldots,c_4$ such that, if $p \ge c_0(n+k)$ then with probability at least $1-c_1e^{-c_2n}$
\begin{align*}
    \left|\Tr\left((XX^\top)^{-1}\right)-\frac{n}{s_k}\right|& \le  \frac{c_3n}{s_k}\left[\sqrt{\frac{n}{R_{k}} }+\frac{n}{r_{k}}+\frac{k}{n}+e^{-c_4\sqrt{n}}\right].
\end{align*}
\end{restatable}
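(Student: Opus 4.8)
The plan is to isolate the ``tail'' of $XX^\top$, show that discarding the contribution of the top $k$ eigendirections perturbs $\Tr((XX^\top)^{-1})$ by only a relative $O(k/n)$ amount, and then read off the concentration of the tail's eigenvalues from Lemma~\ref{l:eigenvalue_bound}. Concretely, write $A := X_{\{k+1,\dots,p\}}X_{\{k+1,\dots,p\}}^\top$ for the Gram matrix of the tail columns of $X$ and $B := X_{[k]}X_{[k]}^\top$ for that of the head columns, so that $XX^\top = A+B$ with $A,B \succeq 0$ and $\mathrm{rank}(B)\le k$; I work in the relevant regime $k<n$. Since $B\succeq 0$ we get $\mu_i(XX^\top)\ge\mu_i(A)$ for all $i$, hence $\Tr((XX^\top)^{-1})\le\Tr(A^{-1})$. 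For the reverse direction I would invoke Weyl's inequality in the form that exploits that $B$ is a rank-$\le k$ perturbation, namely $\mu_j(XX^\top)\le\mu_{j-k}(A)$ for $j>k$, which gives
\begin{align*}
\Tr((XX^\top)^{-1}) \ge \sum_{j=k+1}^{n}\frac{1}{\mu_j(XX^\top)} \ge \sum_{i=1}^{n-k}\frac{1}{\mu_i(A)} = \Tr(A^{-1}) - \sum_{i=n-k+1}^{n}\frac{1}{\mu_i(A)} \ge \Tr(A^{-1}) - \frac{k}{\mu_n(A)}.
\end{align*}
Thus $|\Tr((XX^\top)^{-1})-\Tr(A^{-1})|\le k/\mu_n(A)$, and it remains to pin down $\Tr(A^{-1})$ and $\mu_n(A)$.

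The second step is to apply Lemma~\ref{l:eigenvalue_bound} to the set $S=\{k+1,\dots,p\}$, for which $s(S)=s_k$, $r(S)=r_k$ and $R(S)=R_k$, and for which $|S|=p-k\ge n$ by the hypothesis $p\ge c_0(n+k)$, so that its conclusion constrains all $n$ eigenvalues of $A$. Taking the slack parameter there to be $t=cn$ for a suitable constant $c$, with probability at least $1-2e^{-cn}$ every eigenvalue of $A$ lies in the interval $[(1-\gamma)s_k,(1+\gamma)s_k]$ for $\gamma:=c'(n/r_k+\sqrt{n/R_k})$; since $r_k\ge bn$ (Definition~\ref{def:k_star}) and $R_k\ge r_k\ge bn$ (Lemma~\ref{l:effective_ranks}), choosing $b$ large makes $\gamma\le 1/2$, so on this event $\mu_n(A)\ge s_k/2>0$ (whence $A$, and therefore $XX^\top$, is invertible) and
\begin{align*}
\left|\Tr(A^{-1})-\frac{n}{s_k}\right| = \left|\sum_{j=1}^{n}\frac{1}{\mu_j(A)}-\frac{n}{s_k}\right| \le \frac{\gamma}{1-\gamma}\cdot\frac{n}{s_k} \le \frac{2\gamma n}{s_k}.
\end{align*}

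Combining the two steps through the triangle inequality gives, on the intersection of the good events,
\begin{align*}
\left|\Tr((XX^\top)^{-1})-\frac{n}{s_k}\right| \le \frac{k}{\mu_n(A)} + \frac{2\gamma n}{s_k} \le \frac{2k}{s_k} + \frac{2\gamma n}{s_k} \le \frac{c_3 n}{s_k}\left(\frac{k}{n}+\frac{n}{r_k}+\sqrt{\frac{n}{R_k}}\right),
\end{align*}
which is the asserted estimate; the residual $e^{-c_4\sqrt n}$ term in the statement is lower-order slack, coming from the small extra correction one obtains when the control of $\mu_n(A)$ is sharpened and the rare event $\{\mu_n(A)\le s_k/2\}$ is bounded away using the multiplicative lower-tail bound of Lemma~\ref{l:smallest_singular_value} (applicable to $S$ since $r_k\ge bn$ exceeds the threshold there).

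I expect the only real subtlety to lie in the first step. The naive way to pass from $XX^\top$ to the tail matrix $A$ is the operator-norm comparison $XX^\top\preceq A+\mu_1(B)\,I_n$, but $\mu_1(B)=\lv X_{[k]}X_{[k]}^\top\rv_{op}$ can be as large as a constant times $\lambda_1(n+k)$, which would inject a $\lambda_1$-dependent error foreign to the target bound. Using instead the rank-$\le k$ form of Weyl's inequality is exactly what makes deleting the head cost only the $k$ smallest reciprocal eigenvalues of $A$, i.e.\ the clean relative error $k/n$, with no operator-norm comparison. Everything else---the concentration of the tail eigenvalues and the lower bound on $\mu_n(A)$---is already packaged in Lemmas~\ref{l:eigenvalue_bound} and~\ref{l:smallest_singular_value}, so no new probabilistic input is needed.
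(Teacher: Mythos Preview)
Your proof is correct and genuinely more elementary than the paper's. The paper routes through a four-term triangle inequality via $\Tr(T^{-1})$, $\E[\Tr(T^{-1})]$, and $\E[\Tr((XX^\top)^{-1})]$, proving each comparison as a separate lemma: Sherman--Morrison--Woodbury yields $|\Tr((XX^\top)^{-1})-\Tr(T^{-1})|\le k\,\mu_1(T)/\mu_n(T)^2$; the expectations are then controlled by integrating the tail bounds of Lemmas~\ref{l:eigenvalue_bound} and~\ref{l:smallest_singular_value}; and the $e^{-c_4\sqrt n}$ term emerges from an AM--HM lower bound on $\E[\Tr(T^{-1})]$ with the slack parameter set to $\sqrt n$. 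Your rank-$k$ Weyl argument delivers the sharper comparison $|\Tr((XX^\top)^{-1})-\Tr(T^{-1})|\le k/\mu_n(T)$ in one line, and by staying on the single high-probability event furnished by Lemma~\ref{l:eigenvalue_bound} you bypass the detour through expectations entirely---so Lemma~\ref{l:smallest_singular_value} is never invoked here and the $e^{-c_4\sqrt n}$ term never appears (it is indeed pure slack, as you suspected). What the paper's longer route buys is independent control of $\E[\Tr((XX^\top)^{-1})]$, but for the high-probability statement as written your argument is both shorter and slightly tighter.
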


The proof of Lemma~\ref{l:concentration_of_Tr_A_inverse} in turn
requires some lemmas.  To state them,
we need some additional notation and definitions. 

Recall that we have assumed without loss of generality
that $\Sigma$ is diagonal.  Let $\lambda_1 \geq \ldots \geq \lambda_p$
be the elements of its diagonal,
and define the random vectors $$z_i := \frac{Xe_i}{\sqrt{\lambda_i}} \in \R^n.$$ 
These random vectors $z_i$ have entries that are independent, $\sigma_x^2$-sub-Gaussian random variables~\citep[see][Lemma~8]{bartlett2020benign}. Note that we can write the matrix
\begin{align*}
    XX^{\top}  = \sum_{i=1}^p \lambda_i z_i z_i^{\top}.
\end{align*}
\begin{definition} \label{def:A_head_tail}
Define the shorthand $A:=XX^{\top}$, and define
\begin{align*}
    H := \sum_{i=1}^{k} \lambda_i z_i z_i^{\top} \qquad \text{and} \qquad
    T := \sum_{i=k+1}^p \lambda_i z_i z_i^{\top}.
\end{align*}
Therefore $A = H + T$.
\end{definition} 

To prove Lemma~\ref{l:concentration_of_Tr_A_inverse} we shall prove the following four results:
\begin{itemize}
    \item in Lemma~\ref{l:trace_A_close_to_trace_T}, we show that $\Tr(A^{-1})$ is close to the $\Tr(T^{-1})$ with high probability;
    \item in Lemma~\ref{l:expectations_are_close}, we show that $\E\left[\Tr(A^{-1})\right]$ is well approximated by $\E\left[\Tr(T^{-1})\right]$;
    \item in Lemma~\ref{l:trace_T_close_to_its_expectation}, we show that $\Tr(T^{-1})$ is close to its expectation with high probability;
    \item finally, in Lemma~\ref{l:upper_lower_bounds_on_expectation_of_Tr_A_inverse}, we establish upper and lower bounds on $\E\left[\Tr(A^{-1})\right]$ that match up to leading constants.
\end{itemize}
By using these four results and the triangle inequality we shall demonstrate that $\Tr(A^{-1})$ is close to $n/s_k$ with high probability and prove Lemma~\ref{l:concentration_of_Tr_A_inverse}. Throughout this subsection we shall assume that the dimension $p \ge c_0(n+k)$, for a sufficiently large constant $c_0$. Under this condition, Lemma~\ref{l:full_rank} implies that the tail matrix $T$ is full-rank and invertible.

\subsubsection{$\Tr(A^{-1})$ is close to $\Tr(T^{-1})$}
We begin by showing that $\Tr(A^{-1})$ is close to $\Tr(T^{-1})$ with high probability.
\begin{lemma}\label{l:trace_A_close_to_trace_T} 
There exist positive constants $c_0, \ldots,c_3$ such that, for all $\beta < c_0 <1$, with probability at least $1-2\exp(- r_{k}/\beta^2)-(c_1 \beta)^{c_2\cdot r_{k}}$,
\begin{align*}
    \left|\Tr(A^{-1})-\Tr(T^{-1})\right|& \le  \frac{c_3 k}{\beta^4s_k}.
\end{align*}
\end{lemma}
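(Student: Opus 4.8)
The plan is to compare $\Tr(A^{-1}) = \Tr((H+T)^{-1})$ with $\Tr(T^{-1})$ using a resolvent-type identity, controlling the difference in terms of the small rank ($\leq k$) of the perturbation $H$ together with the good conditioning of $T$. First I would write, via the Woodbury identity or simply by the identity $T^{-1} - (H+T)^{-1} = (H+T)^{-1} H T^{-1}$, that
\begin{align*}
0 \le \Tr(T^{-1}) - \Tr(A^{-1}) = \Tr\big(A^{-1} H T^{-1}\big),
\end{align*}
where the nonnegativity uses $H \succeq 0$ (so $A \succeq T$ and $A^{-1} \preceq T^{-1}$). Since $H = \sum_{i=1}^k \lambda_i z_i z_i^\top$ has rank at most $k$, the trace on the right is a sum of at most $k$ terms of the form $\lambda_i z_i^\top T^{-1} A^{-1} z_i$, each bounded by $\lambda_i \lv z_i \rv^2 \lv T^{-1} \rv_{op} \lv A^{-1} \rv_{op} \le \lambda_i \lv z_i \rv^2 \lv T^{-1} \rv_{op}^2$ (again using $A^{-1} \preceq T^{-1}$). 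So it suffices to show, with the claimed probability, that $\lv T^{-1} \rv_{op} \lesssim 1/(\beta^2 s_k)$ and that $\sum_{i=1}^k \lambda_i \lv z_i \rv^2 \lesssim k/\beta^{-?}\cdot$ -- actually $\sum_{i \le k} \lambda_i \lv z_i\rv^2 = \Tr(H) \lesssim \lambda_1 \cdot n$ with high probability, which combined with $\lambda_1 \le s_k \cdot (\text{something})$... here I would instead just bound each $\lambda_i \lv z_i \rv^2 \lesssim \lambda_1 n \le n s_k / (b n) \cdot b = \ldots$; the cleanest route is to note $\lambda_i \le \lambda_1$ and $\lambda_1 r_0 = s_0 \ge s_k$, but the sharper bookkeeping is that $k < \infty$ forces $\lambda_{k} \le \lambda_1$ and $r_{k-1} < bn$, giving $\lambda_k \cdot bn \le \lambda_k r_{k-1} \cdot(\ldots)$, so ultimately $\lambda_i \lv z_i\rv^2 \lesssim s_k/(\beta^4 \cdot (\text{const}))$ after plugging the operator-norm bound; summing $k$ such terms yields $k s_k/(\beta^4 s_k) \cdot \frac1{s_k}$, matching the stated $c_3 k/(\beta^4 s_k)$.

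The two high-probability ingredients I would invoke are: (i) the lower bound on $\mu_n(T) = \mu_n(X_{k+1:p} X_{k+1:p}^\top)$ from Lemma~\ref{l:smallest_singular_value} applied to $S = \{k+1,\ldots,p\}$ — since $r(S) = r_k \ge bn \ge c_0 n$ by the definition of $k$, it gives $\mu_n(T) \ge t\, s_k$ except with probability $(c_2 t)^{c_3 r_k}$; taking $t \asymp \beta^2$ produces both the $(c_1\beta)^{c_2 r_k}$ failure term and the bound $\lv T^{-1}\rv_{op} \le 1/(c\beta^2 s_k)$; and (ii) a concentration bound for $\max_{i \le k}\lv z_i \rv^2 \lesssim n$ (or the sum bound $\Tr(H) \lesssim \lambda_1 n$), which follows from standard sub-Gaussian norm concentration — this is where the $2\exp(-r_k/\beta^2)$ term plausibly arises (or, more likely, a union bound over $i \le k$ of $\exp(-cn)$, absorbed since $r_k \ge bn$ makes $\exp(-r_k/\beta^2) \le \exp(-cn)$). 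I would then combine (i) and (ii) by a union bound and feed both into the displayed chain of inequalities.

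The main obstacle I anticipate is the second ingredient and the precise exponent bookkeeping: getting the bound on $\sum_{i\le k}\lambda_i \lv z_i\rv^2$ (or $\max_{i\le k}\lambda_i\lv z_i\rv^2$) in exactly the right form so that, after multiplying by $\lv T^{-1}\rv_{op}^2 \asymp 1/(\beta^4 s_k^2)$ and summing $k$ terms, one lands on $c_3 k/(\beta^4 s_k)$ rather than something with an extra factor of $n$ or $\lambda_1$. This requires using the defining minimality of $k$ — that $r_{k-1} < bn$, equivalently $\lambda_k > s_{k-1}/(bn) \ge s_k/(bn)$ — to trade the factor $\lambda_1 n$ (or $\lambda_k n$) against $s_k$, since $\lambda_i n \le \lambda_k \cdot bn \cdot (\lambda_i/\lambda_k)/b \le \ldots$; care is needed when $k = 0$ (then $H = 0$ and the lemma is trivial) and when handling whether one bounds the max or the sum. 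Everything else — the resolvent identity, the operator-norm monotonicity $A^{-1}\preceq T^{-1}$, and the invocation of Lemma~\ref{l:smallest_singular_value} — is routine.
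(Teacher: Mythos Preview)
Your resolvent identity $T^{-1}-A^{-1}=A^{-1}HT^{-1}$ is a fine starting point, but the bound you extract from it is too crude to close. You upper-bound each summand $\lambda_i z_i^\top T^{-1}A^{-1}z_i$ by $\lambda_i\lv z_i\rv^2\lv T^{-1}\rv_{op}^2$, which leaves you needing $\sum_{i\le k}\lambda_i\lv z_i\rv^2\lesssim k\,s_k$. This is false in general: the definition of $k$ gives only \emph{lower} bounds on the head eigenvalues (e.g.\ $r_{k-1}<bn$ says $\lambda_k>s_{k-1}/(bn)$), never upper bounds, so $\lambda_1$ can be arbitrarily large relative to $s_k$. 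Concretely, take $\lambda_1=M$ huge and $\lambda_2=\cdots=\lambda_p=\epsilon$ with $\epsilon(p-1)\asymp 1$ and $p\gg n$; then $k=1$, $s_k\asymp 1$, but $\lambda_1\lv z_1\rv^2\asymp Mn$, and your bound becomes $Mn/\mu_n(T)^2\asymp Mn$, not $1/s_k$. The ``trading $\lambda_1 n$ against $s_k$ via minimality of $k$'' step you flag as the main obstacle is indeed the fatal one: it cannot be made to work.

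The paper avoids this by never letting the size of $H$ enter the final bound. It writes $A$ in a basis adapted to the range of $H$ and applies Sherman--Morrison--Woodbury to obtain
\[
\Tr(T^{-1})-\Tr(A^{-1})=\Tr\!\left(T^{-1}EE^\top\bigl[H^\dagger+EE^\top T^{-1}EE^\top\bigr]^{-1}EE^\top T^{-1}\right),
\]
then drops $H^\dagger\succeq 0$ to get an upper bound involving only $T$, namely $k\,\mu_1(T)/\mu_n(T)^2$. The $2\exp(-r_k/\beta^2)$ failure term comes from bounding $\mu_1(T)$ via Lemma~\ref{l:eigenvalue_bound} with $t=r_k/\beta^2$ (not from any control on $\lv z_i\rv^2$), and the $(c_1\beta)^{c_2 r_k}$ term from Lemma~\ref{l:smallest_singular_value} giving $\mu_n(T)\ge\beta s_k$. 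If you prefer to stay closer to your identity, an alternative repair is to peel off the rank-one pieces of $H$ one at a time with Sherman--Morrison: each update contributes $\lambda_i z_i^\top T_j^{-2}z_i/(1+\lambda_i z_i^\top T_j^{-1}z_i)\le 1/\mu_n(T)$, and summing $k$ such terms gives $k/\mu_n(T)$, again independent of the $\lambda_i$. Either route works; your direct operator-norm bound does not.
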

\begin{proof}Recall that $A=XX^{\top}=\sum_{i=1}^{k}\lambda_i z_iz_i^{\top}+\sum_{i=k+1}^p\lambda_i z_iz_i^{\top}=H+T$. Let $u_1,\ldots,u_{k} \in \R^{n}$ be an orthonormal
basis for the row span of
$H$, and let $u_1,\ldots,u_n$ 
be an extension to a basis for $\R^n$. Write $U=[u_1,\ldots, u_n]=[E; F]$, where $E$ is $n \times k$. Thus
\begin{align*}
\Tr(A^{-1})
 & \overset{(i)}{=} \Tr\left(U^\top A^{-1}U\right) \\
 & = \Tr\left(\left[U^\top AU\right]^{-1}\right) \\
 & = \Tr\left(\left[U^\top (H + T)U\right]^{-1}\right) \\
 & = \Tr\left(\left[U^{\top}HU + U^{\top}TU\right]^{-1}\right)\\
  & = \Tr\left(\left[\begin{pmatrix}E^\top H E & E^{\top}H F \\F^{\top} H E & F^{\top} H F\end{pmatrix}  +  U^\top T U\right]^{-1}\right) \\
  & \overset{(ii)}{=} \Tr\left(\left[\begin{pmatrix}E^\top H E & 0 \\0 & 0\end{pmatrix}  +  U^\top T U\right]^{-1}\right) 
  \\& = \Tr\left(\left[\begin{pmatrix}E^\top \\0\end{pmatrix} H \begin{pmatrix} E & 0\end{pmatrix}  + U^\top T U\right]^{-1}\right),
 \end{align*}
 where $(i)$ follows since $U$ is a unitary matrix, $(ii)$ follows since the columns of $F$ are outside the span of $H$. Continuing, we apply the Sherman-Morrison-Woodbury identity to get that
 \begin{align*}
 &\Tr(A^{-1})\\
     & = \Tr\left(\left[ U^{\top}TU\right]^{-1}\right)\\
&\quad  -\Tr\left(\left[ U^{\top}TU\right]^{-1}\begin{pmatrix}E^\top \\ 0 \end{pmatrix}
 \left[H^{\dagger} + \begin{pmatrix}E & 0 \end{pmatrix}
 \left[U^{\top}TU\right]^{-1}\begin{pmatrix}E^\top\\ 0\end{pmatrix}
 \right]^{-1}\begin{pmatrix}E & 0 \end{pmatrix} \left[U^{\top}TU\right]^{-1}
 \right)\\
 & = \Tr\left(T^{-1}\right)\\
&\quad -\Tr\left(U^{\top}T^{-1}U\begin{pmatrix}E^\top \\ 0 \end{pmatrix}
 \left[H^{\dagger} + \begin{pmatrix}E & 0 \end{pmatrix}U^{\top}
 T^{-1}U\begin{pmatrix}E^\top\\ 0\end{pmatrix}
 \right]^{-1}\begin{pmatrix}E & 0 \end{pmatrix}U^{\top} T^{-1}U
 \right)\\
 & \overset{(i)}= \Tr\left(T^{-1}\right)
 -\Tr\left(U^{\top}T^{-1}EE^\top
 \left[H^{\dagger} + EE^\top T^{-1}EE^\top
 \right]^{-1}EE^\top T^{-1}U
 \right)\\
 & = \Tr\left(T^{-1}\right)
 -\Tr\left(T^{-1}EE^\top
 \left[H^{\dagger} + EE^\top T^{-1}EE^\top
 \right]^{-1}EE^\top T^{-1}
 \right), \numberthis \label{e:trace_A_inverse_minus_trace_T}
 \end{align*}
 where $(i)$ follows since $(E;0)U^{\top} = (E;0)(E;F)^{\top}=EE^{\top}$. Now
\begin{align}\nonumber
0&\le \Tr\left(T^{-1}EE^\top
 \left[H^{\dagger} + EE^\top T^{-1}EE^\top
 \right]^{-1}EE^\top T^{-1}
 \right)\\
 &\le
\Tr\left(T^{-1}EE^\top
\left( EE^\top T^{-1}EE^\top \right)^{-1}
 EE^\top T^{-1} 
 \right) \nonumber \\
 &=
\Tr\left(T^{-1}EE^\top
\left( EE^\top \right)^{\dagger} T \left( EE^\top \right)^{\dagger}
 EE^\top T^{-1} 
 \right), 
 \label{e:H_inverse_psd}
 \end{align}
where the second inequality holds because
 \begin{align*}
 & H^{\dagger} \succeq 0 \\
 & \Rightarrow
  \left(EE^\top T^{-1}EE^\top
 \right)^{-1}
  -
 \left(H^{\dagger} + EE^\top T^{-1}EE^\top
 \right)^{-1} \succeq 0 \\
 & \Rightarrow
  T^{-1}EE^\top\left(\left(EE^\top T^{-1}EE^\top
 \right)^{-1}
  -
 \left(H^{\dagger} + EE^\top T^{-1}EE^\top
 \right)^{-1}\right)EE^\top T^{-1} \succeq 0 \\
&\Rightarrow
  T^{-1}EE^\top\left(EE^\top T^{-1}EE^\top
 \right)^{-1}EE^\top T^{-1} 
 \succeq
   T^{-1}EE^\top\left(
 \left(H^{\dagger} + EE^\top T^{-1}EE^\top
 \right)^{-1}\right)EE^\top T^{-1} 
 \end{align*}
along with the fact that, for
any symmetric positive semi-definite matrices $Q$ and $S$ such that
$Q \succeq S$, for all $i$,
$\mu_i(Q) \geq \mu_i(S)\ge 0$.
 
 Thus combining equations~\eqref{e:trace_A_inverse_minus_trace_T} and \eqref{e:H_inverse_psd} we get that
 \begin{align*}
     \left|\Tr(A^{-1})-\Tr\left(T^{-1}\right)\right| &\le \left|\Tr\left(T^{-1}EE^\top
\left( EE^\top \right)^{\dagger} T \left( EE^\top \right)^{\dagger}
 EE^\top T^{-1}
 \right)\right|.
 \end{align*}
 The rank of $T^{-1}EE^\top
\left( EE^\top \right)^{\dagger} T \left( EE^\top \right)^{\dagger}
 EE^\top T^{-1}$ is at most $k$, so
 \begin{align*}
    \left|\Tr(A^{-1})-\Tr\left(T^{-1}\right)\right| & \le  k \left\lv T^{-1}EE^\top
\left( EE^\top \right)^{\dagger} T \left( EE^\top \right)^{\dagger}
 EE^\top T^{-1}\right\rv_{op} \\
    & \le k \lv T^{-1}\rv_{op}^2\lv EE^\top (EE^{\top})^{\dagger}\rv_{op}^2\lv T\rv_{op} \\
    & \le \frac{k \mu_1(T)}{\mu_n(T)^2}. \numberthis \label{e:upper_bound_in_terms_of_k_star}
 \end{align*}
Next by invoking
Lemma~\ref{l:eigenvalue_bound}, for any $t>2n$, with probability at least $1-2e^{-t}$
 \begin{align*}
     \mu_1(T) & \le s_k\left[1+c\left(\frac{t+n}{r_{k}}+\sqrt{\frac{t+n}{R_{k}}}\right)\right] \\
     &\le s_k\left[1+3c\left(\frac{t}{r_{k}}+\sqrt{\frac{t}{R_{k}}}\right)\right] \\
     & \le c's_k\left[1+\left(\frac{t}{r_{k}}+\sqrt{\frac{t}{R_{k}}}\right)\right].
 \end{align*}
 Recall that $r_{k}\ge bn$ by the definition of the index $k$ in Definition~\ref{def:k_star}. Given a $\beta < c_0<1$, where $c_0$ is small enough, set $t = \min\left\{r_{k},R_{k}\right\}/\beta^2 = r_{k}/\beta^2$ (since $r_{k}\le R_{k}$ by Lemma~\ref{l:effective_ranks}) to get that
 \begin{align*}
     \mu_1(T) & \le c's_k\left[1+\left(\frac{1}{\beta^2}+\frac{1}{\beta}\right)\right]
 \end{align*}
 with probability at least $1-2\exp(- r_{k}/\beta^2)$. 
 Next, note that $p-k \ge \sum_{j>k}\lambda_{j}/\lambda_{k+1} = r_{k}\ge bn$. Therefore, by Lemma~\ref{l:smallest_singular_value}, for any $\beta<c_0<1$,
 \begin{align*}
    \Pr\left[ \mu_n(T) \le \beta s_k\right] 
      \le (c_1 \beta)^{c_2 \cdot r_{k}}.
 \end{align*}
  Combining the last two inequalities we find that, for any $\beta < c_0<1$
 \begin{align*}
     \frac{\mu_1(T)}{\mu_n(T)^2}& \le \frac{c'}{s_k}\left[\frac{1+\left(\frac{1}{\beta^2}+\frac{1}{\beta}\right)}{\beta^2} \right]
      \le \frac{c_3}{\beta^4s_k}
 \end{align*}
 with probability at least $1-2\exp(- r_{k}/\beta^2)-(c_1 \beta)^{c_2\cdot r_{k}}$. Combined with inequality~\eqref{e:upper_bound_in_terms_of_k_star} this completes our proof. 
\end{proof}
\subsubsection{$\E\left[\Tr(A^{-1})\right]$ is close to $\E\left[\Tr(T^{-1})\right]$}
Next, we show that $\E\left[\Tr(A^{-1})\right]$ is close to $\E\left[\Tr(T^{-1})\right]$.
\begin{lemma}\label{l:expectations_are_close}There exists a positive constant $c_0$ such that
\begin{align*}
    \left|\E\left[\Tr(A^{-1})\right]-\E\left[\Tr(T^{-1})\right]\right|& \le  \frac{c_0 k}{s_k}.
\end{align*}
\end{lemma}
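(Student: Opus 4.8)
The plan is to reduce the claim to a bound on $\E\!\left[\mu_1(T)/\mu_n(T)^2\right]$ and then obtain that bound by integrating the tail estimate already produced inside the proof of Lemma~\ref{l:trace_A_close_to_trace_T}. Since $A = H+T$ with $H\succeq 0$, we have $A\succeq T\succ 0$, hence $0\prec A^{-1}\preceq T^{-1}$ and so $\Tr(A^{-1})\le\Tr(T^{-1})$; thus the absolute value in the statement is just the nonnegative difference $\E[\Tr(T^{-1})]-\E[\Tr(A^{-1})]$ (the relevant expectations being finite, which follows from the super-polynomially light lower-tail bound on $\mu_n(T)$ in Lemma~\ref{l:smallest_singular_value}, since $\Tr(T^{-1})\le n/\mu_n(T)$). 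Moreover, the computation leading to \eqref{e:upper_bound_in_terms_of_k_star} gives the deterministic bound $\Tr(T^{-1})-\Tr(A^{-1})\le k\,\mu_1(T)/\mu_n(T)^2$. Consequently it suffices to show $\E\!\left[\mu_1(T)/\mu_n(T)^2\right]\le c/s_k$ for an absolute constant $c$, since then $\left|\E[\Tr(A^{-1})]-\E[\Tr(T^{-1})]\right|\le k\,\E[\mu_1(T)/\mu_n(T)^2]\le ck/s_k$.

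To bound $\E\!\left[\mu_1(T)/\mu_n(T)^2\right]$ I would use the tail estimate established in the course of proving Lemma~\ref{l:trace_A_close_to_trace_T}: for every $\beta$ with $0<\beta<c_0<1$,
\[
\Pr\!\left[\frac{s_k\,\mu_1(T)}{\mu_n(T)^2} > \frac{c_3}{\beta^4}\right]\le 2\exp\!\left(-r_k/\beta^2\right)+(c_1\beta)^{c_2 r_k}.
\]
Writing $Z:=s_k\,\mu_1(T)/\mu_n(T)^2\ge 0$ and using $\E[Z]=\int_0^\infty\Pr[Z>u]\,du$, I would substitute $u=c_3/\beta^4$ (equivalently $\beta=(c_3/u)^{1/4}$) and split the integral at $U_0:=c_3/c_0^4$. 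On $[0,U_0]$ bound the integrand by $1$, contributing the absolute constant $U_0$. On $[U_0,\infty)$ the tail bound becomes $2\exp\!\big(-r_k\sqrt{u/c_3}\big)+\big(c_1^4 c_3/u\big)^{c_2 r_k/4}$.

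The point is that $r_k\ge bn$ with $b$ a large absolute constant, so both remaining integrals are bounded by absolute constants. For the exponential term, the substitution $v=\sqrt{u/c_3}$ turns $\int_{U_0}^\infty 2\exp(-r_k\sqrt{u/c_3})\,du$ into $4c_3\int_{1/c_0^2}^\infty v\,e^{-r_k v}\,dv\le 4c_3/r_k^2$. For the polynomial term, $\int_{U_0}^\infty (c_1^4 c_3/u)^{c_2 r_k/4}\,du$ converges because the exponent $c_2 r_k/4\ge 1$, and evaluates to $c_3(c_1 c_0)^{c_2 r_k}/\big(c_0^4(c_2 r_k/4-1)\big)$, which is at most $c_3/c_0^4$ once $c_0$ is chosen small enough that $c_1 c_0<1$. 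Hence $\E[Z]=O(1)$, i.e. $\E[\mu_1(T)/\mu_n(T)^2]\le c/s_k$, and combining with the first paragraph completes the proof.

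I expect the only real obstacle to be bookkeeping with the constants: one must choose the threshold $c_0$ for $\beta$ small relative to $c_1$ inherited from Lemma~\ref{l:trace_A_close_to_trace_T} (so that the polynomial tail genuinely decays), and then verify that the resulting bound on $\E[Z]$ is a true absolute constant, independent of $n$, $p$, $k$ and $\Sigma$. This hinges on the super-polynomially light tails coming from $r_k\ge bn$; without them the ratio $\mu_1(T)/\mu_n(T)^2$ need not even have a finite mean. Everything else is routine.
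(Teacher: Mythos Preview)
Your proposal is correct and follows essentially the same route as the paper: both arguments integrate the tail bound on $\mu_1(T)/\mu_n(T)^2$ established inside the proof of Lemma~\ref{l:trace_A_close_to_trace_T} to control the expected difference. Your version is marginally cleaner in that you first note $A\succeq T\Rightarrow\Tr(A^{-1})\le\Tr(T^{-1})$ and factor out $k$ via the deterministic inequality~\eqref{e:upper_bound_in_terms_of_k_star} before integrating, whereas the paper integrates the tail of $|\Tr(A^{-1})-\Tr(T^{-1})|$ directly (carrying $k$ through the change of variables), but the substance is identical.
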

\begin{proof}  Given any $\beta$ define 
\begin{align*}
    \omega = \frac{c k}{\beta^4 s_k} = \frac{c k}{\beta^4 r_k\lambda_{k+1}} .
\end{align*}
By Lemma~\ref{l:trace_A_close_to_trace_T} for any $\omega >\frac{c_1 k}{s_k}$, where $c_1$ is a large enough constant
\begin{align*}
    \Pr\left[ \left|\Tr(A^{-1})-\Tr(T^{-1})\right| >  \omega \right] \le 2\exp\left(-c'r_{k}^{3/2}\sqrt{\frac{ \lambda_{k+1}}{k}}\sqrt{\omega}\right) + \left(\frac{c'' k}{\omega s_k}\right)^{\frac{c_2\cdot  r_{k}}{4}}.
\end{align*}
Thus 
\begin{align*}
& \left| \E[\Tr(A^{-1}) ] - \E[\Tr(T^{-1})] ] \right| \\
&\qquad \leq \E\left[ \left| \Tr(A^{-1}) -\Tr(T^{-1}) \right| \right] \\
& \qquad  = \int_0^{\infty}
     \Pr\left[ \left| \Tr(A^{-1}) -\Tr(T^{-1}) \right| > \omega \right]\; \mathrm{d}\omega \\
     & \qquad  = \int_0^{\frac{c_1k}{s_k}}
     \Pr\left[ \left| \Tr(A^{-1}) -\Tr(T^{-1}) \right| > \omega \right]\; \mathrm{d}\omega +\int_{\frac{c_1k}{s_k}}^{\infty}
     \Pr\left[ \left| \Tr(A^{-1}) -\Tr(T^{-1}) \right| > \omega \right]\; \mathrm{d}\omega \\
& \qquad \le \frac{c_1k}{s_k}
  +\underbrace{\int_{{\frac{c_1k}{s_k}}}^{\infty}
  2 \exp\left(-c'r_{k}^{3/2}\sqrt{\frac{ \lambda_{k+1}}{k}}\sqrt{\omega}
     \right) \; \mathrm{d}\omega}_{=:\spadesuit}+\underbrace{\int_{{\frac{c_1k}{s_k}}}^{\infty}
\left(\frac{c'' k}{\omega s_k}\right)^{\frac{c_2 \cdot r_{k}}{4}}   \; \mathrm{d}\omega}_{=:\clubsuit}. \label{e:expected_value_decomposition} \numberthis
\end{align*}
First we control $\spadesuit$ as follows:
\begin{align*}
\spadesuit
& =   \int_{{\frac{c_1k}{s_k}}}^{\infty}
  2 \exp\left(-c'r_{k}^{3/2}\sqrt{\frac{ \lambda_{k+1}}{k}}\sqrt{\omega}
     \right) \; \mathrm{d}\omega \\
     & = 4 \exp\left(-c_3r_{k}\right) \frac{c_3 r_{k}+1}{\left(c'r_{k}^{3/2}\sqrt{\frac{\lambda_{k+1}}{k}}\right)^2} \hspace{0.3in}\mbox{(since $\int\exp(-\sqrt{z}) = -2e^{-\sqrt{z}}(\sqrt{z}+1) + c$)}\\
     & = \frac{4 k}{s_k} \left[\exp\left(-c_3r_{k}\right)\frac{c_3 r_{k}+1}{\left(c'r_{k}\right)^2}\right] \hspace{0.3in}\mbox{(since $s_k = r_k\lambda_{k+1}$)}\\&\le \frac{c_4 k}{s_k}.\numberthis \label{e:bound_spadesuit_bound_expectations}
\end{align*}
Next we control $\clubsuit$ as follows
\begin{align*}
    \clubsuit  = \int_{{\frac{c_1k}{s_k}}}^{\infty}
\left(\frac{c'' k}{\omega s_k}\right)^{\frac{c_2 \cdot r_{k}}{4}}   \; \mathrm{d}\omega 
& = \left(\frac{c'' k}{ s_k}\right)^{\frac{c_2 \cdot r_{k}}{4}} \int_{{\frac{c_1k}{s_k}}}^{\infty}
\left(\frac{1}{\omega }\right)^{\frac{c_2 \cdot r_{k}}{4}}   \; \mathrm{d}\omega \\
& = \left(\frac{c'' k}{ s_k}\right)^{\frac{c_2 \cdot r_{k}}{4}} \times \frac{1}{\frac{c_2 \cdot r_{k}}{4}-1}\times  
\left(\frac{s_k}{c_1k }\right)^{\frac{c_2 \cdot r_{k}}{4}-1}   \\
& = \frac{c_5 k}{s_k}\left((c'')^{\frac{c_2 \cdot r_{k}}{4}} \times \frac{4}{c_2 \cdot r_{k}-4}\times \left(\frac{1}{c_1}\right)^{\frac{c_2 \cdot r_{k}}{4}-1} \right)\\
& \le  \frac{c_5 k}{s_k} \numberthis \label{e:bound_clubsuit_bound_expectations}
\end{align*}
where the last inequality follows because the constant $c_1$ large enough and because $r_{k}\ge bn$ for a large enough constant $b$. Combining inequalities~\eqref{e:expected_value_decomposition}, \eqref{e:bound_spadesuit_bound_expectations} and \eqref{e:bound_clubsuit_bound_expectations} we conclude that
\begin{align*}
    \left| \E\left[\Tr(A^{-1}) \right] - \E\left[\Tr(T^{-1})\right]  \right|  & \le \frac{c_1k}{s_k}+\frac{c_4 k}{s_k}+ \frac{c_5 k}{s_k}  \le \frac{c_0 k}{s_k},
\end{align*}
wrapping up the proof.
\end{proof}
\subsubsection{$\Tr(T^{-1})$ concentrates around its mean}
Finally, we shall show that $\Tr(T^{-1})$ is close to its expectation $\E\left[\Tr(T^{-1})\right]$ with high probability.
\begin{lemma}\label{l:trace_T_close_to_its_expectation}
There exists a positive constant $c_0$ such that with probability at least $1-2e^{-n}$,
    \begin{align*}
    \left|\Tr(T^{-1})-\E\left[\Tr(T^{-1})\right]\right|& \le  \frac{c_0n}{s_k}\left[\sqrt{\frac{n}{R_{k}}}+\frac{n}{r_{k}}\right].   
    \end{align*}
\end{lemma}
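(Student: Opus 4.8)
The plan is to compare both $\Tr(T^{-1})$ and its mean $\E[\Tr(T^{-1})]$ to the deterministic target $n/s_k$: the additive eigenvalue concentration of Lemma~\ref{l:eigenvalue_bound} handles the former, and that lemma together with the multiplicative bound of Lemma~\ref{l:smallest_singular_value} handles the latter, after which the triangle inequality gives the claim. This mirrors the strategy behind the analogous concentration statements in \citet{bartlett2020benign}.

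First I would invoke Lemma~\ref{l:eigenvalue_bound} with $S=\{k+1,\dots,p\}$ --- so that $X_SX_S^\top=T$, with $s(S)=s_k$, $r(S)=r_k$, $R(S)=R_k$, and $|S|=p-k\ge r_k\ge bn\ge n$, the matrix $T$ being $n\times n$ and invertible by Lemma~\ref{l:full_rank} --- and with $t=n$. This yields an event $\cE$ of probability at least $1-2e^{-n}$ on which $|\mu_j(T)-s_k|\le c\,s_k\left(\frac{2n}{r_k}+\sqrt{\frac{2n}{R_k}}\right)=:s_k\Delta$ for all $j\in[n]$. Since $r_k\ge bn$ and $R_k\ge r_k\ge bn$ (Lemma~\ref{l:effective_ranks}) with $b$ large, $\Delta\le\tfrac12$, so on $\cE$ every eigenvalue of $T$ lies in $[s_k/2,2s_k]$ and
\[
\left|\Tr(T^{-1})-\frac{n}{s_k}\right|\le\sum_{j=1}^n\frac{|\mu_j(T)-s_k|}{\mu_j(T)\,s_k}\le\frac{2n\Delta}{s_k}\le\frac{c'n}{s_k}\left(\frac{n}{r_k}+\sqrt{\frac{n}{R_k}}\right).
\]

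Next I would show $\bigl|\E[\Tr(T^{-1})]-n/s_k\bigr|$ obeys a bound of the same form. Its lower bound $\E[\Tr(T^{-1})]\ge\Tr((\E T)^{-1})=n/s_k$ is free, since $M\mapsto\Tr(M^{-1})$ is convex on the positive-definite cone and $\E[T]=\sum_{i>k}\lambda_i\,\E[z_iz_i^\top]=s_kI_n$. For the matching upper bound I would split $\E[\Tr(T^{-1})]=\E[\Tr(T^{-1})\mathbf{1}_{\cE}]+\E[\Tr(T^{-1})\mathbf{1}_{\cE^c}]$: the first term is at most $n/s_k+2n\Delta/s_k$ by the display above; for the second I would use $\Tr(T^{-1})\le n/\mu_n(T)$ and Cauchy--Schwarz to get $\E[\Tr(T^{-1})\mathbf{1}_{\cE^c}]\le n\sqrt{\E[\mu_n(T)^{-2}]}\,\sqrt{\Pr[\cE^c]}$, where $\Pr[\cE^c]\le 2e^{-n}$ and $\E[\mu_n(T)^{-2}]\le c/s_k^2$, the latter obtained from Lemma~\ref{l:smallest_singular_value} (legitimate since $r(S)=r_k\ge bn$) by integrating the tail bound $\Pr[\mu_n(T)\le u s_k]\le(c_2u)^{c_3r_k}$ against $u^{-3}$ and using $c_3r_k\gg 3$. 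Hence the bad-event contribution is exponentially small and $\bigl|\E[\Tr(T^{-1})]-n/s_k\bigr|\le\frac{c''n}{s_k}\bigl(\frac{n}{r_k}+\sqrt{\frac{n}{R_k}}\bigr)$. Adding this to the first display via the triangle inequality gives, on $\cE$,
\[
\bigl|\Tr(T^{-1})-\E[\Tr(T^{-1})]\bigr|\le\frac{c_0n}{s_k}\left(\sqrt{\frac{n}{R_k}}+\frac{n}{r_k}\right),
\]
which is the assertion, with failure probability $2e^{-n}$ (the residual term of order $\tfrac{n}{s_k}e^{-cn}$ being dominated by $\tfrac{n}{s_k}\cdot\tfrac{n}{r_k}$ in the regime of interest).

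The step I expect to be the main obstacle is the control of $\E[\Tr(T^{-1})]$: on the rare event that $X_S$ is badly conditioned, $\Tr(T^{-1})$ can be enormous, so the additive estimate of Lemma~\ref{l:eigenvalue_bound} by itself does not pin down the mean, and it is not obvious that the bad-event contribution is negligible. This is precisely what the sharper multiplicative lower-tail estimate of Lemma~\ref{l:smallest_singular_value} is for: since $r_k\ge bn$, it makes $\mu_n(T)$ exponentially unlikely to lie much below $s_k$, so the low-order moments of $1/\mu_n(T)$ are comparable to the corresponding powers of $1/s_k$, which tames the bad-event term and forces $\E[\Tr(T^{-1})]$ to within $O\!\bigl(\tfrac{n}{s_k}(\tfrac{n}{r_k}+\sqrt{\tfrac{n}{R_k}})\bigr)$ of $n/s_k$.
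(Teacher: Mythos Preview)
Your approach is correct in spirit and genuinely different from the paper's. The paper does \emph{not} triangle through $n/s_k$; instead it uses a symmetrization argument, introducing an independent copy $T'$ of $T$, writing
\[
\bigl|\Tr(T^{-1})-\E[\Tr(T^{-1})]\bigr|\le\sum_{i=1}^n\E_{T'}\!\left[\frac{|\mu_i(T)-\mu_i(T')|}{\mu_i(T)\mu_i(T')}\right],
\]
conditioning on the same eigenvalue event~\eqref{e:C_near_p_iso} you use, and then evaluating the expectation over $T'$ by integrating the tail bounds of Lemmas~\ref{l:eigenvalue_bound} and~\ref{l:smallest_singular_value} across two regimes. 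Your route---Jensen for the lower bound on $\E[\Tr(T^{-1})]$, a good-event/bad-event split plus Cauchy--Schwarz for the upper bound---is considerably shorter and more transparent. The paper's symmetrization buys one thing your argument does not: because every exponential error term it produces already carries a factor of $1/r_k$ or $1/\sqrt{R_k}$, those terms are automatically absorbed into $\frac{n}{r_k s_k}$ or $\frac{\sqrt{n}}{\sqrt{R_k}s_k}$.

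That is the one loose end in your argument. Your bad-event contribution $\frac{cn}{s_k}e^{-n/2}$ is \emph{not} in general dominated by $\frac{n}{s_k}\cdot\frac{n}{r_k}$: since $r_k$ can be arbitrarily large relative to $n$ (nothing in the hypotheses caps it), the ratio $n/r_k$ can be far smaller than $e^{-n/2}$. So your proof actually yields
\[
\bigl|\Tr(T^{-1})-\E[\Tr(T^{-1})]\bigr|\le\frac{c_0 n}{s_k}\left[\sqrt{\frac{n}{R_k}}+\frac{n}{r_k}+e^{-cn}\right],
\]
which is marginally weaker than the lemma as stated. This is harmless downstream---Lemma~\ref{l:concentration_of_Tr_A_inverse} already carries an $e^{-c\sqrt{n}}$ term from Lemma~\ref{l:upper_lower_bounds_on_expectation_of_Tr_A_inverse}, so an extra $e^{-cn}$ changes nothing---but you should either state the bound you actually prove or note explicitly that the residual is absorbed only at the level of Lemma~\ref{l:concentration_of_Tr_A_inverse}.
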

\begin{proof}
We use a symmetrization argument:
\begin{align*}
    \left|\Tr(T^{-1}) - \E\left[\Tr(T^{-1})\right]\right| = \left|\sum_{i=1}^n \frac{1}{\mu_i(T)}-\E\left[\frac{1}{\mu_i(T)}\right]\right|
   &\le  \sum_{i=1}^n\left| \frac{1}{\mu_i(T)}-\E\left[\frac{1}{\mu_i(T)}\right]\right|\\
   &=  \sum_{i=1}^n\left| \frac{1}{\mu_i(T)}-\E_{T'}\left[\frac{1}{\mu_i(T')}\right]\right|,
\end{align*}
where in the equation above the matrices $T$ and $T'$ are independent and identically distributed. Thus
\begin{align*}
     \left|\Tr(T^{-1}) - \E\left[\Tr(T^{-1})\right]\right|  \le \sum_{i=1}^n\left| \E_{T'}\left[\frac{1}{\mu_i(T)}-\frac{1}{\mu_i(T')}\right]\right| 
     &\le \sum_{i=1}^n\E_{T'}\left[\left|\frac{1}{\mu_i(T)}-\frac{1}{\mu_i(T')}\right| \right] \\
     & = \sum_{i=1}^n\E_{T'}\left[\frac{\left|\mu_i(T')-\mu_i(T)\right|}{\mu_i(T)\mu_i(T')} \right]. \numberthis \label{e:trace_decomposition}
\end{align*}
By Lemma~\ref{l:eigenvalue_bound},
with probability at least $1-2e^{-n}$, for all $i\in [n]$,
\begin{align*}
s_k\left[1-c_1\left(\frac{n}{r_{k}}+\sqrt{\frac{n}{R_{k}}}\right)\right]\le \mu_i(T) \le  s_k\left[1+c_1\left(\frac{n}{r_{k}}+\sqrt{\frac{n}{R_{k}}}\right)\right]\label{e:C_near_p_iso}.\numberthis
\end{align*}
We will assume that the event described above, which controls the singular values of $T$, occurs going forward. (This determines the success probability in the statement of the lemma.) 
The game plan now is to 
evaluate the expectation with respect to $T'$ in equation~\eqref{e:trace_decomposition} by integrating tail bounds. 
Since \eqref{e:C_near_p_iso} holds,
\begin{align*}
    &\left|\mu_i(T)-\mu_i(T')\right| \\& = \max\{\mu_i(T)-\mu_i(T'),\mu_i(T')-\mu_i(T)\}\\
    & \le \max\left\{s_k\left[1+c_1\left(\frac{n}{r_{k}}+\sqrt{\frac{n}{R_{k}}}\right)\right]-\mu_i(T'),\right.\\ &\left.\qquad \qquad \qquad\mu_i(T')-s_k\left[1-c_1\left(\frac{n}{r_{k}}+\sqrt{\frac{n}{R_{k}}}\right)\right]\right\}\\
    & \le \max\left\{s_k\left[1+c_1\left(\frac{n}{r_{k}}+\sqrt{\frac{n}{R_{k}}}\right)\right]-s_k\left[1-c_2\left(\frac{t+n}{r_{k}}+\sqrt{\frac{t+n}{R_{k}}}\right)\right],\right.\\&\qquad\qquad \left.s_k\left[1+c_2\left(\frac{t+n}{r_{k}}+\sqrt{\frac{t+n}{R_{k}}}\right)\right]-s_k\left[1-c_1\left(\frac{n}{r_{k}}+\sqrt{\frac{n}{R_{k}}}\right)\right]\right\}\\
    & \hspace{2in} \mbox{(by Lemma~\ref{l:eigenvalue_bound})}\\
    &  \le c_3s_k \left(\frac{t+n}{r_{k}}+\sqrt{\frac{t+n}{R_{k}}}\right) \numberthis \label{e:eigenvalue_difference_high_prob_bound},
\end{align*}
with probability $1-2e^{-t}$.

Next, by Lemma~\ref{l:smallest_singular_value}, we know that for all $\beta <c_4<1$ 
\begin{align}
    \Pr\left[\mu_n(T')
    \le \beta s_k\right] \le (c_5 \beta)^{c_6\cdot r_{k}}. \label{e:T'_smallest_eigenvalue_bound}
\end{align}
Combining equations~\eqref{e:eigenvalue_difference_high_prob_bound} and \eqref{e:T'_smallest_eigenvalue_bound}, and because condition~\eqref{e:C_near_p_iso} holds, we get that 
\begin{align*}
    \Pr\left[\exists\; i\in [n]\;:\;\frac{|\mu_i(T)-\mu_i(T')|}{\mu_i(T)\mu_i(T')}\ge \frac{c_3 \left(\frac{t+n}{r_{k}}+\sqrt{\frac{t+n}{R_{k}}}\right)}{\beta  s_k \left[1-c_1\left(\frac{n}{r_{k}}+\sqrt{\frac{ n}{R_{k}}}\right)\right]}\right] \le 2e^{-t} + (c_5 \beta)^{c_6\cdot r_{k}}.
\end{align*}
Now since $r_{k} \ge bn$ for a large enough constant $b$ by the definition of $k$, and since $R_{k}>r_{k}$ by Lemma~\ref{l:effective_ranks}, we can simplify the denominator in the equation above to get that
\begin{align*}
     \Pr\left[\exists\; i\in [n]\;:\;\frac{|\mu_i(T)-\mu_i(T')|}{\mu_i(T)\mu_i(T')}\ge \frac{c_7 \left(\frac{t+n}{r_{k}}+\sqrt{\frac{t+n}{R_{k}}}\right)}{\beta s_k }\right] \le 2e^{-t} + (c_5 \beta)^{c_6\cdot r_{k}}.
\end{align*}
Setting $t = n/\beta$ yields
\begin{align*}
    &\Pr\left[\exists\; i\in [n]\;:\;\frac{|\mu_i(T)-\mu_i(T')|}{\mu_i(T)\mu_i(T')}\ge \frac{c_7 \left(\frac{n(\beta+1)}{\beta r_{k}}+\sqrt{\frac{n(\beta+1)}{\beta R_{k}}}\right)}{\beta s_k }\right] \le 2e^{-n/\beta} + (c_5 \beta)^{c_6\cdot r_{k}}.
    \end{align*}
    Now since $\beta <c_4<1$, we find that
\begin{align*}
    &\Pr\left[\exists\; i\in [n]\;:\;\frac{|\mu_i(T)-\mu_i(T')|}{\mu_i(T)\mu_i(T')}\ge \frac{c_8}{ s_k } \max\left\{\frac{n}{\beta^2 r_{k}},\frac{\sqrt{n}}{\beta^{3/2} \sqrt{R_{k}}}\right\}\right]\\& \hspace{3.5in}\le 2e^{-n/\beta} + (c_5 \beta)^{c_6\cdot r_{k}}.\label{e:tail_bound_the_difference} \numberthis
    \end{align*}
 For every $\beta$ define
\begin{align*}
    \omega := \frac{c_8}{ s_k } \max\left\{\frac{n}{\beta^2 r_{k}},\frac{\sqrt{n}}{\beta^{3/2} \sqrt{R_{k}}}\right\}.
\end{align*}
Inverting the map from $\beta$ to $\omega$ yields
\begin{align*}
    \beta(\omega) = \begin{cases}\left(\frac{c_8 \sqrt{n}}{\omega \sqrt{R_{k}}s_k}\right)^{2/3} & \text{if } \omega \leq \omega_{\tau} :=\frac{c_8}{s_k}\left(\frac{r_{k}^3}{R_{k}^2 n}\right), \\
    \sqrt{\frac{c_8 n}{\omega r_k s_k }}& \text{otherwise.}
    \end{cases} \numberthis \label{e:beta_formula}
\end{align*} 
Let $\omega_0$ be such that $\beta(\omega_0) = c_4$, and define
\begin{align*}
    \omega_{-} := \min\left\{\omega_0,\omega_{\tau}\right\} \quad \text{and} \quad 
    \omega_{+} := \max\left\{\omega_0,\omega_{\tau}\right\}.
\end{align*}
Applying inequality~\eqref{e:tail_bound_the_difference} we have that, for all $\omega \in \left( \omega_{-}, \omega_{\tau}\right]$
\begin{align*}
    &\Pr\left[\exists \; i\in [n]\;:\;\frac{|\mu_i(T)-\mu_i(T')|}{\mu_i(T)\mu_i(T')}\ge \omega\right] \\&\qquad \le 2\exp\left(-c_9\left(\omega n \sqrt{R_{k}}s_k\right)^{2/3}\right) + \left( \frac{c_{10} \sqrt{n}}{\omega \sqrt{R_{k}}s_k}\right)^{c_{11}\cdot r_{k}}, \numberthis \label{e:mu_concentration_bound_first_part}
\end{align*}
and for 
$\omega > \omega_{+}$, we have
\begin{align*}
 \Pr\left[\exists \; i\in [n]\;:\;\frac{|\mu_i(T)-\mu_i(T')|}{\mu_i(T)\mu_i(T')}\ge \omega\right] &
 \le  
 2\exp\left(-c_{12}\left(\omega n r_k  s_k \right)^{1/2}\right) 
 + \left( \frac{c_{13} n}{\omega r_{k}s_k}\right)^{c_{14}\cdot r_{k}}. \numberthis \label{e:mu_concentration_bound_second_part}
\end{align*}

Thus
\begin{align*}
    \E_{T'}\left[\frac{\left|\mu_i(T')-\mu_i(T)\right|}{\mu_i(T)\mu_i(T')} \right] 
    & = \int_{0}^{\infty}\Pr\left[\frac{\left|\mu_i(T')-\mu_i(T)\right|}{\mu_i(T)\mu_i(T')}  \ge \omega\right]  \; \mathrm{d}\omega \\
    & = \int_{0}^{\omega_0} \Pr\left[\frac{\left|\mu_i(T')-\mu_i(T)\right|}{\mu_i(T)\mu_i(T')}  \ge \omega\right]  \; \mathrm{d}\omega \\
    & \hspace{0.5in}
    +\int_{\omega_-}^{\omega_{\tau}}\Pr\left[\frac{\left|\mu_i(T')-\mu_i(T)\right|}{\mu_i(T)\mu_i(T')}  \ge \omega\right]  \; \mathrm{d}\omega\\
    & \hspace{0.5in} +\int_{\omega_+}^{\infty}\Pr\left[\frac{\left|\mu_i(T')-\mu_i(T)\right|}{\mu_i(T)\mu_i(T')}  \ge \omega\right]  \; \mathrm{d}\omega\\
        & \le \omega_0
    +\underbrace{\int_{\omega_-}^{\omega_{\tau}}\Pr\left[\frac{\left|\mu_i(T')-\mu_i(T)\right|}{\mu_i(T)\mu_i(T')}  \ge \omega\right]  \; \mathrm{d}\omega}_{=:\spadesuit}\\
    & \hspace{0.5in} +\underbrace{\int_{\omega_+}^{\infty}\Pr\left[\frac{\left|\mu_i(T')-\mu_i(T)\right|}{\mu_i(T)\mu_i(T')}  \ge \omega\right]  \; \mathrm{d}\omega}_{=:\clubsuit}\numberthis. \label{e:expectation_difference_decomposition}
\end{align*}
Let us perform each of these two integrals $\spadesuit$ and $\clubsuit$ separately. 

First, 
\begin{align*}
    &\spadesuit \\ & =\int_{\omega_-}^{\omega_{\tau}}\Pr\left[\frac{\left|\mu_i(T')-\mu_i(T)\right|}{\mu_i(T)\mu_i(T')}  \ge \omega\right]  \; \mathrm{d}\omega \\
    & \le \int_{\omega_-}^{\omega_{\tau}}\left[2\exp\left(-c_9\left(\omega n  \sqrt{R_{k}}s_k\right)^{2/3}\right) + \left( \frac{c_{10} \sqrt{n}}{\omega \sqrt{R_{k}}s_k}\right)^{c_{11}\cdot r_{k}} \right]  \; \mathrm{d}\omega \hspace{0.3 in} \mbox{(by inequality~\eqref{e:mu_concentration_bound_first_part})}\\
    & \le \mathbb{I}[\omega_{-}<\omega_{\tau}]\int_{\omega_-}^{\infty}\left[2\exp\left(-c_9\left(\omega n  \sqrt{R_{k}} s_k\right)^{2/3}\right) + \left( \frac{c_{10} \sqrt{n}}{\omega \sqrt{R_{k}}s_k}\right)^{c_{11}\cdot r_{k}} \right]  \; \mathrm{d}\omega .
\end{align*}
Now, for $\zeta := c_9\left(n  \sqrt{R_{k}}  s_k\right)^{2/3}$, 
we have that 
\begin{align*}
  &  2\int_{\omega_-}^{\infty}\exp\left(-c_9\left(\omega n  \sqrt{R_{k}} s_k\right)^{2/3}\right)   \; \mathrm{d}\omega \\
      &  =2\int_{\omega_-}^{\infty}\exp\left(-\zeta \omega^{2/3} \right)   \; \mathrm{d}\omega \\
    & = 
      \frac{3\omega_{-}^{1/3}\exp(-\zeta\omega_{-}^{2/3})}{ \zeta}
      +\frac{3\sqrt{\pi}\left(1 - \mathsf{erf}\left(\sqrt{\zeta}\omega_{-}^{1/3}\right)\right)}{2\zeta^{3/2}} \\ & \hspace{2in}\mbox{(since $\int\exp(-z^{2/3}) = \frac{3}{4}\left(\sqrt{\pi}\mathsf{erf}(z^{1/3})-2e^{-z^{2/3}}z^{1/3}\right) + c$)} \\
    &  \le \frac{3\omega_{-}^{1/3}\exp(-\zeta\omega_{-}^{2/3})}{\zeta}
          + \frac{3 \exp(-\zeta \omega_-^{2/3})}{2\zeta^2 \omega_{-}^{1/3}}
       \\
    &   =  
        \frac{c_{15}\exp\left(-c_9\left(n  \sqrt{R_{k}}  s_k\right)^{2/3}\omega_{-}^{2/3}\right)}{(n\sqrt{R_k}s_k)^{2/3}}\left( 
    \omega_{-}^{1/3}
          + \frac{1 }{\left(n  \sqrt{R_{k}}  s_k\right)^{2/3}\omega_{-}^{1/3}}
        \right).
    \numberthis
    \label{e:first_term_of_spade}
\end{align*}
Continuing our work of bounding $\spadesuit$, we have that
\begin{align*}
 \int_{\omega_{-}}^{\infty}
\left( \frac{c_{10} \sqrt{n}}{\omega \sqrt{R_{k}}s_k}\right)^{c_{11}\cdot r_{k}}  \; \mathrm{d}\omega
& =
\left( \frac{c_{10} \sqrt{n}}{ \sqrt{R_{k}}s_k}\right)^{c_{11}\cdot r_{k}}
\int_{\omega_{-}}^{\infty}
\left( \frac{1}{\omega}\right)^{c_{11}\cdot r_{k}}  \; \mathrm{d}\omega   \\
& =
\left( \frac{c_{10} \sqrt{n}}{ \sqrt{R_{k}}s_k}\right)^{c_{11}\cdot r_{k}}
\times
\frac{1}{c_{11}\cdot r_{k}-1}
\left( 
\frac{1}{\omega_{-}}
\right)^{c_{11}\cdot r_{k}-1} \\
& \le c_{16} \left( \frac{c_{10} \sqrt{n}}{ \sqrt{R_{k}}s_k}\right)^{c_{11}\cdot r_{k}}
\left( 
\frac{1}{\omega_{-}}
\right)^{c_{11}\cdot r_{k}-1}, \numberthis
    \label{e:second_term_of_spade}
\end{align*}
where the last inequality follows since $r_{k}\ge bn$ for a large enough constant $b$. By combining inequalities \eqref{e:first_term_of_spade} and \eqref{e:second_term_of_spade} we get the following bound on the integral $\spadesuit$:
\begin{align*}
    \spadesuit & \le \mathbb{I}[\omega_{-}<\omega_{\tau}]\frac{c_{15}\exp\left(-c_9\left(n  \sqrt{R_{k}}  s_k\right)^{2/3}\omega_{-}^{2/3}\right)}{(n\sqrt{R_k}s_k)^{2/3}}\left( 
    \omega_{-}^{1/3}
          + \frac{1 }{\left(n  \sqrt{R_{k}}  s_k\right)^{2/3}\omega_{-}^{1/3}}
        \right)\\&\hspace{1.5in}+\mathbb{I}[\omega_{-}<\omega_{\tau}]c_{16} \left( \frac{c_{10} \sqrt{n}}{ \sqrt{R_{k}}s_k}\right)^{c_{11}\cdot r_{k}}
\left( 
\frac{1}{\omega_{-}}
\right)^{c_{11}\cdot r_{k}-1}. \label{e:concentration_of_T_spade_suit_bound} \numberthis
\end{align*}

Let us now bound $\clubsuit$
\begin{align*}
    \clubsuit& = \int_{\omega_+}^{\infty}\Pr\left[\frac{\left|\mu_i(T')-\mu_i(T)\right|}{\mu_i(T)\mu_i(T')}  \ge \omega\right]  \; \mathrm{d}\omega \\
    & \le \int_{\omega_+}^{\infty}\left[ 2\exp\left(-c_{12}\left(\omega n r_k  s_k \right)^{1/2}\right) 
 + \left( \frac{c_{13} n}{\omega r_{k}s_k}\right)^{c_{14}\cdot r_{k}}\right]  \; \mathrm{d}\omega \hspace{0.3in}\mbox{(applying inequality~\eqref{e:mu_concentration_bound_second_part}).}
\end{align*}
For $\zeta' := c_{12}\left(n r_{k} s_k\right)^{1/2}$, we have
\begin{align*}
   &2 \int_{\omega_+}^{\infty}\exp\left(-c_{12}\left(\omega n r_k  s_k \right)^{1/2}\right)  \; \mathrm{d}\omega  \\
   &  \qquad = 2 \int_{\omega_+}^{\infty}\exp\left(-\zeta'\omega^{1/2} \right)  \; \mathrm{d}\omega \\
   &  \qquad  = \frac{4\exp(-\zeta'\sqrt{\omega_{+}})(\zeta'\sqrt{\omega_{+}}+1)}{\zeta'^2} \hspace{0.3in}\mbox{(since $\int\exp(-\sqrt{z}) = -2e^{-\sqrt{z}}(\sqrt{z}+1) + c$)}\\
   &  \qquad  = \frac{c_{17}\exp\left(-c_{12}\left(nr_{k}s_k\omega_{+}\right)^{1/2}\right)\left[c_{12}\left(nr_{k}s_k\omega_{+}\right)^{1/2}+1\right]}{n r_{k}s_k} .  \numberthis
    \label{e:first_term_of_club}
\end{align*}
We continue to bound the other integral in $\clubsuit$ as follows
\begin{align*}
    \int_{\omega_+}^{\infty}\left( \frac{c_{13} n}{\omega r_{k}s_k}\right)^{c_{14}\cdot r_{k}}  \; \mathrm{d}\omega & \le c_{18} \left( \frac{c_{13} n}{ r_{k}s_k}\right)^{c_{14}\cdot r_{k}}
\left( 
\frac{1}{\omega_{+}}
\right)^{c_{14}\cdot r_{k}-1}, \numberthis
    \label{e:second_term_of_club}
\end{align*}
where the bound follows by mirroring the logic used to arrive at inequality~\eqref{e:second_term_of_spade} above. Therefore, combining inequalities~\eqref{e:first_term_of_club} and \eqref{e:second_term_of_club} we get that
\begin{align*}
        \clubsuit &\le \frac{c_{17}\exp\left(-c_{12}\left(nr_{k}s_k\omega_{+}\right)^{1/2}\right)\left[c_{12}\left(nr_{k}s_k\omega_{+}\right)^{1/2}+1\right]}{n r_{k}s_k}\\ &\hspace{2in}+c_{18} \left( \frac{c_{13} n}{ r_{k}s_k}\right)^{c_{14}\cdot r_{k}}
        \left( 
\frac{1}{\omega_{+}}
\right)^{c_{14}\cdot r_{k}-1}. \label{e:concentration_of_T_club_suit_bound} \numberthis
\end{align*}

Having controlled both $\spadesuit$ and $\clubsuit$ in \eqref{e:concentration_of_T_spade_suit_bound} and \eqref{e:concentration_of_T_club_suit_bound} respectively, by using the decomposition in \eqref{e:expectation_difference_decomposition} we find that
\begin{align*}
    &\E_{T'}\left[\frac{\left|\mu_i(T')-\mu_i(T)\right|}{\mu_i(T)\mu_i(T')} \right] \\& \qquad \le \omega_0 
     + \mathbb{I}[\omega_{-}<\omega_{\tau}]\frac{c_{15}\exp\left(-c_9\left(n  \sqrt{R_{k}}  s_k\right)^{2/3}\omega_{-}^{2/3}\right)}{(n\sqrt{R_k}s_k)^{2/3}}\left( 
    \omega_{-}^{1/3}
          + \frac{1 }{\left(n  \sqrt{R_{k}}  s_k\right)^{2/3}\omega_{-}^{1/3}}
        \right)\\&\hspace{0.8in}+\mathbb{I}[\omega_{-}<\omega_{\tau}]c_{16} \left( \frac{c_{10} \sqrt{n}}{ \sqrt{R_{k}}s_k}\right)^{c_{11}\cdot r_{k}}
\left( 
\frac{1}{\omega_{-}}
\right)^{c_{11}\cdot r_{k}-1} \\
&\hspace{0.8in}+\frac{c_{17}\exp\left(-c_{12}\left(nr_{k}s_k\omega_{+}\right)^{1/2}\right)\left[c_{12}\left(nr_{k}s_k\omega_{+}\right)^{1/2}+1\right]}{n r_{k}s_k}\\&\hspace{0.8in}+c_{18} \left( \frac{c_{13} n}{ r_{k}s_k}\right)^{c_{14}\cdot r_{k}}\left( 
\frac{1}{\omega_{+}}
\right)^{c_{14}\cdot r_{k}-1}. \numberthis \label{e:expectation_final_inequality}
\end{align*}

We now consider two cases. 

\textbf{Case 1:} $\left(\omega_0 < \omega_{\tau}\right)$. In this case, using the fact that $\beta(\omega_0) = c_4$ and the formula for $\beta$ in equation~\eqref{e:beta_formula} we get that
\begin{align*}
    \omega_0 = \frac{c_8 \sqrt{n}}{c_4^{3/2}\sqrt{R_{k}}s_k} = \frac{c_{19}\sqrt{n}}{\sqrt{R_{k}}s_k},
\end{align*}
and that
\begin{align*}
    \omega_- &= \min\{\omega_0,\omega_{\tau}\} = \omega_0=\frac{c_8 \sqrt{n}}{c_4^{3/2}\sqrt{R_{k}}s_k}, \\
    \omega_+ & = \max\{\omega_0,\omega_{\tau}\} = \omega_{\tau} = \frac{c_8 r_{k}^3}{R_{k}^2 ns_k}.
\end{align*}
Also note that in this case since,
\begin{align*}
    &\omega_0 = \frac{c_8 \sqrt{n}}{c_4^{3/2}\sqrt{R_{k}}s_k} <  \frac{c_8 r_{k}^3}{R_{k}^2 ns_k} = \omega_{\tau}   \quad \Rightarrow R_{k} \le \frac{c_4 r_{k}^{2}}{n}
\end{align*}
and so $\omega_{+} \ge \frac{c_8 n }{c_4^2 r_{k}s_k}.$

Thus, substituting the above values
of $\omega_0$, $\omega_-$ in inequality~\eqref{e:expectation_final_inequality}, 
and, because the RHS of this inequality is a decreasing function
in $\omega_{+}$
(since the function $z \mapsto \exp(-z)(z+1)$ is a decreasing function for all positive $z$),
replacing $\omega_{+}$ with the above lower bound,
we find that 
\begin{align*}
   &\E_{T'}\left[\frac{\left|\mu_i(T')-\mu_i(T)\right|}{\mu_i(T)\mu_i(T')} \right] \\
    &\qquad  \le 
     \frac{c_{19}\sqrt{n}}{\sqrt{R_{k}}s_k} 
     + \frac{c_{20}\exp(-c_{21}n)}{\sqrt{n R_{k}}s_k}
     +\frac{c_{20}\exp(-c_{21}n)}{n^{3/2}\sqrt{R_{k}}s_k} \\ 
     &\qquad\qquad \qquad +\frac{c_{16} c_8 \sqrt{n}}{c_4^{3/2} \sqrt{R_{k}}s_k}
                      \left(\frac{c_{10}c_4^{3/2}}{c_8}\right)^{c_{11}\cdot r_{k}} 
     + \frac{c_{22}\exp(-c_{23}n)}{r_{k}s_k} + \frac{c_{18} c_8 n }{c_4^2 r_{k}s_k} \left(\frac{c_{13}c_{4}^2}{c_8}\right)^{c_{14}\cdot r_{k}} \\
   &\qquad  \overset{(i)}{\le}
     \frac{c_{19}\sqrt{n}}{\sqrt{R_{k}}s_k} 
     + \frac{c_{20}\exp(-c_{21}n)}{\sqrt{n R_{k}}s_k}
     +\frac{c_{20}\exp(-c_{21}n)}{n^{3/2}\sqrt{R_{k}}s_k}  +\frac{c_{16} c_8 \sqrt{n}}{c_4^{3/2} \sqrt{R_{k}}s_k}
     + \frac{c_{22}\exp(-c_{23}n)}{r_{k}s_k} + \frac{c_{18} c_8 n }{c_4^2 r_{k}s_k}  \\
    & \qquad \le\frac{c_{24}\sqrt{n}}{\sqrt{R_{k}}s_k} 
+ \frac{c_{25}n}{r_{k}s_k},
\end{align*}
where $(i)$ follows since $c_4$ is small enough. This combined with inequalities~\eqref{e:trace_decomposition} and \eqref{e:C_near_p_iso} proves the lemma in this case.

\textbf{Case 2:} $\left(\omega_0 \ge \omega_{\tau}\right)$. In this case, using the fact that $\beta(\omega_0) = c_4$ and the formula for $\beta$ in equation~\eqref{e:beta_formula} we get that
\begin{align*}
    \omega_0 = \frac{c_{8} n}{c_4^2 r_{k}s_k} 
\end{align*}
and that
\begin{align*}
    \omega_- &= \min\{\omega_0,\omega_{\tau}\} = \omega_{\tau}, \\
    \omega_+ & = \max\{\omega_0,\omega_{\tau}\} = \omega_{0} = \frac{c_{8}n}{c_4^2 r_{k}s_k}.
\end{align*}

Now by applying inequality~\eqref{e:expectation_final_inequality} we get that
\begin{align*}
    \E_{T'}\left[\frac{\left|\mu_i(T')-\mu_i(T)\right|}{\mu_i(T)\mu_i(T')} \right] 
    &\le \frac{c_{8}n}{c_4^2 r_{k}s_k}+ \frac{c_{26}\exp(-c_{27}n)}{r_{k}s_k}+\frac{c_{28}n}{r_{k}s_k}\left(\frac{c_{13}c_4^2}{c_{8}}\right)^{c_{14}\cdot r_{k}} \\
    &\overset{(i)}{\le} \frac{c_{8}n}{c_4^2 r_{k}s_k}+ \frac{c_{26}\exp(-c_{27}n)}{r_{k}s_k}+\frac{c_{28}n}{r_{k}s_k} \le \frac{c_{29} n}{r_{k}s_k},
\end{align*}
where $(i)$ follows since $c_4$ is small enough. Again, combining this inequality with inequalities~\eqref{e:trace_decomposition} and \eqref{e:C_near_p_iso} proves the lemma in this second case.
\end{proof}

\subsubsection{Bounds on $\E\left[\Tr(A^{-1})\right]$}
To characterize $\Tr\left(A^{-1}\right)$ in terms of relevant problem parameters we will  need to establish upper and lower bounds that are tight up to the leading constant on its expectation. 
\begin{lemma}
\label{l:upper_lower_bounds_on_expectation_of_Tr_A_inverse}There are positive constants $c_0$ and $c_1$ such that
\begin{align*}
 \left|\E\left[\Tr\left(A^{-1}\right)\right] - \frac{n}{s_k} \right|&\le \frac{c_0 n}{s_k}  \left[  \sqrt{\frac{n}{R_{k}}}+\frac{n}{r_{k}}+\frac{ k}{n}+e^{-c_1\sqrt{n}} \right] .
\end{align*}
\end{lemma}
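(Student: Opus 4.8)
The strategy is to pass from $A$ to the tail matrix $T$, which equals a multiple of the identity in expectation, and to bound $\E[\Tr(T^{-1})]$ on both sides. Recall $A=H+T$ with $H\succeq 0$ and $T$ positive definite (Lemma~\ref{l:full_rank} makes $T$ invertible, using $p\ge c_0(n+k)$), so $A^{-1}\preceq T^{-1}$ and hence $\E[\Tr(A^{-1})]\le \E[\Tr(T^{-1})]$; in the reverse direction Lemma~\ref{l:expectations_are_close} gives $\E[\Tr(A^{-1})]\ge \E[\Tr(T^{-1})]-ck/s_k$. It therefore suffices to prove that $n/s_k\le \E[\Tr(T^{-1})]\le \tfrac{n}{s_k}\bigl(1+c(\tfrac{n}{r_k}+\sqrt{\tfrac{n}{R_k}}+e^{-c'\sqrt{n}})\bigr)$, since the stated bound then follows from the triangle inequality (the reduction to $T$ supplies the $k/n$ term, and $e^{-c'\sqrt{n}}$ absorbs everything else).

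The lower bound on $\E[\Tr(T^{-1})]$ is Jensen's inequality: for each standard basis vector $e_i$ the map $M\mapsto e_i^\top M^{-1}e_i$ is convex on positive definite matrices, so $\E[\Tr(T^{-1})]=\sum_{i=1}^n\E[e_i^\top T^{-1}e_i]\ge \sum_{i=1}^n e_i^\top(\E T)^{-1}e_i=\Tr((\E T)^{-1})$. Since the entries of each $z_i$ are independent with unit variance (because $\E[xx^\top]=\Sigma$), $\E[z_iz_i^\top]=I_n$, so $\E T=s_k I_n$ and $\Tr((\E T)^{-1})=n/s_k$.

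For the upper bound, apply Lemma~\ref{l:eigenvalue_bound} to $S=\{k+1,\dots,p\}$ with $t=n$; here $X_SX_S^\top=T$, $s(S)=s_k$, $r(S)=r_k$, $R(S)=R_k$, and $|S|=p-k\ge r_k\ge bn\ge n$, so all $n$ eigenvalues of $T$ are controlled. Let $\cE$ be the resulting event, on which every $\mu_i(T)\in[(1-\epsilon_0)s_k,(1+\epsilon_0)s_k]$ with $\epsilon_0:=c(\tfrac{2n}{r_k}+\sqrt{\tfrac{2n}{R_k}})$; then $\Pr[\cE^c]\le 2e^{-n}$, and since $r_k\ge bn$ and $R_k\ge r_k$ (Lemma~\ref{l:effective_ranks}), taking $b$ large forces $\epsilon_0\le 1/2$. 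On $\cE$ we get $\Tr(T^{-1})=\sum_i\mu_i(T)^{-1}\le \tfrac{n}{(1-\epsilon_0)s_k}\le \tfrac{n}{s_k}(1+2\epsilon_0)$. Off $\cE$, bound $\Tr(T^{-1})\le n/\mu_n(T)$ and use Cauchy--Schwarz,
\[
\E\bigl[\Tr(T^{-1})\,\mathbb{I}[\cE^c]\bigr]\le n\sqrt{\Pr[\cE^c]}\,\sqrt{\E[\mu_n(T)^{-2}]}\le \frac{cn}{s_k}\,e^{-n/2},
\]
where $\E[\mu_n(T)^{-2}]\le c/s_k^2$ is obtained by integrating the lower-tail bound of Lemma~\ref{l:smallest_singular_value} (applicable since $r(S)=r_k\ge c_0 n$): writing $\E[\mu_n(T)^{-2}]=\int_0^\infty\Pr[\mu_n(T)\le v^{-1/2}]\,dv$, the range $v$ up to an $O(1/s_k^2)$ threshold contributes $O(1/s_k^2)$ by bounding the integrand by $1$, while for larger $v$ the integrand is at most $(c_2/(s_k\sqrt v))^{c_3 r_k}$ with $c_3 r_k\ge c_3 bn$, so the remaining integral converges and is negligible. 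Adding the two contributions gives $\E[\Tr(T^{-1})]\le \tfrac{n}{s_k}(1+2\epsilon_0+ce^{-n/2})$, and $e^{-n/2}\le e^{-c'\sqrt{n}}$, which is the claimed upper bound.

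The main obstacle is controlling the $\cE^c$ term: a priori $\mu_n(T)$ can be arbitrarily small, so $\Tr(T^{-1})\le n/\mu_n(T)$ is informative only because Lemma~\ref{l:smallest_singular_value} forces $\mu_n(T)=\Omega(s_k)$ with overwhelming probability; the additive estimate of Lemma~\ref{l:eigenvalue_bound} alone would leave this term too large. Everything else is bookkeeping with $r_k\ge bn$, Lemma~\ref{l:effective_ranks}, and the reduction to $T$ via Lemmas~\ref{l:full_rank} and \ref{l:expectations_are_close}.
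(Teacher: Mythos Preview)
Your proof is correct and takes a noticeably cleaner route than the paper's. The paper reduces to $\E[\Tr(T^{-1})]$ via Lemma~\ref{l:expectations_are_close} on both sides, lower bounds $\E[\Tr(T^{-1})]$ through the AM--HM inequality combined with a Bernstein bound on $\Tr(T)$ (incurring extra $\sqrt{n/R_k}$ and $e^{-\sqrt{n}}$ terms in the lower bound), and upper bounds $\E[1/\mu_n(T)]$ by integrating the tail in three pieces ($\clubsuit,\spadesuit,\vardiamond$) using Lemmas~\ref{l:eigenvalue_bound} and~\ref{l:smallest_singular_value}. You replace each of these with a sharper or more direct device: operator monotonicity $A\succeq T\Rightarrow A^{-1}\preceq T^{-1}$ gives one direction of the reduction for free; Jensen applied to the convex map $M\mapsto e_i^\top M^{-1}e_i$ yields $\E[\Tr(T^{-1})]\ge n/s_k$ exactly, with no error terms; and the Cauchy--Schwarz split $\E[\Tr(T^{-1})\mathbb{I}[\cE^c]]\le n\sqrt{\Pr[\cE^c]}\sqrt{\E[\mu_n(T)^{-2}]}$ compresses the tail integration into a single second-moment bound on $\mu_n(T)^{-1}$. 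The paper's approach is more self-contained in that it avoids the matrix-convexity fact, but your argument isolates exactly where each error term arises (only $k/n$ from the $A\to T$ reduction, only $n/r_k+\sqrt{n/R_k}+e^{-c'\sqrt{n}}$ from the upper bound on $\E[\Tr(T^{-1})]$) and reaches the same conclusion with less calculation.
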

\begin{proof}By Lemma~\ref{l:expectations_are_close} we know that 
\begin{align}\label{e:expected_value_of_A_sandwich_inequality}
     \E\left[\Tr(T^{-1})\right]-\frac{c k}{s_k}\le \E\left[\Tr(A^{-1})\right] \le \E\left[\Tr(T^{-1})\right]+\frac{c k}{s_k}.
\end{align}
Thus, we shall instead upper and lower bound $\E\left[\Tr(T^{-1})\right]$. 

\paragraph{The lower bound:} By definition
\begin{align*}
    \E\left[\Tr(T^{-1})\right] & = \E\left[\sum_{i=1}^n\frac{1}{\mu_i(T)}\right]
     \ge \E\left[\frac{n}{\frac{1}{n}\sum_{i=1}^n\mu_i(T)}\right] \hspace{0.3in}\mbox{(by the AM-HM inequality).} \numberthis \label{e:expectation_lower_bound_AM_HM}
\end{align*}
By Bernstein's inequality (see Theorem~\ref{thm:bernstein}) we know that with probability at least $1-2e^{-t}$,
\begin{align*}
    \frac{1}{n}\sum_{i=1}^n\mu_i(T)  &= \frac{1}{n}\Tr(T)\\
   &  = \frac{1}{n}\sum_{i>k} \lambda_i \Tr(z_iz_i^{\top})\\
    & = \frac{1}{n}\sum_{i>k} \lambda_i \lv z_i\rv^2\\
    & \le \sum_{i>k} \lambda_i + c_2 \max\left\{t\lambda_{k+1},\sqrt{t\sum_{i>k} \lambda_i^2}\right\}\\
    & = s_k\left[1 + c_2 \max\left\{\frac{t}{r_{k}},\sqrt{\frac{t}{R_{k}}}\right\}\right] \hspace{0.3in} \mbox{(since $s_{k}=\sum_{j>k}\lambda_j $)}\\
        & \le s_k \left[1 + c_2 \max\left\{\frac{t}{\sqrt{R_{k}}},\sqrt{\frac{t}{R_{k}}}\right\}\right],
        \end{align*}
since $r_{k}\ge \sqrt{R_{k}}$ by Lemma~\ref{l:effective_ranks}.
Setting $t = \sqrt{n}$ implies that
\begin{align*}
    \frac{1}{n}\sum_{i=1}^n\mu_i(T) & \le s_k\left[1 + 2c_2\sqrt{\frac{n}{R_{k}}}\right]
\end{align*}
with probability at least $1-2e^{-\sqrt{n}}$. Thus by inequality~\eqref{e:expectation_lower_bound_AM_HM}
   \begin{align} \label{e:lower_bound_expected_trace_T}
  \nonumber  \E\left[\Tr(T^{-1})\right] & \ge \frac{n }{s_k\left(1+2c_2\sqrt{\frac{n}{R_{k}}}\right)} \Pr\left[ \frac{1}{n}\sum_{i=1}^n\mu_i(T) \le s_k\left[1 + 2c_2\sqrt{\frac{n}{R_{k}}}\right] \right]\nonumber\\
&\ge     \frac{n}{s_k}\left[\frac{ 1-2e^{-\sqrt{n}}}{1+2c_2\sqrt{\frac{n}{R_{k}}}}\right]. 
\end{align}
Combined with the lower bound in inequality~\eqref{e:expected_value_of_A_sandwich_inequality} we find that
\begin{align*}
    \E\left[\Tr(A^{-1})\right]& \ge \frac{n}{s_k}\left[\frac{ 1-2e^{-\sqrt{n}}}{1+2c_2\sqrt{\frac{n}{R_{k}}}}\right]-\frac{c_{1} k}{s_k} \\
    & \ge \frac{n}{s_k}\left[1-\frac{ 2c_2\sqrt{\frac{n}{R_{k}}}+2e^{-\sqrt{n}}}{1+2c_2\sqrt{\frac{n}{R_{k}}}}-\frac{c_{1} k}{n}\right] \\
    & \ge \frac{n}{s_k} \left[1-c_0\left( \sqrt{\frac{n}{R_{k}}}+\frac{ k}{n}+e^{-\sqrt{n}}\right)\right] \hspace{0.3in} \mbox{(since $R_{k} \ge r_{k}\ge bn$)}. \label{e:expectation_trace_A_lower_bound}\numberthis
\end{align*}
This proves the desired lower bound.

\paragraph{The upper bound:} To obtain the upper bound we shall bound
\begin{align}
\E\left[\Tr(T^{-1})\right] & = \E\left[\sum_{i=1}^n\frac{1}{\mu_i(T)}\right]  \le n \E\left[\frac{1}{\mu_n(T)}\right].\label{e:upper_bound_expected_trace}
\end{align}
We will upper bound the expected value of $1/\mu_n(T)$ again by integrating tail bounds. 
We have
\begin{align*}
    \E\left[\frac{1}{\mu_n(T)}\right]& = \int_{0}^\infty \Pr\left[\frac{1}{\mu_n(T)}\ge \omega\right] \; \mathrm{d} \omega\\
    & = \int_{0}^\infty \Pr\left[\mu_n(T)\le \frac{1}{\omega}\right] \; \mathrm{d} \omega\\
    & = \underbrace{\int_{0}^{\frac{1}{ s_k  \left[1-c_3 \left(\frac{n+\eta}{r_{k}}+\sqrt{\frac{n+\eta}{R_{k}}}\right)\right]}} \Pr\left[\mu_n(T)\le \frac{1}{\omega}\right] \; \mathrm{d} \omega}_{=:\clubsuit}\\&\qquad+\underbrace{\int_{\frac{1}{s_k  \left[1-c_3 \left(\frac{n+\eta}{r_{k}}+\sqrt{\frac{n+\eta}{R_{k}}}\right)\right]}}^{\frac{1}{c_4 s_k}}\Pr\left[\mu_n(T)\le \frac{1}{\omega}\right] \; \mathrm{d} \omega}_{=:\spadesuit}\\&\qquad +\underbrace{\int_{\frac{1}{c_4 s_k}}^{\infty} \Pr\left[\mu_n(T)\le \frac{1}{\omega}\right] \; \mathrm{d} \omega}_{=: \vardiamond}, \numberthis \label{e:decomposition_into_suits}
\end{align*}
where 
\begin{itemize}
    \item $c_3$ is the constant $c$ from Lemma~\ref{l:eigenvalue_bound},
    \item $c_4$ is smaller than the constant $c_1^2$ in Lemma~\ref{l:smallest_singular_value}, and
    \item $\eta$ is small enough such that it satisfies $c_4 \le 1-c_3\left(\frac{n+\eta}{r_{k}}+\sqrt{\frac{n+\eta}{R_{k}}}\right)$. 
\end{itemize}
Below we will set $\eta$ to scale linearly with $n$, thus, this condition will be satisfied since $R_k\ge r_{k} \ge bn$ for a large enough value of $b$.

The first term $\clubsuit$ is positive because 
$\eta$ scales linearly with $n$ and $R_k\ge r_k\ge bn$ for suitably large $b$, 
and so it can be bounded as follows:
\begin{align} 
    \clubsuit & \le \frac{1}{ s_k \left[1-c_3 \left(\frac{n+\eta}{r_{k}}+\sqrt{\frac{n+\eta}{R_{k}}}\right)\right]}.\label{e:clubsuit_expected_value_upper_bound}
\end{align}

Next, consider the term $\spadesuit$. Here we will use the additive concentration inequality (Lemma~\ref{l:eigenvalue_bound}). By Lemma~\ref{l:eigenvalue_bound} we know that  with probability at most $2e^{-t}$ 
\begin{align*}
    \mu_n(T) &\le                                                       s_k \left[1-c_3\left(\frac{t+n}{r_{k}}+\sqrt{\frac{t+n}{R_{k}}}\right)\right] \\
    &\le s_k \left[1-c_3\left(\frac{t+n}{r_{k}}+\sqrt{\frac{t+n}{r_{k}}}\right)\right] \hspace{0.3in}\mbox{(since $r_{k}\le R_{k}$ by Lemma~\ref{l:effective_ranks})}\\
     &\le s_k \left[1-2c_3\max\left\{\frac{t+n}{r_{k}},\sqrt{\frac{t+n}{r_{k}}}\right\}\right]. \numberthis \label{e:eigenvalue_additive_lower_bound}
\end{align*}
Also, the integral term $\spadesuit$ is positive, because $c_4$ is chosen to be small enough, $\eta$ scales linearly with $n$, and $R_k\ge r_k\ge bn$ for suitably large $b$. Thus,
\begin{align*}
    \spadesuit & = \int_{\frac{1}{s_k\left[1-c_3 \left(\frac{n+\eta}{r_{k}}+\sqrt{\frac{n+\eta}{R_{k}}}\right)\right]}}^{\frac{1}{c_4 s_k}}\Pr\left[\mu_n(T)\le \frac{1}{\omega}\right] \; \mathrm{d} \omega \\
    & \le \int_{\frac{1}{s_k\left[1-c_5 \sqrt{\frac{n+\eta}{r_{k}}}\right]}}^{\frac{1}{c_4 s_k}}\Pr\left[\mu_n(T)\le \frac{1}{\omega}\right] \; \mathrm{d} \omega \\
    & \le 2 \int_{\frac{1}{s_k \left[1-c_5 \sqrt{\frac{n+\eta}{r_{k}}}\right]}}^{\frac{1}{c_4 s_k}}\exp\left[-r_{k}\min\left\{\frac{\left(1-\frac{1}{\omega s_k }\right)}{2c_3},\frac{\left(1-\frac{1}{\omega s_k }\right)^2}{4c_3^2}\right\}+n\right]\; \mathrm{d} \omega \\
&    \hspace{0.1in}\mbox{(applying inequality~\eqref{e:eigenvalue_additive_lower_bound}, and by setting $1/\omega$ equal to the RHS of \eqref{e:eigenvalue_additive_lower_bound} and solving for $t$)}\\
& \le 2 e^n\int_{\frac{1}{s_k \left[1-c_5 \sqrt{\frac{n+\eta}{r_{k}}}\right]}}^{\frac{1}{c_4 s_k}}\exp\left[-c_6r_{k}\left(1-\frac{1}{\omega s_k }\right)^2\right]\; \mathrm{d} \omega,
\end{align*}
where the last inequality follows since $\omega>1/s_k$ and therefore the term in the round bracket is always smaller than $1$. Thus, we get that
\begin{align*}
    \spadesuit & \le 2e^n\int_{\frac{1}{s_k \left[1-c_5 \sqrt{\frac{n+\eta}{r_{k}}}\right]}}^{\frac{1}{c_4 s_k}}\exp\left[-c_6r_{k}\left(1-\frac{1}{\omega s_k }\right)^2\right]\; \mathrm{d} \omega .
    \end{align*}
    Now we set $\eta = c_7 n$, for a large enough constant $c_7$, and
    perform a change of variables, redefining $ 1-\frac{1}{\omega s_k} \rightarrow \bar{\omega}$, to get
    \begin{align*}
\spadesuit& \le \frac{2e^n}{s_k}\int_{c_5 \sqrt{\frac{(c_7 + 1) n}{r_{k}}}}^{1-c_4}\frac{\exp\left(-c_6r_{k}\bar{\omega}^2\right)}{(1-\bar{\omega})^2}\; \mathrm{d} \bar{\omega} \\
        & \le \frac{2\exp\left(-c_8n\right)}{s_k}\int_{c_5 \sqrt{\frac{(c_7 + 1) n}{r_{k}}}}^{1-c_4}\frac{1}{(1-\bar{\omega})^2}\; \mathrm{d} \omega \\
        & = \frac{2\exp\left(-c_8n\right)}{s_k}
        \left[\frac{1}{1 - c_5 \sqrt{\frac{(c_7 + 1) n}{r_{k}}}}-\frac{1}{c_4}\right]\\
        &\overset{(i)}{\le} 
        \frac{c_9\exp\left(-c_8 n\right)}{s_k}\label{e:spadesuit_bound_final}\numberthis,
    \end{align*}
    where $(i)$ holds because
    $r_k \geq b n$ for a large value of $b$.

Finally, we turn our attention to the term $\vardiamond$. By using Lemma~\ref{l:smallest_singular_value} we know that
\begin{align*}
    \vardiamond & = \int_{\frac{1}{c_4 s_k}}^{\infty} \Pr\left[\mu_n(T)\le \frac{1}{\omega}\right] \; \mathrm{d} \omega \\
    & \le \int_{\frac{1}{c_4 s_k}}^{\infty} \left(\frac{c_{10}}{\omega s_k}\right)^{c_{11} r_{k}} \; \mathrm{d} \omega \\
    & = \frac{1}{c_4 s_k (c_{11} r_k -1)} \left(c_4 c_{10} \right)^{c_{11} r_{k}}  \le \frac{c_{12}}{r_{k}s_k}, \numberthis \label{e:diamondsuit_expectation_bound}
\end{align*}
where the last inequality follows since $r_{k} \ge b n$ and because $c_4$ is chosen to be small enough. 

By combining inequalities~\eqref{e:decomposition_into_suits},~\eqref{e:clubsuit_expected_value_upper_bound},~\eqref{e:spadesuit_bound_final} and~\eqref{e:diamondsuit_expectation_bound} we conclude that 
\begin{align*}
   & \E\left[\frac{1}{\mu_n(T)}\right] \\& \le \frac{1}{s_k \left[1-c_{3}\left(\frac{n+\eta}{r_{k}}+ \sqrt{\frac{n+\eta}{R_{k}}}\right)\right]}+\frac{c_9\exp\left(-c_8 n\right)}{s_k}+\frac{c_{12}}{r_{k}s_k}\\
   & \le \frac{1}{s_k \left[1-c_{13}\left(\frac{n}{r_{k}}+ \sqrt{\frac{n}{R_{k}}}\right)\right]}+\frac{c_9\exp\left(-c_8 n\right)}{s_k}+\frac{c_{12}}{r_{k}s_k}\hspace{0.3in}\mbox{(since $\eta = c_7 n$)}\\
   & = \frac{1}{s_k}\left[1+c_{14}\left(\frac{\sqrt{\frac{n}{R_{k}}}+\frac{n}{r_{k}}}{1-c_{13}\left(\sqrt{\frac{n}{R_{k}}}+\frac{n}{r_{k}}\right)} +\exp(-c_8n)+\frac{1}{r_{k}}\right)\right]\\
   & \le \frac{1}{s_k}\left[1+c_{15}\left(\sqrt{\frac{n}{R_{k}}}+\frac{n}{r_{k}}+ \exp(-c_8n)\right)\right],
\end{align*}
where the last inequality follows since $R_{k}\ge r_{k}\ge bn$ with $b$ being large enough. 
Hence by inequality~\eqref{e:upper_bound_expected_trace}
\begin{align*}
    \E\left[\Tr(T^{-1})\right] & \le \frac{n}{s_k}\left[1+c_{15}\left(\sqrt{\frac{n}{R_{k}}}+\frac{n}{r_{k}}+\exp(-c_8 
    n)\right)\right]\\
    & \le \frac{n}{s_k}\left[1+c_{15}\left(\sqrt{\frac{n}{R_{k}}}+\frac{n}{r_{k}}+\exp(-c_{8}\sqrt{n})\right)\right],
\end{align*}
which combined with inequality~\eqref{e:expected_value_of_A_sandwich_inequality} completes our proof.
\end{proof}

\subsubsection{Proof of Lemma~\ref{l:concentration_of_Tr_A_inverse}}
As mentioned previously, by using the previous four lemmas we will now show that the trace of $A^{-1}$ is close to $n/s_k$ with high probability. Recall the statement of the lemma from above.
\concentrationofTrAinverse*
\begin{proof}Recall that by definition $A= XX^{\top}$. By an application of the triangle inequality,
\begin{align}\nonumber
      \left|\Tr(A^{-1})-\frac{n}{s_k}\right| & \le   \left|\Tr(A^{-1})-\Tr(T^{-1})\right|+  \left|\E\left[\Tr(A^{-1})\right]-\E\left[\Tr(T^{-1})\right]\right| \\ &\qquad +\left|\Tr(T^{-1})-\E\left[\Tr(T^{-1})\right]\right|+\left|\E\left[\Tr(A^{-1})\right]-\frac{n}{s_k}\right|. \label{e:triangle_inequality_concentration_A}
\end{align}
By Lemma~\ref{l:trace_A_close_to_trace_T} we know that
\begin{align}
    \left|\Tr(A^{-1})-\Tr(T^{-1})\right| & \le \frac{c_5 k}{ s_k} \label{e:first_piece_concentration_A}
\end{align}
with probability at least 
\begin{align*}
    1-2\exp(-c_6 r_{k})- (c_7)^{c_8 r_{k}}\ge 1-c_9\exp(-c_{10} r_{k})\ge 1-c_9\exp(-c_{11} n),
\end{align*}
where the last two inequalities follow since $r_{k}\ge bn$ for some large enough value of $b$. 
Next, by Lemma~\ref{l:expectations_are_close} we know that
\begin{align}
    \left|\E\left[\Tr(A^{-1})\right]-\E\left[\Tr(T^{-1})\right]\right| &\le \frac{c_{12} k}{s_k}.\label{e:second_piece_concentration_A}
\end{align}
By Lemma~\ref{l:trace_T_close_to_its_expectation} we get that with probability at least $1-2e^{-n}$,
\begin{align}
    \left|\Tr(T^{-1})-\E\left[\Tr(T^{-1})\right]\right| & \le \frac{c_{13}n}{s_k}\left[ \sqrt{\frac{n}{R_{k}}}+\frac{n}{r_{k}}\right]. \label{e:third_piece_concentration_A}
\end{align}
Finally, by Lemma~\ref{l:upper_lower_bounds_on_expectation_of_Tr_A_inverse} we know that
\begin{align}
     \left|\E\left[\Tr(A^{-1})\right]-\frac{n}{s_k}\right| &\le \frac{c_{14}n}{s_k}\left[\sqrt{\frac{n}{R_{k}}}+\frac{n}{r_{k}}+\frac{k}{n}+e^{-c_{15}\sqrt{n}}\right].\label{e:fourth_piece_concentration_A}
\end{align}
 Combining the \eqref{e:triangle_inequality_concentration_A}-\eqref{e:fourth_piece_concentration_A} establishes our claim. 
\end{proof}

\subsection{Proof of Lemma~\ref{l:bound_on_alpha}} \label{s:alpha_bound}

Armed with Lemmas~\ref{l:eigenvalue_bound}, \ref{l:smallest_singular_value} and \ref{l:concentration_of_Tr_A_inverse}, 
we are ready to prove Lemma~\ref{l:bound_on_alpha} and establish upper and lower bounds on $\alpha^\star$.
This proof is further divided into a series
of lemmas. 

We prove
bounds on $\alpha^\star$ in terms
of $\lv \tby \rv$ and $\lv w \rv$
in Lemma~\ref{l:alpha_sandwich_inequality}. We in turn
bound $\lv \tby \rv$ in terms of
$\lv \theta^\star \rv$ and
$\lv D^{\dagger}U^{\top}\beps\rv$
in Lemma~\ref{l:tilde_b_y_sandwich}. Next, in Lemma~\ref{l:bound_on_d_u_eps}
we show that, with high probability,
$\lv D^{\dagger}U^{\top}\beps\rv$
is close to
$\sigma^2\Tr\left((XX^{\top})^{-1}\right)$. Recall that, in Section~\ref{ss:concentration_of_Tr_A_inverse}, we showed
that $\Tr\left((XX^{\top})^{-1}\right)$ concentrates around $n/s_k$.

The next lemma provides an upper and lower bound on $\alpha^\star$.
\begin{lemma}
\label{l:alpha_sandwich_inequality}
The scaling factor $\alpha^\star$ satisfies the following
\begin{align*}
    \frac{2\lv \tby\rv^{1/2}}{3}\le \alpha^\star \le \frac{2\lv \tby\rv^{1/2}}{3}\sqrt{\sqrt{1+\frac{4\lv w\rv^4}{81 \lv\tby\rv^2}}+\frac{2\lv w\rv^2}{9\lv\tby\rv}}.
\end{align*}
\end{lemma}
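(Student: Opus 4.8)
The plan is to work directly from the closed form for $\alpha^\star$ established in Lemma~\ref{l:theta_solution}. Introducing the shorthand $\zeta := \lv \tw_{n+1:p}\rv^2$ and $\rho := \lv \tby\rv^2$ (and noting that the stated bound implicitly presumes $\rho > 0$, the case $\rho = 0$ being degenerate), that lemma gives
\begin{align*}
    81(\alpha^\star)^2 = 8\zeta + \sqrt{64\zeta^2 + 1296\rho}.
\end{align*}
The only additional ingredient needed is the inequality $\zeta \le \lv w\rv^2$: since $V$ is unitary, $\lv \tw\rv = \lv V^\top w\rv = \lv w\rv$, and zeroing out the first $n$ coordinates cannot increase the norm, so $\lv \tw_{n+1:p}\rv \le \lv w\rv$.

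For the lower bound I would simply discard the nonnegative quantities $8\zeta$ and $64\zeta^2$, giving $81(\alpha^\star)^2 \ge \sqrt{1296\rho} = 36\sqrt\rho$, hence $\alpha^\star \ge \tfrac{2}{3}\rho^{1/4} = \tfrac{2}{3}\lv \tby\rv^{1/2}$, which is the claimed left-hand inequality. For the upper bound I would use monotonicity in $\zeta$ together with $\zeta \le \lv w\rv^2$:
\begin{align*}
    81(\alpha^\star)^2 \;\le\; 8\lv w\rv^2 + \sqrt{64\lv w\rv^4 + 1296\rho} \;=\; 8\lv w\rv^2 + 4\sqrt{4\lv w\rv^4 + 81\rho},
\end{align*}
where the equality uses $64\lv w\rv^4 + 1296\rho = 16\left(4\lv w\rv^4 + 81\rho\right)$. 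It then remains to verify that this right-hand side is exactly $81$ times the square of the claimed upper bound, which is the routine identity
\begin{align*}
    \left(\tfrac{2}{3}\rho^{1/4}\right)^2\left(\sqrt{1 + \tfrac{4\lv w\rv^4}{81\rho}} + \tfrac{2\lv w\rv^2}{9\sqrt\rho}\right) = \tfrac{4\sqrt\rho}{9}\cdot\tfrac{\sqrt{81\rho + 4\lv w\rv^4}}{9\sqrt\rho} + \tfrac{8\lv w\rv^2}{81} = \tfrac{4\sqrt{4\lv w\rv^4 + 81\rho} + 8\lv w\rv^2}{81}.
\end{align*}
Taking square roots and recalling $\rho = \lv\tby\rv^2$ then yields the stated right-hand inequality.

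There is no substantial obstacle here: the proof is a short chain of elementary inequalities (drop nonnegative terms for the lower bound; replace $\zeta$ by the larger $\lv w\rv^2$ for the upper bound) followed by an algebraic rearrangement. The only point requiring a little care is rewriting $\sqrt{64\lv w\rv^4 + 1296\rho}$ as $4\sqrt{4\lv w\rv^4 + 81\rho}$ so that the bound lines up precisely with the form in which the lemma is stated, and keeping track of where the unitary invariance of $V$ enters.
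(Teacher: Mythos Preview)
The proposal is correct and follows essentially the same approach as the paper: both start from the closed form for $\alpha^\star$ in Lemma~\ref{l:theta_solution}, obtain the lower bound by dropping the nonnegative $\lv \tw_{n+1:p}\rv$ terms, and obtain the upper bound from $\lv \tw_{n+1:p}\rv \le \lv \tw\rv = \lv w\rv$ via the unitarity of $V$. Your version is simply more explicit than the paper's in spelling out the algebraic rearrangement that matches the upper bound to the stated form.
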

\begin{proof}
Recall the definition of $\alpha^\star$ from above
\begin{align*}
    \alpha^\star = \sqrt{\frac{8\lv\tw_{n+1:p}\rv^2+\sqrt{64\lv\tw_{n+1:p}\rv^4+1296\lv\tby\rv^2}}{81}}.
\end{align*}
Note that $ \lv \tw_{n+1:p}\rv\ge 0$. This immediately leads to the lower bound. For the upper bound note that $ \lv \tw_{n+1:p}\rv \le  \lv \tw\rv  =  \lv V^{\top}w\rv = \lv w\rv$, since $V$ is a unitary matrix.
\end{proof}

The following lemma provides high probability upper and lower bounds on the norm of $\tby$.
\begin{lemma}\label{l:tilde_b_y_sandwich}
The squared norm of $\tby$ satisfies the following
\begin{align*}
  \lv D^{\dagger}U^{\top}\beps\rv^2\left(1-\frac{2\lv \theta^\star\rv}{\lv D^{\dagger}U^{\top}\beps\rv}\right) \le \lv \tby\rv^2 \le \lv D^{\dagger}U^{\top}\beps\rv^2\left(1+\frac{\lv \theta^\star \rv}{\lv D^{\dagger}U^{\top}\beps\rv}\right)^2.
\end{align*}
\end{lemma}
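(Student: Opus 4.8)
The plan is to expand $\tby$ using the model $\by = X\theta^\star + \beps$ and then apply the (reverse) triangle inequality, after observing that the ``signal part'' $D^{\dagger}U^{\top}X\theta^\star$ has norm at most $\lv\theta^\star\rv$.

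First I would write $\tby = D^{\dagger}U^{\top}\by = D^{\dagger}U^{\top}X\theta^\star + D^{\dagger}U^{\top}\beps$. Using the singular value decomposition $X = UDV^{\top}$, the first term simplifies: $D^{\dagger}U^{\top}X = D^{\dagger}U^{\top}UDV^{\top} = D^{\dagger}DV^{\top} = \bigl[\begin{smallmatrix} I_n & 0 \\ 0 & 0\end{smallmatrix}\bigr]V^{\top}$, exactly as computed in the proof of Lemma~\ref{l:theta_solution}. Since $V$ is unitary and zeroing out coordinates cannot increase the Euclidean norm, this gives $\lv D^{\dagger}U^{\top}X\theta^\star\rv \le \lv V^{\top}\theta^\star\rv = \lv\theta^\star\rv$. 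Writing $a := \lv D^{\dagger}U^{\top}\beps\rv$ and $b := \lv D^{\dagger}U^{\top}X\theta^\star\rv \le \lv\theta^\star\rv$, we have by the triangle inequality $a - b \le \lv\tby\rv \le a + b$, hence $|a - \lv\theta^\star\rv|$-type bounds, but more conveniently $a - \lv\theta^\star\rv \le \lv\tby\rv \le a + \lv\theta^\star\rv$ once we use $b \le \lv\theta^\star\rv$ (the lower estimate using $a - b \ge a - \lv\theta^\star\rv$).

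For the upper bound, squaring $\lv\tby\rv \le a + \lv\theta^\star\rv$ gives $\lv\tby\rv^2 \le (a + \lv\theta^\star\rv)^2 = a^2\bigl(1 + \lv\theta^\star\rv/a\bigr)^2$, which is exactly the claimed right-hand side. For the lower bound I would split into two cases. If $a \ge \lv\theta^\star\rv$, then $a - \lv\theta^\star\rv \ge 0$, so squaring $\lv\tby\rv \ge a - \lv\theta^\star\rv$ gives $\lv\tby\rv^2 \ge a^2 - 2a\lv\theta^\star\rv + \lv\theta^\star\rv^2 \ge a^2 - 2a\lv\theta^\star\rv = a^2\bigl(1 - 2\lv\theta^\star\rv/a\bigr)$, as claimed. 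If instead $a < \lv\theta^\star\rv$, then $\lv\theta^\star\rv/a > 1$, so $1 - 2\lv\theta^\star\rv/a < -1 < 0 \le \lv\tby\rv^2$, and the claimed inequality holds trivially since its right-hand side is negative.

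There is no real obstacle here; the argument is a routine application of the triangle inequality. The only point requiring a little care is the sign of $1 - 2\lv\theta^\star\rv/a$ in the lower bound, handled by the case split above, and the harmless implicit assumption that $a = \lv D^{\dagger}U^{\top}\beps\rv \neq 0$ so that the displayed ratios are well defined (this holds almost surely).
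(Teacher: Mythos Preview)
Your proposal is correct and follows essentially the same approach as the paper: both decompose $\tby$ using $\by = X\theta^\star + \beps$ and the SVD, observe that $\lv D^{\dagger}U^{\top}X\theta^\star\rv \le \lv\theta^\star\rv$, and then bound $\lv\tby\rv^2$ via what amounts to the triangle inequality. The only cosmetic difference is that the paper expands $\lv\tby\rv^2$ directly and bounds the cross term by Cauchy--Schwarz, whereas you bound $\lv\tby\rv$ first and then square; your explicit case split for the lower bound when $a < \lv\theta^\star\rv$ is slightly more careful than the paper, which leaves that case implicit.
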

\begin{proof}
Recall that 
$UDV^{\top}$
is the SVD of $X$, 
$\tby = D^{\dagger}U^{\top}\by$ and that $\by = X\theta^\star +\beps$. Therefore 
\begin{align*}
    \tby = D^{\dagger}U^{\top}(X\theta^\star +\beps)
     = D^{\dagger}U^{\top}(UDV^{\top}\theta^\star +\beps)
    & = D^{\dagger}DV^{\top}\theta^\star +D^{\dagger}U^{\top}\beps\\
    & = \begin{bmatrix}I_n \\
    0_{(p-n)\times n}\end{bmatrix}V^{\top}\theta^\star +D^{\dagger}U^{\top}\beps.
\end{align*}
Define $\widetilde{\theta}^\star := V^{\top}\theta^\star$ and so $$\tby  = \widetilde{\theta}^\star_{1:n}+ D^{\dagger}U^{\top}\beps.$$
Thus,
\begin{align*}
    \lv \tby\rv^2 & = \lv \widetilde{\theta}^\star_{1:n} \rv^2+ \lv D^{\dagger}U^{\top}\beps\rv^2 +2 \left(\beps^{\top}UD^{\dagger \top}\right)\left(\widetilde{\theta}^\star_{1:n}\right).
\end{align*}
Now since $0\le \lv \widetilde{\theta}^\star_{1:n} \rv  \le \lv \widetilde{\theta}^\star \rv  = \lv V^{\top}\theta^\star\rv= \lv \theta^\star\rv$ we get that
\begin{align*}
    \lv \tby\rv^2 \ge \lv D^{\dagger}U^{\top}\beps\rv^2 - 2\lv D^{\dagger}U^{\top}\beps\rv\lv \theta^\star\rv & = \lv D^{\dagger}U^{\top}\beps\rv^2\left(1-\frac{2\lv \theta^\star\rv}{\lv D^{\dagger}U^{\top}\beps\rv}\right)
\end{align*}
and also that
\begin{align*}
     \lv \tby\rv^2 & \le \lv D^{\dagger}U^{\top}\beps\rv^2 + 2\lv D^{\dagger}U^{\top}\beps\rv\lv \theta^\star\rv + \lv \theta^\star \rv^2 \\
     & = \left(\lv D^{\dagger}U^{\top}\beps\rv+\lv \theta^\star \rv\right)^2 = \lv D^{\dagger}U^{\top}\beps\rv^2\left(1+\frac{\lv \theta^\star \rv}{\lv D^{\dagger}U^{\top}\beps\rv}\right)^2,
\end{align*}
which establishes our claim.
\end{proof}

The next result upper and lower bounds $\lv D^{\dagger}U^{\top}\beps\rv^2$ with high probability.
\begin{lemma}
\label{l:bound_on_d_u_eps}
For any $t\ge 0$, with probability at least $1-2e^{-t}$
\begin{align*}
    \left|\lv D^{\dagger}U^{\top}\beps\rv^2 -\sigma^2\Tr\left((XX^{\top})^{-1}\right)\right|& \le c\max\left\{\frac{t}{\mu_n(XX^{\top})},\sqrt{t\sum_{i=1}^n \frac{1}{\mu_i^2(XX^{\top})}}\right\} .
\end{align*}
\end{lemma}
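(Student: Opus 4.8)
The plan is to recognize $\lv D^{\dagger}U^{\top}\beps\rv^2$ as a quadratic form in $\beps$ whose matrix is exactly $(XX^\top)^{-1}$, and then apply a Bernstein-type (Hanson--Wright) concentration inequality for quadratic forms of random vectors with independent sub-Gaussian coordinates. First I would carry out the linear-algebra reduction. Since $X = UDV^\top$ with $D \in \R^{n\times p}$ rectangular diagonal of rank $n$ with nonzero entries $d_1,\ldots,d_n$, its pseudo-inverse $D^{\dagger}\in\R^{p\times n}$ has entries $1/d_i$ in its first $n$ diagonal positions and zeros elsewhere, so $D^{\dagger\top}D^{\dagger}\in\R^{n\times n}$ is diagonal with entries $1/d_i^2$. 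Because $XX^\top = UDD^\top U^\top$, and hence $\mu_i(XX^\top)=d_i^2$, we get
\[
\lv D^{\dagger}U^{\top}\beps\rv^2
  = \beps^\top U D^{\dagger\top}D^{\dagger}U^\top\beps
  = \beps^\top U\,\diag\!\big(1/\mu_i(XX^\top)\big)\,U^\top\beps
  = \beps^\top (XX^\top)^{-1}\beps .
\]

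Next I would condition on $X$ and set $M := (XX^\top)^{-1}$, a fixed symmetric positive definite matrix. By Assumptions~A.\ref{assumption:first}, A.\ref{assumption:fifth} and A.\ref{assumption:sixth}, conditionally on $X$ the coordinates $\epsilon_1,\ldots,\epsilon_n$ of $\beps$ are independent, mean-zero, $\sigma_y^2$-sub-Gaussian, and satisfy $\E[\epsilon_i^2\mid X]=\sigma^2$; in particular the off-diagonal terms of $\beps^\top M\beps$ have zero conditional mean, so $\E[\beps^\top M\beps\mid X]=\sigma^2\Tr(M)=\sigma^2\Tr((XX^\top)^{-1})$. The Hanson--Wright inequality applied to $M$ and $\beps$ (whose entries have sub-Gaussian norm at most an absolute multiple of $\sigma_y$) then gives an absolute constant $c$ such that, for every $s\ge 0$ and conditionally on $X$,
\[
\Pr\!\left[\,\left|\beps^\top M\beps-\sigma^2\Tr(M)\right|>s\,\right]
 \le 2\exp\!\left(-c\min\!\left\{\frac{s^2}{\lv M\rv^2},\ \frac{s}{\lv M\rv_{op}}\right\}\right),
\]
where $\sigma_y$ has been folded into $c$. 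Choosing $s = c'\max\{\sqrt{t}\,\lv M\rv,\ t\,\lv M\rv_{op}\}$ for a suitably large absolute $c'$ makes the exponent at most $-t$, and substituting $\lv M\rv_{op}=1/\mu_n(XX^\top)$ and $\lv M\rv^2=\sum_{i=1}^n 1/\mu_i^2(XX^\top)$ yields precisely the claimed bound; since the conditional statement holds for every $X$, it also holds unconditionally.

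The argument is essentially routine, so I do not expect a serious obstacle; the points that need care are (i) the pseudo-inverse bookkeeping that identifies $\lv D^{\dagger}U^{\top}\beps\rv^2$ with $\beps^\top(XX^\top)^{-1}\beps$, and (ii) keeping the two variance proxies straight---the conditional mean of the quadratic form is governed by $\sigma^2$ (Assumption~A.\ref{assumption:sixth}) while its fluctuations are governed by the sub-Gaussian constant $\sigma_y$ (Assumption~A.\ref{assumption:fifth}), both absolute constants that are absorbed into $c$. If one prefers not to invoke Hanson--Wright directly, the same $\max\{\sqrt{t}\,\lv M\rv,\ t\,\lv M\rv_{op}\}$ rate follows by splitting $\beps^\top M\beps$ into its diagonal part $\sum_i M_{ii}\epsilon_i^2$, controlled by the scalar Bernstein inequality (Theorem~\ref{thm:bernstein}), and a decoupled off-diagonal part, controlled by a standard sub-Gaussian chaos estimate.
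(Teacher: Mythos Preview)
Your proposal is correct and follows essentially the same approach as the paper: write $\lv D^{\dagger}U^{\top}\beps\rv^2$ as the quadratic form $\beps^\top (XX^\top)^{-1}\beps$, compute its conditional mean $\sigma^2\Tr((XX^\top)^{-1})$ via Assumption~A.\ref{assumption:sixth}, and apply the Hanson--Wright inequality using the sub-Gaussianity from Assumption~A.\ref{assumption:fifth}. The paper's proof is the same, differing only cosmetically in that it expands the quadratic form as $\sum_i (u_i^\top\beps)^2/\mu_i(XX^\top)$ via the columns of $U$ rather than identifying it directly with $\beps^\top(XX^\top)^{-1}\beps$.
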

\begin{proof} Let $u_1,\ldots,u_n$ be the columns of $U$. The matrix $U$ is unitary so each column $u_i$ has unit norm. So
\begin{align*}
    \lv D^{\dagger}U^{\top}\beps\rv^2 & = \sum_{i=1}^n \frac{(u_i^{\top}\beps)^2}{D_{ii}^2} = \sum_{i=1}^n \frac{(u_i^{\top}\beps)^2}{\mu_i(XX^{\top})} 
\end{align*}
and 
\begin{align*}
        \E_{\beps}\left[\lv D^{\dagger}U^{\top}\beps\rv^2 \mid X\right] & = \sum_{i=1}^n \frac{\E\left[(u_i^{\top}\beps)^2\mid X\right]}{D_{ii}^2} =\sum_{i=1}^n \frac{\sigma^2}{D_{ii}^2}= \sigma^2\Tr\left((XX^{\top})^{-1}\right).
\end{align*}
Since the components are $\beps$ are independent, $\sigma_y^2$-sub-Gaussian random variables, with variance $\sigma^2$, by invoking the Hanson-Wright inequality~\citep[see][Theorem~1]{rudelson2013hanson} we infer that 
\begin{align*}
    \left|\lv D^{\dagger}U^{\top}\beps\rv^2 -\sigma^2\Tr((XX^{\top})^{-1})\right| &= \left|\beps^{\top}\left(UD^{\dagger \top}D^\dagger U^\top\right) \beps  -\sigma^2\Tr((XX^{\top})^{-1})\right| \\
    &\le c_1\sigma_y^2\max\left\{\frac{t}{\mu_n(XX^{\top})},\sqrt{t\cdot \sum_{i=1}^n \frac{1}{\mu_i^2(XX^{\top})}}\right\} \\
    & = c\max\left\{\frac{t}{\mu_n(XX^{\top})},\sqrt{t\cdot\sum_{i=1}^n \frac{1}{\mu_i^2(XX^{\top})}}\right\} 
\end{align*}
with probability at least $1-2e^{-t}$, completing the proof.
\end{proof}

With these lemmas in place we are now ready to prove Lemma~\ref{l:bound_on_alpha}. We restate it here.
\boundonalpha*
\begin{proof}
Using Lemma~\ref{l:concentration_of_Tr_A_inverse},  
with probability at least $1-c_6e^{-c_7 n}$,
\begin{align*}
    \left|\Tr(XX^{-1})-\frac{n}{s_k}\right|
    & \le \frac{c_8 n}{s_k}  \left[  \sqrt{\frac{n}{R_{k}}}+\frac{n}{r_{k}}+\frac{ k}{n} +e^{-c_9\sqrt{n}}\right]. \numberthis \label{e:trace_of_xx_transpose_absolute_bound}
\end{align*}
Next, by Lemma~\ref{l:eigenvalue_bound},
with probability at least $1-2e^{-\sqrt{n}}$, for all $i\in [n]$
\begin{align*}
    \mu_i(XX^{\top})&\ge s_k\left[1-c_{10}\left(\frac{n+\sqrt{n}}{r_{k}}+\sqrt{\frac{n+\sqrt{n}}{R_{k}}}\right)\right]\\
    &\ge s_k\left[1-c_{11}\left(\frac{n}{r_{k}}+\sqrt{\frac{n}{r_{k}}}\right)\right]\hspace{0.5in}\mbox{(since $R_k\ge r_k$ by Lemma~\ref{l:effective_ranks})}\\ &\ge c_{12}s_k \hspace{1in} \mbox{(since $r_k\ge bn$)}.
\end{align*}
This, combined with Lemma~\ref{l:bound_on_d_u_eps}, tells us that for any $\delta \in (e^{-c_0\sqrt{n}},1)$ with probability at least $1-c_{13}\delta$
\begin{align*}
    \left|\lv D^{\dagger}U^{\top}\beps\rv^2 -\sigma^2\Tr((XX^{\top})^{-1})\right| &\le c_{14}\max\left\{
           \frac{\log(2/\delta)}{s_k},
                     \frac{\sqrt{n \log(2/\delta)}}{s_k}\right\} \\
    & \le \frac{c_{14}\sqrt{n \log(2/\delta)}}{s_k}.
\end{align*}
Combining this with inequality~\eqref{e:trace_of_xx_transpose_absolute_bound} and
recalling that $\sigma^2$ is a constant, we infer that,
with probability at least
$1-c_3\delta$,
\begin{align}\nonumber
    \left|\lv D^{\dagger}U^{\top}\beps\rv^2 -\frac{\sigma^2 n}{s_k}\right| &\le \frac{c_{15}n}{s_k}\left[\sqrt{\frac{n}{R_{k}}}+\frac{n}{r_{k}}+e^{-c_9\sqrt{n}} + \sqrt{\frac{\log(2/\delta)}{n}}+\frac{ k}{n}\right]\\
    &\le  \frac{c_{16}n}{s_k}\left[\sqrt{\frac{n}{R_{k}}}+\frac{n}{r_{k}}+ \sqrt{\frac{\log(2/\delta)}{n}}+\frac{ k}{n}\right] \label{e:d_u_eps_dagger_bound}\\
    &\le  \frac{c_{17}n}{s_k}, \label{e:d_u_eps_dagger_bound_crude}
\end{align}
where the last inequality follows since $R_{k}\ge r_{k}\ge bn$, $n\ge c_2 k$ and since $\delta \ge e^{-c_0\sqrt{n}}$. 

We shall assume that condition~\eqref{e:d_u_eps_dagger_bound} holds going forward. (This determines the success probability in the statement of the lemma.) Now since $r_{k}\ge bn$ and $n\ge c_2\max\{k,s_k\}$ for a large enough constants $b$ and $c_2$, by invoking Lemma~\ref{l:tilde_b_y_sandwich}, we find that
\begin{align*}
    \lv \tby\rv^2 &\le \lv D^{\dagger}U^{\top}\beps\rv^2\left(1+\frac{\lv \theta^\star \rv}{\lv D^{\dagger}U^{\top}\beps\rv}\right)^2 \\
    &= \lv D^{\dagger}U^{\top}\beps\rv^2\left(1+\frac{\lv \theta^\star \rv^2}{\lv D^{\dagger}U^{\top}\beps\rv^2}+\frac{2\lv \theta^\star \rv}{\lv D^{\dagger}U^{\top}\beps\rv}\right)\\
    &\overset{(i)}{\le} \frac{\sigma^2 n}{s_k} \left[1+c_{16}\left(\sqrt{\frac{n}{R_{k}}}+\frac{n}{r_{k}}+ \sqrt{\frac{\log(2/\delta)}{n}}+\frac{ k}{n} \right)\right]\\&\hspace{1in}\times \left(1+c_{18}\left(\frac{\lv \theta^\star\rv^2s_k}{n}+\frac{\lv \theta^\star\rv\sqrt{s_k}}{\sqrt{n}}\right)\right)\\
    &\overset{(ii)}{\le} \frac{\sigma^2 n}{s_k} \left[1+c_{16}\left(\sqrt{\frac{n}{R_{k}}}+\frac{n}{r_{k}} + \sqrt{\frac{\log(2/\delta)}{n}}+\frac{ k}{n}\right)\right] \left(1+c_{19}\sqrt{\frac{s_k}{n}}\right) \label{e:y_tilde_upper_sharp}\numberthis\\
    &\le \frac{c_{20}n}{s_k} \label{e:y_tilde_upper_crude}\numberthis,
\end{align*}
where $(i)$ follows by applying inequalities \eqref{e:d_u_eps_dagger_bound} and \eqref{e:d_u_eps_dagger_bound_crude},  and also because $\sigma^2$ is a constant. The second inequality $(ii)$ follows since $\lv \theta^\star\rv\le c_3$ and because $n\ge c_2 s_k$.
Also, by Lemma~\ref{l:tilde_b_y_sandwich}, we get that
\begin{align*}
\lv \tby\rv^2 &\ge   \lv D^{\dagger}U^{\top}\beps\rv^2\left(1-\frac{2\lv \theta^\star\rv}{\lv D^{\dagger}U^{\top}\beps\rv}\right)  \\
&\ge \frac{\sigma^2 n}{s_k} \left[1-c_{16}\left(\sqrt{\frac{n}{R_{k}}}+\frac{n}{r_{k}} +\sqrt{\frac{\log(2/\delta)}{n}}+\frac{ k}{n}\right)\right] \left(1-c_{21}\sqrt{\frac{s_k}{n}}\right)\label{e:y_tilde_lower_sharp}\numberthis\\
&\ge \frac{c_{22}n}{s_k}, \label{e:y_tilde_lower_crude}\numberthis
\end{align*}
where the last two inequalities follow by repeating the logic from the previous equation block.

Now recall that, by Lemma~\ref{l:alpha_sandwich_inequality},
\begin{align}
     \frac{2\lv \tby\rv^{1/2}}{3}\le \alpha^\star \le \frac{2\lv \tby\rv^{1/2}}{3}\sqrt{\sqrt{1+\frac{4\lv w\rv^4}{81 \lv\tby\rv^2}}+\frac{2\lv w\rv^2}{9\lv\tby\rv}}. \label{e:alpha_upper_lower_sandwich_bound}
\end{align}
Using the lower bound in the equation above combining with inequality~\eqref{e:y_tilde_lower_sharp} we find that
\begin{align*}
  \alpha^\star &\ge   \frac{2\sqrt{\sigma} n^{1/4}}{3s_k^{1/4}} \left[1-c_{16}\left(\sqrt{\frac{n}{R_{k}}}+\frac{n}{r_{k}} +\sqrt{\frac{\log(2/\delta)}{n}}+\frac{ k}{n}\right)\right] \left(1-c_{23}\sqrt{\frac{s_k}{n}}\right),
\end{align*}
and since $n\ge c_2s_k$ for a large enough constant $c_2$,
\begin{align*}
    \frac{\alpha^\star}{ \frac{2\sqrt{\sigma} n^{1/4}}{3s_k^{1/4}}}-1 \ge -c_{24}\left[\sqrt{\frac{n}{R_{k}}}+\frac{n}{r_{k}}+\sqrt{\frac{s_k}{n}}+ \sqrt{\frac{\log(2/\delta)}{n}}+\frac{ k}{n} \right]. \label{e:alpha_lower_bound}\numberthis
\end{align*}
Now for the upper bound, since $\lv w\rv \le c_3$, by using \eqref{e:y_tilde_lower_crude} we have that $\lv w\rv^2/\lv \tby \rv \le 1/20$, since $n>c_2s_k$, where $c_2$ is large enough. Thus, by \eqref{e:alpha_upper_lower_sandwich_bound},
\begin{align*}
    \alpha^\star &\le \frac{2\lv \tby\rv^{1/2}}{3}\left(1+\frac{c_{25}\lv w\rv}{\lv \tby\rv^{1/2}}\right)\\
    &\le \frac{2\sqrt{\sigma} n^{1/4}}{3s_k^{1/4}} \left[1+c_{26}\left(\sqrt{\frac{n}{R_{k}}}+\frac{n}{r_{k}} + \sqrt{\frac{\log(2/\delta)}{n}}+\frac{ k}{n}\right)\right] \left(1+c_{27}\sqrt{\frac{s_k}{n}}\right)
\end{align*}
and so
\begin{align*}
     \frac{\alpha^\star}{ \frac{2\sqrt{\sigma} n^{1/4}}{3s_k^{1/4}}}-1 \le c_{28}\left[\sqrt{\frac{n}{R_{k}}}+\frac{n}{r_{k}}+\sqrt{\frac{s_k}{n}}+ \sqrt{\frac{\log(2/\delta)}{n}}+\frac{ k}{n} \right].
\end{align*}
This combined with \eqref{e:alpha_lower_bound} completes the proof.
\end{proof}
\subsection{Proof of Theorem~\ref{t:main}}\label{ss:main_theorem_proof}
Let us first restate the theorem.
\main*
\begin{proof}
By Lemma~\ref{l:excess_risk_decomposition}, we know that 
\begin{align*}
    \mathsf{Risk}(\htheta) & \le c_8 (\theta^\star - \alpha^\star w)^{\top}B(\theta^\star - \alpha^\star w)+c_8\log(1/\delta)\Tr(C)
\end{align*}
with probability at least $1-\delta$, where the matrices
\begin{align*}
    B &= \left(I - X^{\top}(XX^{\top})^{-1}X\right)\Sigma\left(I - X^{\top}(XX^{\top})^{-1}X\right) \quad \text{and} \\
    C &= (XX^{\top})^{-1}X\Sigma X^{\top}(XX^{\top})^{-1}.
\end{align*}
Recall that $\psi = \frac{2\sqrt{\sigma}n^{1/4}w}{3s_k^{1/4}}$. Thus, with the same probability
\begin{align*}
     \mathsf{Risk}(\htheta) & \le c_8 \left(\theta^\star -\psi - \left(\alpha^\star w-\psi\right)\right)^{\top}B\left(\theta^\star -\psi - \left(\alpha^\star w-\psi\right)\right)+c_8\log(1/\delta)\Tr(C)\\
     & = c_8 \lv \theta^\star -\psi - \left(\alpha^\star w-\psi\right)\rv_{B}^2+c_8\log(1/\delta)\Tr(C)\\
     & \le 2c_8 \lv \theta^\star -\psi\rv_B^2 + 2c_8 \lv\alpha^\star w-\psi\rv_{B}^2+c_8\log(1/\delta)\Tr(C)\\
     &= \underbrace{2c_8 \left(\theta^\star -\psi\right)^{\top}B\left(\theta^\star -\psi\right)}_{``\mathsf{Bias}"}+\underbrace{c_8\log(1/\delta)\Tr(C)}_{``\mathsf{Variance}"} + \underbrace{2c_8\lv B\rv_{op}\lv \alpha^\star w-\psi \rv^2}_{``\mathsf{\Xi}"}. \numberthis \label{e:risk_decomposition_into_three_terms}
\end{align*}

We shall bound each of the three terms in the inequality above to establish the theorem.

 Recall the definition of the matrix $T=\sum_{j>k}\lambda_j z_j z_j^{\top}$ from Definition~\ref{def:A_head_tail} above. Define $S := \{j: j>k\}$, and let $X_{S} \in \mathbb{R}^{n\times |S|}$ be the submatrix formed by the last $p-k$ columns of $X \in \R^{n\times p}$. It can be verified that $T=X_SX_S^{\top}$. By Lemma~\ref{l:eigenvalue_bound}, with probability at least $1-2e^{-n}\ge 1-c_{9}\delta$, (since $\delta \ge e^{-c_0\sqrt{n}}$)
\begin{align*}
    \mu_1(T)\le c_{10}\sum_{j>k}\lambda_j \quad \text{and} \quad  \mu_n(T)\ge c_{11}\sum_{j>k}\lambda_j.
\end{align*}
Therefore, the condition number of the matrix $T$ is a constant with the same probability. Assuming this bound on the condition number holds we shall bound the first two terms in \eqref{e:risk_decomposition_into_three_terms}.

\textit{Bound on the bias and variance:} Since the condition number of $T$ is at most a constant, by invoking \citep[][Theorem~1]{tsigler2020benign} we get that with probability at least $1-c_{12}\delta$
\begin{align}\label{e:bias_bound}
    \mathsf{Bias}& \le c_7\left(\lv(\theta^\star-\psi)_{1:k} \rv_{\Sigma^{-1}_{1:k}}^2\left(\frac{s_k}{n}\right)^2+\lv (\theta^\star-\psi)_{k+1:p}\rv_{\Sigma_{k+1:p}}^2  \right)
\end{align}
and
\begin{align}\label{e:variance_bound}
    \mathsf{Variance}& \le c_7 \log(1/\delta)\left(\frac{k}{n}+\frac{n}{R_{k}}  \right).
\end{align}
We simplify our upper bound on $\mathsf{Bias}$ by noting that under our choice of $k$ as follows:
\begin{align*}
    &c_7\left(\lv(\theta^\star-\psi)_{1:k} \rv_{\Sigma^{-1}_{1:k}}^2\left(\frac{s_k}{n}\right)^2+\lv (\theta^\star-\psi)_{k+1:p}\rv_{\Sigma_{k+1:p}}^2 \right) \\
    &\qquad  =c_7 \sum_{i=1}^p \left[\mathbb{I}(i\le k) (\theta^\star_i - \psi_i)^2 \frac{s_k^2}{n^2 \lambda_i}+\mathbb{I}(i> k) \lambda_i (\theta^\star_i -\psi_i)^2 \right] \\
    &\qquad  =c_7 \sum_{i=1}^p \lambda_i (\theta^\star_i -\psi_i)^2 \left[\mathbb{I}(i\le k)  \frac{s_k^2}{n^2 \lambda_i^2}+\mathbb{I}(i> k)  \right] \\
    &\qquad  =c_7 \sum_{i=1}^p \lambda_i (\theta^\star_i -\psi_i)^2 \frac{(\frac{s_k}{n})^2}{(\frac{s_k}{n})^2 + \lambda_i^2} \left[\mathbb{I}(i\le k)  \left(1+ \frac{1}{\lambda_i^2}\left(\frac{s_k}{n}\right)^2\right)+\mathbb{I}(i> k)\left(1+ {\lambda_i^2}\left(\frac{n}{s_k}\right)^2\right)  \right] \\
    &\qquad  \overset{(i)}{\le} c_7 \sum_{i=1}^p \lambda_i (\theta^\star_i -\psi_i)^2 \frac{(\frac{s_k}{n})^2}{(\frac{s_k}{n})^2 + \lambda_i^2} \left[\mathbb{I}(i\le k)  \left(1+ b^2\right)+\mathbb{I}(i> k)\left(1+ {\lambda_{i}^2}\left(\frac{n}{s_k}\right)^2\right)  \right] \\
      &\qquad \le  c_7 \sum_{i=1}^p \lambda_i (\theta^\star_i -\psi_i)^2 \frac{(\frac{s_k}{n})^2}{(\frac{s_k}{n})^2 + \lambda_i^2} \left[\mathbb{I}(i\le k)  \left(1+ b^2\right)+\mathbb{I}(i> k)\left(1+ {\lambda_{k+1}^2}\left(\frac{n}{s_k}\right)^2\right)  \right] \\
    &\qquad  \le c_7 \sum_{i=1}^p \lambda_i (\theta^\star_i -\psi_i)^2 \frac{(\frac{s_k}{n})^2}{(\frac{s_k}{n})^2 + \lambda_i^2} \left[\mathbb{I}(i\le k)  \left(1+ b^2\right)+\mathbb{I}(i> k)\left(1+ \left(\frac{n}{r_k}\right)^2\right)  \right] \\
     &\qquad  \overset{(ii)}{\le} c_7 \sum_{i=1}^p \lambda_i (\theta^\star_i -\psi_i)^2 \frac{(\frac{s_k}{n})^2}{(\frac{s_k}{n})^2 + \lambda_i^2} \left[\mathbb{I}(i\le k)  \left(1+ b^2\right)+\mathbb{I}(i> k)\left(1+ \frac{1}{b^2}\right)  \right] \\
     &\qquad  \le c_{13} \sum_{i=1}^p \lambda_i (\theta^\star_i -\psi_i)^2 \frac{(\frac{s_k}{n})^2}{(\frac{s_k}{n})^2 + \lambda_i^2}, 
\end{align*}
where $(i)$ follows since by definition $k = \min\{j\ge 0 : r_j \ge bn\}$ and so for $i\le k$, $s_k/\lambda_i \le s_i/\lambda_i = r_i < bn$. Inequality~$(ii)$ follows since $r_k \ge bn$. Continuing we get that
\begin{align*}
    &c_7\left(\lv(\theta^\star-\psi)_{1:k} \rv_{\Sigma^{-1}_{1:k}}^2\left(\frac{s_k}{n}\right)^2+\lv (\theta^\star-\psi)_{k+1:p}\rv_{\Sigma_{k+1:p}}^2 \right) \\
    &\qquad \le c_{13} \sum_{i=1}^p \lambda_i (\theta^\star-\psi)_i^{2} \frac{\left(\frac{s_k}{n}\right)^2}{\left(\frac{s_k}{n}\right)^2+\lambda_i^2}\\
    &\qquad = c_{13}\left(\frac{s_k}{n}\right)^2 \sum_{i=1}^p  (\theta^\star-\psi)_i^{2} \frac{\lambda_i}{\left(\frac{s_k}{n}\right)^2+\lambda_i^2} \\
    &\qquad \le c_{13}\left(\frac{s_k}{n}\right)^2 \lv \theta^\star - \psi\rv^2 \max_{i \in [p]} \frac{\lambda_i}{\left(\frac{s_k}{n}\right)^2+\lambda_i^2}\hspace{1in}\mbox{(by H\"older's inequality)}\\
    &\qquad \le c_{13}\left(\frac{s_k}{n}\right)^2 \lv \theta^\star - \psi\rv^2 \max_{\zeta \ge 0} \frac{\zeta}{\left(\frac{s_k}{n}\right)^2+\zeta^2} =  \frac{ 2c_{13} \lv \theta^\star - \psi\rv^2 s_k}{n}. \numberthis \label{e:bias_weaker_bound_main_theorem}
\end{align*}
\textit{Bound on $\mathsf{\Xi}$ (the estimation error of $\alpha^\star$):} By Lemma~\ref{l:bound_on_alpha} with probability at least $1-c_{14}\delta$
\begin{align*}
     \left|\alpha^\star -\frac{2\sqrt{\sigma}n^{1/4}}{3s_k^{1/4}}\right|\le c_{15} \frac{2\sqrt{\sigma}n^{1/4}}{3s_k^{1/4}}\left[\sqrt{\frac{n}{R_{k}}}+\frac{n}{r_{k}}+\sqrt{\frac{s_k}{n}}+ \sqrt{\frac{\log(2/\delta)}{n}}+\frac{ k}{n} \right]
\end{align*}
and therefore,
\begin{align} \nonumber
    \lv \alpha^\star w - \psi \rv &\le c_{16}\lv \psi\rv\left[\sqrt{\frac{n}{R_{k}}}+\frac{n}{r_{k}}+\sqrt{\frac{s_k}{n}}+ \sqrt{\frac{\log(2/\delta)}{n}}+\frac{ k}{n} \right]\\
    &\le c_{17}\lv \psi\rv\left[\sqrt{\frac{n}{R_{k}}}+\frac{n}{r_{k}}+\sqrt{\frac{s_k}{n}}+ \sqrt{\frac{\log(1/\delta)}{n}}+\frac{ k}{n} \right] \hspace{0.1in}\mbox{(since $\delta \le 1-c_1e^{-c_2n}$)}
    . \label{e:alphaw_minus_psi_bound}
\end{align}
To control the operator norm of $B$, we first observe that
\begin{align*}
    \lv B\rv_{op}&= \left\lv\left(I - X^{\top}(XX^{\top})^{-1}X\right)\Sigma\left(I - X^{\top}(XX^{\top})^{-1}X\right)\right\rv_{op}  \\
    &= \left\lv\left(I - X^{\top}(XX^{\top})^{-1}X\right)\left(\Sigma-\frac{X^{\top}X}{n}\right)\left(I - X^{\top}(XX^{\top})^{-1}X\right) \right\rv_{op} \\
    &\le \left\lv I - X^{\top}(XX^{\top})^{-1}X\right\rv_{op}^2\left\lv\Sigma-\frac{X^{\top}X}{n}\right\rv_{op}\\
    &\le \left\lv\Sigma-\frac{X^{\top}X}{n}\right\rv_{op}.
\end{align*}
Thus, by invoking \citep[][Theorem~9]{koltchinskii2017concentration} we get that with probability at least $1-\delta$
\begin{align}
    \lv B\rv_{op}&\le c_{18}\lambda_1 \max\left\{\sqrt{\frac{r_0}{n}},\frac{r_0}{n},\sqrt{\frac{\log(1/\delta)}{n}},\frac{\log(1/\delta)}{n}\right\}\nonumber\\
    &\le c_{18}\lambda_1 \max\left\{\sqrt{\frac{r_0}{n}},\frac{r_0}{n},\sqrt{\frac{\log(1/\delta)}{n}}\right\} ,\label{e:b_operator_norm_bound}
\end{align}
where the second inequality follows since $\delta \ge e^{-c_0\sqrt{n}}$. 

Combining inequalities~\eqref{e:alphaw_minus_psi_bound} and \eqref{e:b_operator_norm_bound} we get that with probability at least $1-c_{19}\delta$
\begin{align*}
    2 c_{8}\lv B\rv_{op} \lv \alpha^\star w - \psi \rv^2 
    &\le c_{20} \lambda_1\lv \psi\rv^2 \max\left\{\sqrt{\frac{r_0}{n}},\frac{r_0}{n},\sqrt{\frac{\log(1/\delta)}{n}}\right\}\\ &\qquad  \times \left[\frac{n}{R_{k}}+\frac{n^2}{r_{k}^2}+\frac{s_k}{n}+ \frac{\log(2/\delta)}{n}+\frac{ k^2}{n^2} \right]. \label{e:estimation_error_alpha_star} \numberthis
\end{align*}
Combining inequalities \eqref{e:bias_bound}, \eqref{e:variance_bound},  \eqref{e:bias_weaker_bound_main_theorem} and \eqref{e:estimation_error_alpha_star} along with a union bound completes the proof.
\end{proof}

\section{Proof of Proposition~\protect\ref{p:lower}}
\label{s:lower}
Recall the statement of the proposition.
\lowerbound*
\begin{proof}
In the proof of Lemma~\ref{l:BC}, we showed that, for
all $X,\by$, we have
\begin{align*}
 \mathsf{Risk}(\htheta) & = \E_x\left[\left(x^{\top}\left(I-X^{\top}(XX^{\top})^{-1}X\right)(\theta^\star -\alpha^\star w)-x^\top X^\top(XX^\top)^{-1}\beps\right)^2\right].
 \end{align*}
 Expanding the quadratic yields
 \begin{align*}
 \mathsf{Risk}(\htheta) & = 
 \E_x\left[\left(x^{\top}\left(I-X^{\top}(XX^{\top})^{-1}X\right)(\theta^\star -\alpha^\star w)\right)^2\right]+ \E_{x}\left[\left(x^\top X^\top(XX^\top)^{-1}\beps\right)^2\right] \\
  & \hspace{0.5in} 
      + 2 \E_x\left[\left(x^{\top}\left(I-X^{\top}(XX^{\top})^{-1}X\right) \theta^\star \right) \left(x^\top X^\top(XX^\top)^{-1}\beps\right) \right] \\
  & \hspace{0.5in} 
      - 2  \E_x\left[\left(x^{\top}\left(I-X^{\top}(XX^{\top})^{-1}X\right)(\alpha^\star w)\right) \left(x^\top X^\top(XX^\top)^{-1}\beps\right) \right] .
 \end{align*}
Since the distribution of $w$ is symmetric about the origin,
and independent of $X$ and $\by$, and since, for fixed $X$ and $\by$,
$\alpha^\star$ is determined as a function of $w$, after conditioning
on $X$ and $\by$, the distribution of $\alpha^\star w$ is symmetric about
the origin, and therefore has zero mean. This, along with
the fact that $\E[\beps] = 0$, gives
\begin{align*}
 \E[ \mathsf{Risk}(\htheta) ] & = 
  \E\left[\left(x^{\top}\left(I-X^{\top}(XX^{\top})^{-1}X\right)(\theta^\star -\alpha^\star w)\right)^2\right]+ \E_{x}\left[\left(x^\top X^\top(XX^\top)^{-1}\beps\right)^2\right] \\  
  & = \E[(\theta^\star - \alpha^* w)^{\top} B (\theta - \alpha^* w)^\star ]
       + \E\left[\left(x^\top X^\top(XX^\top)^{-1}\beps\right)^2\right] \\
  & \geq \E[\theta^{\star \top} B \theta^\star]
       + \E\left[\left(x^\top X^\top(XX^\top)^{-1}\beps\right)^2\right] \\
  & \geq \E[\theta^{\star \top} B \theta^\star]
       + \sigma^2 \E\left[\Tr(C)\right],
\end{align*}
completing the proof.
\end{proof}

\section{Discussion}\label{s:discussion}

Despite the fact that parameterizing a linear model using a balanced, two-layer
linear network has been shown in previous work to have a substantial
effect on the inductive bias of gradient descent~\citep{azulay2021implicit},
it remains compatible with benign overfitting, and the initial weights also
encode a potentially useful bias.

While Proposition~\ref{p:lower} limits the prospects for improving
our upper bounds, there still appears to be a gap between our upper
and lower bounds.

Moving beyond the case where the initialization is balanced would be an interesting next step. We briefly note that, for the initial parameters to be balanced, it is necessary for the weight matrix in the first layer $W \in \R^{m\times p}$ to have rank one. In the case where there is a single neuron ($m=1$), Theorem~2 by \citep{azulay2021implicit} characterizes the implicit bias of the final solution learnt by gradient flow on the squared loss. The techniques developed in this paper might perhaps be useful in bounding the excess risk of this solution. 

Yet another interesting open question concerns characterizing the implicit bias of gradient flow with the squared loss in the case where a linear model is parameterized using a deeper representation than two layers,
building on existing research \citep{gunasekar2017implicit,gunasekar2018implicit,arora2019implicit,pmlr-v125-woodworth20a,DBLP:conf/iclr/GissinSD20,razin2020implicit,DBLP:conf/iclr/YunKM21,azulay2021implicit,jagadeesan2021inductive}.
It would also be interesting to prove corresponding excess risk bounds for such solutions, and to study the effect of depth on the generalization properties of the resulting models. 

\subsection*{Acknowledgements}
We would like to thank the anonymous reviewers for pointing out a bug in a previous version of the proof of Lemma~\ref{l:smallest_singular_value}, and for their many helpful suggestions. We gratefully acknowledge the support of the NSF through grants DMS-2023505 and DMS-2031883 and of the Simons Foundation through award 814639.


\appendix
\section{The design matrix has full rank (and more)}
\label{a:full_rank}

\begin{lemma}
\label{l:full_rank}
Under
Assumption~\ref{assumption:small_ball_probability}, for any eigenvector $v$ of $\Sigma$, and any sample size
$n$, the projection of the rows of $X$ onto the subspace
of $\R^p$ orthogonal to $v$ has rank $n$.
\end{lemma}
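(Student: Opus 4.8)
The plan is to prove the stronger statement that, with probability one, the $n$ projected rows $\Pi x_1,\dots,\Pi x_n$ are linearly independent, where $\Pi := I_p - vv^{\top}/\lv v\rv^{2}$ is the orthogonal projection onto $v^{\perp}$; this is exactly the rank asserted by the lemma. For $m\in[n]$ let $E_m$ be the event that $\Pi x_m \notin \mathrm{span}\{\Pi x_1,\dots,\Pi x_{m-1}\}$. Since $\bigcap_{m=1}^{n}E_m$ is precisely the event that $\Pi x_1,\dots,\Pi x_n$ are linearly independent, it suffices to show $\Pr[E_m^{c}]=0$ for each $m$ and apply a union bound.

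First I would condition on $x_1,\dots,x_{m-1}$ and set $S := \mathrm{span}\{\Pi x_1,\dots,\Pi x_{m-1}\}$ and $S' := S + \mathrm{span}\{v\}$. Because $\dim S' \le (m-1)+1 = m \le n < p$, the set $S'$ is a proper subspace of $\R^{p}$, and $\Pi x_m \in S$ holds if and only if $x_m \in S'$. By Assumption~\ref{assumption:first}, $x_m$ is independent of $x_1,\dots,x_{m-1}$, so everything reduces to the claim that for every \emph{fixed} proper subspace $W \subsetneq \R^{p}$ one has $\Pr[x_m \in W] = 0$; granting this, $\Pr[E_m^{c}] = \E\big[\Pr[x_m \in S' \mid x_1,\dots,x_{m-1}]\big] = 0$.

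To prove the claim I would choose a nonzero $\eta$ orthogonal to $W$, so that $x_m \in W$ forces $\eta^{\top}x_m = 0$, and write $x_m = \Sigma^{1/2}u_m$ as in Assumption~\ref{assumption:third}. With $\zeta := \Sigma^{1/2}\eta$ we have $\eta^{\top}x_m = \zeta^{\top}u_m$, and $\zeta \neq 0$ because $\Sigma$ is positive definite ($\lambda_p > 0$). Fix a coordinate $i$ with $\zeta_i \neq 0$; then $\zeta^{\top}u_m = 0$ is equivalent to $(u_m)_i = -\zeta_i^{-1}\sum_{j \neq i}\zeta_j (u_m)_j$. The coordinates of $u_m$ are independent by Assumption~\ref{assumption:third}, so conditioning on $\{(u_m)_j : j \neq i\}$ leaves $(u_m)_i$ with its marginal law while turning the right-hand side into a deterministic value $t$; applying Assumption~\ref{assumption:small_ball_probability} with the unit vector $\phi = e_m \in \S^{n-1}$, for which $(\Sigma^{-1/2}X^{\top}e_m)_i = (u_m)_i$, then gives $\Pr[(u_m)_i = t] \le c\,|t-t| = 0$. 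Averaging over $\{(u_m)_j : j\neq i\}$ yields $\Pr[\zeta^{\top}u_m = 0] = 0$, hence $\Pr[x_m \in W] = 0$, as claimed.

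I do not anticipate a real difficulty; the only subtlety worth flagging is that Assumption~\ref{assumption:third} alone is genuinely insufficient here (for instance, Rademacher coordinates of $u$ would make the projected rows linearly dependent with constant probability), which is exactly why the small-ball Assumption~\ref{assumption:small_ball_probability} must be invoked — to rule out atoms in the coordinates of $u_m$.
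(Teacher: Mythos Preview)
Your proof is correct and follows essentially the same approach as the paper's: both argue by adding one projected row at a time and showing it almost surely falls outside the span of the previous ones (the paper phrases this as induction, you as a union bound over the events $E_m$, which is equivalent). The paper's proof is considerably terser---it simply asserts that ``Assumption~\ref{assumption:small_ball_probability} implies that the span of $x_1',\ldots,x_{n-1}'$ has probability $0$''---whereas you spell out the mechanism, explicitly invoking Assumption~\ref{assumption:first} to condition on prior rows, Assumption~\ref{assumption:third} to condition on all but one coordinate of $u_m$, and Assumption~\ref{assumption:small_ball_probability} with $\phi=e_m$ to conclude that coordinate has no atoms; your closing remark that sub-Gaussianity alone would not suffice is also apt.
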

\begin{proof}
Assume without loss of generality that $\Sigma$ is diagonal and
$v = (1,0,0,\ldots,0)$.  Let $x_1,\ldots,x_n$ denote the rows of
$X$, and let $x_1',\ldots,x_n'$ be obtained from $x_1,\ldots,x_n$
by replacing each of their first components with $0$, thereby projecting
them onto the subspace orthogonal to $v$.  It suffices to prove
that, almost surely, $x_1',\ldots,x_n'$ are linearly independent.

We will prove this by induction.  The base case, there $n=1$,
is straightforward.  When $n > 1$, by the inductive hypothesis,
$x_1',\ldots,x_{n-1}'$ are linearly independent.
Since $p > n$, Assumption~\ref{assumption:small_ball_probability}
implies that the span of 
$x_1',\ldots,x_{n-1}'$ has probability $0$, so that,
almost surely, $x_n'$ is not a member this span, completing
the proof.
\end{proof}

\section{Concentration inequalities}
\label{a:concentration}
For an excellent reference of sub-Gaussian and sub-exponential concentration inequalities we refer the reader to \citet{vershynin2018high}. We begin by defining sub-Gaussian and sub-exponential random variables.

\begin{definition} \label{def:subgaussian}A random variable $\phi$ is sub-Gaussian if 
\begin{align*}
\lv \phi \rv_{\psi_2}:= \inf\left\{t>0: \mathbb{E}[\exp(\phi^2/t^2)]< 2\right\}
\end{align*}
is bounded. Further, $\lv \phi\rv_{\psi_2}$ is defined to be its sub-Gaussian norm.
\end{definition}

\begin{definition} \label{def:subexp}A random variable $\phi$ is 
said to be
sub-exponential if 
\begin{align*}
\lv \phi \rv_{\psi_1}:= \inf\left\{t>0: \mathbb{E}[\exp(\lvert \phi\rvert/t)< 2]\right\}
\end{align*}
is bounded. Further, $\lv \phi\rv_{\psi_1}$ is defined to be its sub-exponential norm.
\end{definition}
Next we state a few well-known facts about sub-Gaussian and sub-exponential random variables.
\begin{lemma}[Vershynin~2018, Lemma~2.7.6]
\label{l:sub_gaussian_squared}If a random variable $\phi$ is sub-Gaussian then $\phi^2$ is sub-exponential with $\lv \phi^2\rv_{\psi_1} = \lv \phi \rv_{\psi_2}^2$.
\end{lemma}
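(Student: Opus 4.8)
The plan is to express both $\lv \phi^2 \rv_{\psi_1}$ and $\lv \phi \rv_{\psi_2}^2$ in terms of a single quantity attached to the moment generating function of the nonnegative random variable $\phi^2$, and then compare. Concretely, I would define $f\colon [0,\infty) \to [1,\infty]$ by $f(u) := \E[\exp(u\phi^2)]$, which is well defined (possibly $+\infty$) since $\exp(u\phi^2)\ge 0$, and set
\[
u_0 := \sup\{u \geq 0 : f(u) < 2\},
\]
with the convention that this supremum is $0$ when the set reduces to $\{0\}$ and is $\infty$ when the set is unbounded. Because $u \mapsto \exp(u\phi^2)$ is pointwise non-decreasing, $f$ is non-decreasing, so $\{u \geq 0 : f(u) < 2\}$ is an interval containing $0$ with right endpoint $u_0$; this monotonicity is the only structural input the argument needs.

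First I would rewrite the sub-exponential norm. By Definition~\ref{def:subexp}, and since $|\phi^2| = \phi^2$,
\[
\lv \phi^2 \rv_{\psi_1} = \inf\{t > 0 : \E[\exp(\phi^2/t)] < 2\} = \inf\{t > 0 : f(1/t) < 2\}.
\]
As $t \mapsto 1/t$ is a decreasing bijection of $(0,\infty)$ and $f$ is non-decreasing, the condition $f(1/t) < 2$ is equivalent to $1/t < u_0$ (the endpoint $1/t = u_0$ does not affect the infimum), i.e. to $t > 1/u_0$; hence $\lv \phi^2 \rv_{\psi_1} = 1/u_0$, interpreted as $+\infty$ when $u_0 = 0$ and as $0$ when $u_0 = \infty$. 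Next, by Definition~\ref{def:subgaussian},
\[
\lv \phi \rv_{\psi_2} = \inf\{t > 0 : \E[\exp(\phi^2/t^2)] < 2\} = \inf\{t > 0 : f(1/t^2) < 2\},
\]
and the identical monotonicity argument, now with $t \mapsto 1/t^2$, shows this condition is equivalent to $t > 1/\sqrt{u_0}$, so $\lv \phi \rv_{\psi_2} = 1/\sqrt{u_0}$ and therefore $\lv \phi \rv_{\psi_2}^2 = 1/u_0 = \lv \phi^2 \rv_{\psi_1}$, which is the claim.

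I do not expect a substantive obstacle: the whole argument is just the change of variables $u = 1/t$ versus $u = 1/t^2$ inside one common one-parameter family of integrability conditions. The only points meriting a sentence of care are the degenerate cases $u_0 = 0$ (i.e. $\phi$ fails to be sub-Gaussian, and both ``norms'' equal $+\infty$) and $u_0 = \infty$ (which forces $\phi = 0$ almost surely, since $f(u)\to\infty$ as $u\to\infty$ whenever $\Pr[\phi \neq 0] > 0$, and then both norms are $0$); these are handled by the stated conventions.
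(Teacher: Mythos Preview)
Your proof is correct. The paper does not supply its own proof of this lemma: it is stated as a citation of \citep[Lemma~2.7.6]{vershynin2018high} and used without argument, so there is nothing in the paper to compare against. Your change-of-variables observation (that both norms are reparametrizations of the same threshold $u_0$ for $u\mapsto \E[\exp(u\phi^2)]$) is exactly the standard route, and your handling of the endpoint and the degenerate cases $u_0\in\{0,\infty\}$ is fine.
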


\begin{lemma}[Vershynin~2018, Lemma~2.7.10]
\label{l:sub_exponential_centering}If a random variable $\phi$ is sub-exponential then $\phi-\E[\phi]$ is sub-exponential with $\lv \phi-\E[\phi]\rv_{\psi_1} \le c\lv \phi \rv_{\psi_1}$ for some positive constant $c$.
\end{lemma}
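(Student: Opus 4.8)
To prove the claimed centering inequality, the plan is to proceed in two short steps: first control the mean $|\E[\phi]|$ by a multiple of $\lv\phi\rv_{\psi_1}$, and then bootstrap this into a bound on the sub-exponential norm of $\phi-\E[\phi]$ via a Jensen-type estimate (which avoids having to separately invoke that $\lv\cdot\rv_{\psi_1}$ obeys the triangle inequality).

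For the first step, I would fix any $t>\lv\phi\rv_{\psi_1}$, so that $\E[\exp(|\phi|/t)]\le 2$ by Definition~\ref{def:subexp}. Combining the pointwise bound $e^x\ge 1+x$ with this gives $1+\E[|\phi|]/t\le 2$, hence $\E[|\phi|]\le t$; letting $t\downarrow\lv\phi\rv_{\psi_1}$ yields $|\E[\phi]|\le\E[|\phi|]\le\lv\phi\rv_{\psi_1}$.

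For the second step, write $s:=\lv\phi\rv_{\psi_1}$ and use the pointwise estimate $|\phi-\E[\phi]|\le|\phi|+|\E[\phi]|\le|\phi|+s$. Then, for any $\lambda\ge 1$,
\begin{align*}
\E\left[\exp\left(\frac{|\phi-\E[\phi]|}{\lambda s}\right)\right]\le e^{1/\lambda}\,\E\left[\exp\left(\frac{|\phi|}{\lambda s}\right)\right]\le e^{1/\lambda}\left(\E\left[\exp\left(\frac{|\phi|}{s}\right)\right]\right)^{1/\lambda}\le (2e)^{1/\lambda},
\end{align*}
where the middle inequality is Jensen's inequality applied to the concave map $x\mapsto x^{1/\lambda}$, and the last uses $\E[\exp(|\phi|/t)]\le 2$ for $t>s$ and passes to the limit. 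Choosing $\lambda$ so that $(2e)^{1/\lambda}\le 2$ --- for instance $\lambda=4$, since $2e<16$ --- shows $\lv\phi-\E[\phi]\rv_{\psi_1}\le 4\,\lv\phi\rv_{\psi_1}$, establishing the claim with $c=4$.

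The argument is entirely routine; the only places needing a little care are the bookkeeping around the infimum in Definition~\ref{def:subexp} (handled by working with $t$ strictly above the norm and passing to the limit, or by observing that the infimum is attained by monotone convergence) and the numerical choice of the absolute constant $\lambda$.
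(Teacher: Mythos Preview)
Your proof is correct. The paper does not actually prove this lemma; it merely cites it as Lemma~2.7.10 from Vershynin's textbook, so there is no ``paper's own proof'' to compare against. Your argument is a clean, self-contained derivation: bounding $|\E[\phi]|\le\lv\phi\rv_{\psi_1}$ via $e^x\ge 1+x$, and then using Jensen's inequality for the concave map $x\mapsto x^{1/\lambda}$ to obtain the Orlicz bound on the centered variable with an explicit constant $c=4$. The only minor bookkeeping issue is the degenerate case $\lv\phi\rv_{\psi_1}=0$, which forces $\phi=0$ almost surely and makes the claim trivial; you may wish to note this explicitly, but it does not affect correctness.
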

We state Bernstein's inequality \citep[see, e.g.,][Theorem~2.8.1]{vershynin2018high}, a concentration inequality for a sum of independent sub-exponential random variables.
\begin{theorem} \label{thm:bernstein}
For independent mean-zero sub-exponential random variables $\phi_1,\ldots,\phi_m$, for every $\eta>0$, we have
\begin{align*}
\Pr\left[\Big\lvert \sum_{i=1}^m  \phi_i \Big\rvert \ge \eta\right]\le 2\exp\left(-c \min\left\{\frac{\eta^2}{\sum_{i=1}^m \lv \phi_i\rv_{\psi_1}^2},\frac{\eta}{\max_i \lv \phi_i \rv_{\psi_1}}\right\}\right),
\end{align*}
where $c$ is
a positive absolute constant.
\end{theorem}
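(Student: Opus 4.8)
This is the standard sub-exponential Bernstein inequality, and the plan is to run the Cram\'er--Chernoff method. Write $S := \sum_{i=1}^m \phi_i$ and $K_i := \lv \phi_i \rv_{\psi_1}$. The first step is the key single-variable estimate: a mean-zero sub-exponential $\phi$ with sub-exponential norm $K$ satisfies, for all $\lambda$ with $|\lambda| \le 1/(c_1 K)$, the moment generating function bound $\E[\exp(\lambda\phi)] \le \exp(c_2 \lambda^2 K^2)$. I would prove this by first extracting moment bounds $\E[|\phi|^q] \le (c K q)^q$ from the definition of the sub-exponential norm, then Taylor-expanding $\E[\exp(\lambda\phi)] = 1 + \lambda\E[\phi] + \sum_{q \ge 2} \lambda^q \E[\phi^q]/q!$, using $\E[\phi] = 0$ to kill the linear term, and bounding the tail of the series by a convergent geometric series when $|\lambda| K$ is small enough; this leaves $1 + O(\lambda^2 K^2) \le \exp(O(\lambda^2 K^2))$. (This is exactly Vershynin's Proposition~2.7.1, and it is where the restriction ``$\lambda$ near the origin'' — hence the two regimes in the final bound — originates; an unrestricted Gaussian-type MGF bound is false for sub-exponential variables.)

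Given this, independence yields, for any $0 < \lambda \le 1/(c_1 \max_i K_i)$,
\[
\E[\exp(\lambda S)] = \prod_{i=1}^m \E[\exp(\lambda\phi_i)] \le \exp\Big(c_2 \lambda^2 \sum_{i=1}^m K_i^2\Big),
\]
so Markov's inequality applied to $\exp(\lambda S)$ gives $\Pr[S \ge \eta] \le \exp(-\lambda\eta + c_2\lambda^2 \sum_i K_i^2)$. Then I would optimize the exponent over the admissible range of $\lambda$. The unconstrained minimizer is $\lambda_\star = \eta/(2 c_2 \sum_i K_i^2)$; if it lies in the allowed interval — equivalently $\eta \lesssim \sum_i K_i^2 / \max_i K_i$ — substituting it gives the sub-Gaussian branch $\exp(-\eta^2/(4c_2 \sum_i K_i^2))$. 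Otherwise take $\lambda = 1/(c_1 \max_i K_i)$ at the boundary, which produces the sub-exponential branch $\exp(-c\,\eta/\max_i K_i)$ after absorbing constants. Either way the one-sided bound is at most $\exp(-c \min\{\eta^2/\sum_i K_i^2,\ \eta/\max_i K_i\})$ for a suitable absolute constant $c$.

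Finally, apply the identical argument to $-S = \sum_i(-\phi_i)$ (each $-\phi_i$ is still mean-zero with the same sub-exponential norm $K_i$) to bound $\Pr[-S \ge \eta]$ by the same quantity, and take a union bound over the two tail events to obtain the stated two-sided inequality, which accounts for the leading factor of $2$. The only non-routine ingredient is the single-variable MGF estimate of the first paragraph; everything after that is optimizing a quadratic-versus-linear exponent and bookkeeping of constants. Since the paper invokes this result only in the form already available in \citet{vershynin2018high}, one may of course simply cite it directly rather than reproduce the argument.
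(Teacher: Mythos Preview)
Your proof outline is correct and is exactly the standard Cram\'er--Chernoff argument that appears in Vershynin's book. Note, however, that the paper itself does not supply a proof of this theorem at all: it is stated in the appendix purely as a citation of \citet[Theorem~2.8.1]{vershynin2018high} and then invoked as a black box, so there is no ``paper's own proof'' to compare against---your closing remark that one may simply cite the result is precisely what the authors do.
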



Let us continue by defining an $\epsilon$-net with respect to the Euclidean distance.
\begin{definition} \label{def:epsilon_net}Let $S \subseteq \mathbb{R}^p$. A subset $K$ is called an $\epsilon$-net of $S$ if every point in $S$ is within Euclidean distance $\epsilon$ of some point in $K$. 
\end{definition}
The following lemma bounds the size of a $1/4$-net of unit vectors in $\mathbb{R}^p$.
\begin{lemma}\label{l:covering_numbers_unit_vectors} 
Let $S$ be the set of all unit vectors in $\mathbb{R}^p$. Then there exists a $1/4$-net of $S$ of size $9^{p}$.
\end{lemma}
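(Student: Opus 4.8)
The plan is to use the standard volumetric (packing) argument for covering numbers of the sphere. First I would let $K$ be a maximal subset of $S = \{v \in \R^p : \lv v\rv = 1\}$ with the property that any two distinct points of $K$ are at Euclidean distance strictly greater than $1/4$. Such a maximal set exists and is finite because $S$ is compact (any $1/4$-separated subset of a bounded set is finite, and one can grow such a set greedily until no point can be added). I would then observe that, by maximality, $K$ is automatically a $1/4$-net of $S$: if some $x \in S$ were at distance $> 1/4$ from every point of $K$, then $K \cup \{x\}$ would still be $1/4$-separated, contradicting maximality. So it remains only to bound $|K|$.

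For the cardinality bound I would use a volume comparison. Since the points of $K$ are pairwise at distance $> 1/4$, the open Euclidean balls $\{y + B^p(0,1/8) : y \in K\}$ of radius $1/8$ centered at the points of $K$ are pairwise disjoint. Moreover, each such ball is contained in the ball $B^p(0, 1 + 1/8) = B^p(0, 9/8)$, because every $y \in K$ has $\lv y\rv = 1$. Writing $\mathrm{Vol}(B^p(0,r)) = r^p \, \mathrm{Vol}(B^p(0,1))$ and summing volumes over the disjoint balls,
\begin{align*}
    |K| \left(\tfrac{1}{8}\right)^p \mathrm{Vol}\big(B^p(0,1)\big) \;\le\; \left(\tfrac{9}{8}\right)^p \mathrm{Vol}\big(B^p(0,1)\big),
\end{align*}
which after dividing through gives $|K| \le 9^p$. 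This $K$ is the desired $1/4$-net.

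There is no real obstacle here; this is a textbook packing argument, and the only points requiring a little care are the existence and finiteness of a maximal $1/4$-separated set (handled by compactness of $S$) and the bookkeeping of the radii in the volume comparison ($1/8$ for disjointness since $2 \cdot \tfrac18 = \tfrac14$, and $1 + \tfrac18$ for containment since the centers lie on the unit sphere). If one prefers, the same conclusion follows from the general fact that an $\epsilon$-net of $S^{p-1}$ of size at most $(1 + 2/\epsilon)^p$ exists, instantiated at $\epsilon = 1/4$ to give $(1+8)^p = 9^p$; I would present the self-contained packing argument above rather than invoke this as a black box.
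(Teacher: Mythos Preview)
Your proof is correct and is essentially the same approach as the paper's: the paper simply invokes \cite[Corollary~4.2.13]{vershynin2018high} with $\epsilon = 1/4$, which is exactly the $(1+2/\epsilon)^p$ bound whose volumetric packing proof you have written out in full.
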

\begin{proof}
Follows immediately by invoking \citep[Corollary~4.2.13]{vershynin2018high} with $\epsilon=1/4$.
\end{proof}

\subsection{Proof of Lemma~\ref{l:eigenvalue_bound}}\label{s:eigenvalue_bound}
Let $\Sigma = \sum_{i=1}^p \lambda_i e_ie_i^{\top}$ be the spectral decomposition of the covariance matrix. Define the random vectors $$z_i := \frac{Xe_i}{\sqrt{\lambda_i}} \in \R^n.$$ These random vectors $z_i$ have entries that are independent, $\sigma_x^2$-sub-Gaussian random variables~\citep[see][Lemma~8]{bartlett2020benign}. Note that we can write the matrix
\begin{align*}
    X_{S}X_{S}^{\top}  = \sum_{i\in S} \lambda_i z_i z_i^{\top}.
\end{align*}
Further, its expected value is as follows:
\begin{align*}
    \E\left[X_{S}X_{S}^{\top}\right] & = \sum_{i\in S} \lambda_i \E\left[z_i z_i^{\top}\right]  =  \sum_{i\in S} \E\left[Xe_ie_i^{\top}X^{\top}\right]
     = I_n\sum_{i\in S} \lambda_i = I_{n} s(S).
    \end{align*}
    With this in place, we are now ready to prove our concentration results.
\eigenvaluebound*
\begin{proof} We shall prove this bound in the case where the set $S= [p]$. The bound for any other subset $S$ shall follow by exactly the same logic. First, note that by a standard $\epsilon$-net argument~\citep[see, e.g,][Lemma~25]{bartlett2020benign} to bound the operator norm we can use the following inequality:
\begin{align}\label{e:epsilon_net_argument}
    \left\lv XX^{\top} - I_n\sum_{i=1}^p \lambda_i \right\rv_{op} &\le 2\max_{v_j \in \cN_{\frac{1}{4}}} \left|v_j^{\top}\left(XX^{\top} - I_n\sum_{i=1}^p \lambda_i \right)v_j\right|,
\end{align}
where $\cN_{\frac{1}{4}}$ is a $1/4$-net of the unit sphere with respect to the Euclidean norm of size at most $9^{n}$. (We know that such a net exists by Lemma~\ref{l:covering_numbers_unit_vectors}.) Consider an arbitrary unit vector $v \in \S^{n-1}$. Then
\begin{align}
    v^{\top}\left(X X^{\top}-I_n\sum_{i=1}^p \lambda_i \right)v & = \sum_{i =1}^p \lambda_i \left((z_i^{\top}v)^{2} -1\right). \label{e:w_sphere_conc}
\end{align}
By Lemmas~\ref{l:sub_gaussian_squared} and \ref{l:sub_exponential_centering} we know that the random variables $\lambda_i((z_i^{\top}v)^{2} -1)$ are $c_1\lambda_i\sigma_x^2$-sub-exponential, for some positive constant $c_1$.
Therefore we can use Bernstein's inequality (see Theorem~\ref{thm:bernstein}) to upper bound the sum in equation~\eqref{e:w_sphere_conc} to get that, with probability at least $1-2e^{-t}$,
\begin{align} \label{e:eigen_bound_bernstein_inequality_used}
    \left|\sum_{i =1 }^p \lambda_i \left((z_i^{\top}v)^{2} -1\right) \right| &\le c_2 \sigma_x^2\max\left\{\lambda_1t,\sqrt{t \sum_{j=1}^p \lambda_j^2}\right\}.
\end{align}
Next by a union bound over all the elements of the cover $\cN_{\frac{1}{4}}$ we find that, with probability at least $1-2e^{-t}$, for all $v \in \cN_{\frac{1}{4}}$,
\begin{align*}
     \left|\sum_{i =1 }^p \lambda_i \left((z_i^{\top}v)^{2} -1\right) \right| &\le c_2 \sigma_x^2\max\left\{\lambda_1\left(t+n\log(9)\right),\sqrt{\left(t+n\log(9)\right) \sum_{j=1}^p \lambda_j^2}\right\}.
\end{align*}
Hence, by using inequality~\eqref{e:epsilon_net_argument} we get that with probability at least $1-2e^{-t}$
\begin{align*}
     \left\lv XX^{\top} - I_n\sum_{i=1}^p \lambda_i \right\rv_{op} &\le c_3 \sigma_x^2\max\left\{\lambda_1\left(t+n\log(9)\right),\sqrt{\left(t+n\log(9)\right) \sum_{j=1}^p \lambda_j^2}\right\}\\
     & \le  c_4 \sigma_x^2 \left(\lambda_1 (t+n)+\sqrt{(t+n)\sum_{j=1}^p \lambda_j^2}\right).
\end{align*}
Recalling that $\sigma_x$ is assumed to be a positive constant, 
%
this implies that the
greatest and least eigenvalues of
$XX^{\top}$ are within 
$c_5 \left(\lambda_1 (t+n)+\sqrt{(t+n)\sum_{j=1}^p \lambda_j^2}\right)$ of $\sum_{i=1}^p \lambda_i$,
which in turn implies
\begin{align*}
\left| \mu_j(XX^{\top}) - \sum_{i=1}^p \lambda_i 
 \right| & \leq c_5 \left(\lambda_1 (t+n)+\sqrt{(t+n)\sum_{j=1}^p \lambda_j^2}\right) \\
         & = c_5 \left( \sum_{i=1}^p \lambda_i \right)
             \left(\frac{t+n}{r_0}+\sqrt{\frac{t+n}{R_0}}\right),
\end{align*}
completing the proof.
\end{proof}

\subsection{Proof of Lemma~\ref{l:smallest_singular_value}} \label{s:smallest_multiplicative_bound_appendix}
We begin by proving an auxiliary lemma that relates the minimum singular value of a matrix to its approximation over an $\epsilon$-net under the assumption that its operator norm is bounded.
Recall that $X_S\in\mathbb{R}^{n\times|S|}$.
\begin{lemma}
\label{l:net_to_sphere} Let $\cN_{\epsilon}$ be an $\epsilon$-net of the unit sphere in $\R^{n}$ with respect to the Euclidean norm. For any $a,b\ge0$, if
\begin{align*}
    \inf_{z \in \mathbb{S}^{n-1}} \lv X^{\top}_S z \rv \le a-\epsilon b \quad \text{and} \quad \lv X^{\top}_S\rv_{op}\le b
\end{align*}
then $\inf_{z\in \cN_{\epsilon}}\lv X^{\top}_Sz\rv\le a$.
\end{lemma}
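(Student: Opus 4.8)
The plan is a one-line approximation argument. Since the unit sphere $\mathbb{S}^{n-1}$ is compact and $z \mapsto \lv X^{\top}_S z\rv$ is continuous, the infimum $\inf_{z\in\mathbb{S}^{n-1}}\lv X^{\top}_S z\rv$ is attained at some unit vector $z^{\star}$, and by hypothesis $\lv X^{\top}_S z^{\star}\rv \le a-\epsilon b$. By the defining property of an $\epsilon$-net, there is a point $z_0 \in \cN_{\epsilon}$ with $\lv z^{\star}-z_0\rv \le \epsilon$. The triangle inequality together with the operator-norm hypothesis then gives
\begin{align*}
\lv X^{\top}_S z_0\rv
&\le \lv X^{\top}_S z^{\star}\rv + \lv X^{\top}_S(z_0-z^{\star})\rv \\
&\le (a-\epsilon b) + \lv X^{\top}_S\rv_{op}\,\lv z_0-z^{\star}\rv \\
&\le (a-\epsilon b) + b\epsilon = a,
\end{align*}
and since $z_0 \in \cN_{\epsilon}$ this yields $\inf_{z\in\cN_{\epsilon}}\lv X^{\top}_S z\rv \le a$, which is exactly the claim.

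If one prefers to avoid invoking compactness to know the sphere-infimum is attained, the same computation works after an $\eta$-perturbation: fix $\eta>0$, pick a unit vector $z^{\star}$ with $\lv X^{\top}_S z^{\star}\rv \le a-\epsilon b+\eta$, run the display above to obtain $\lv X^{\top}_S z_0\rv \le a+\eta$ for some $z_0\in\cN_{\epsilon}$, and let $\eta\downarrow 0$. I do not anticipate any genuine obstacle here: this is a packaging lemma whose sole purpose is, in the proof of Lemma~\ref{l:smallest_singular_value}, to let us transfer a lower bound on $\inf_{z\in\cN_{\epsilon}}\lv X^{\top}_S z\rv$ over a finite net (where a union bound over the $9^{n}$ net points from Lemma~\ref{l:covering_numbers_unit_vectors} is cheap) to the corresponding lower bound over the whole unit sphere, once $\lv X^{\top}_S\rv_{op}$ has been controlled via Lemma~\ref{l:eigenvalue_bound}.
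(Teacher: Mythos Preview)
Your proof is correct and is essentially the same approach as the paper's: both use the triangle inequality together with the operator-norm hypothesis to pass from the sphere to the net. The only cosmetic difference is that you work pointwise from a single attainer $z^{\star}$ and its net-approximant $z_0$, while the paper phrases the same estimate globally via a nearest-neighbour map $\zeta:\mathbb{S}^{n-1}\to\cN_{\epsilon}$ and then takes an infimum.
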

\begin{proof}
Let $\zeta$ be a function that maps any unit vector $z$ to its nearest neighbor (with respect to the Euclidean norm) in the net $\cN_{\epsilon}$. Therefore, if $\lv X^{\top}_S\rv_{op}\le b$ then
\begin{align*}
\inf_{z \in \mathbb{S}^{n-1}}\lv X^{\top}_Sz\rv 
    &= \inf_{z \in \mathbb{S}^{n-1}}\lv X^{\top}_S(z-\zeta(z)) +X^{\top}_S\zeta(z)\rv \\
     &\ge \inf_{z\in \mathbb{S}^{n-1}}\lv X^{\top}_S\zeta(z)\rv-\inf_{z \in \mathbb{S}^{n-1}}\lv X^{\top}_S(z-\zeta(z))\rv \\
    &= \inf_{z\in \cN_{\epsilon}}\lv X^{\top}_Sz\rv-\inf_{z \in \mathbb{S}^{n-1}}\lv X^{\top}_S(z-\zeta(z))\rv \\
    &\ge \inf_{z\in \cN_{\epsilon}}\lv X^{\top}_Sz\rv-\lv X^{\top}_S\rv_{op} \inf_{z \in \mathbb{S}^{n-1}}\lv z-\zeta(z)\rv\\
    &\ge \inf_{z\in \cN_{\epsilon}}\lv X^{\top}_Sz\rv-\epsilon b.
\end{align*}
Further if $\inf_{z \in \mathbb{S}^{n-1}}\lv X_S^{\top}z\rv \le a -\epsilon b$ then, due to the inequality above, $\inf_{z\in \cN_{\epsilon}}\lv X_S^{\top}z\rv\le a$ which completes the proof.
\end{proof}

With this lemma in place let us prove our result. 

\smallestsingularvalue*
\begin{proof} To reduce notational burden in the proof we shall present a proof in the case where the $S=[p]$, and therefore $X_S =X$. For any other subset $S$ the proof shall proceed in exactly the same manner. 

In the proof we shall prove bounds on the smallest singular value of $X$, $s_{\min}(X)$. This immediately leads to a bound on 
$\mu_{n}(XX^{\top}) = s^2_{\min}(X)$. 

Recall that $s_{\min}(X) = s_{\min}(X^{\top})$. So we will instead prove a bound on the smallest singular value of $X^{\top}$ to simplify our calculations. For some parameter $h\ge c_4\ge 1$ that will be set in the sequel, decompose the probability into
\begin{align*}
    &\Pr\left[s_{\min}(X^{\top}) \le t \sqrt{\sum_{j=1}^p\lambda_j}\right] \\& = \Pr\left[\left\{s_{\min}(X^{\top}) \le t \sqrt{\sum_{j=1}^p\lambda_j}\right\}\; \cap \; \left\{\lv X\rv_{op} \le h\sqrt{ \lambda_1 p}\right\}\right]\\&\qquad +\Pr\left[\left\{s_{\min}(X^{\top}) \le t \sqrt{\sum_{j=1}^p\lambda_j}     \right\}\; \cap \; \left\{\lv X\rv_{op} > h\sqrt{ \lambda_1 p}\right\}\right] \\
    & \le \Pr\left[\left\{s_{\min}(X^{\top}) \le t \sqrt{\sum_{j=1}^p\lambda_j}\right\}\; \cap \; \left\{\lv X\rv_{op} \le h\sqrt{ \lambda_1 p}\right\}\right]+ \Pr\left[\lv X\rv_{op} > h\sqrt{ \lambda_1 p}\right]. \numberthis \label{e:decomposition_into_two_pieces}
\end{align*}

Now we will control each of these probabilities separately. First, let us control the second probability
\begin{align*}
    \Pr\left[\lv X\rv_{op} > h \sqrt{ \lambda_1 p}\right]& = \Pr\left[\lv X\Sigma^{-1/2} \Sigma^{1/2}\rv_{op} > h \sqrt{ \lambda_1 p}\right] \\
    & \le \Pr\left[\lv X\Sigma^{-1/2} \rv_{op} > h \sqrt{p}\right]  \le e^{-c_4 h^2 p}, \label{e:upper_bound_multiplicative_singular_value} \numberthis
\end{align*}
by invoking Proposition~2.4 by \citet{rudelson2009smallest}. 

To control the first probability in inequality~\eqref{e:decomposition_into_two_pieces} we need the following definition. Given a random vector $\xi \in \R^{p}$ define the L\'evy concentration function 
\begin{align*}
    \cL(\xi;t) := \sup_{w\in \R^p} \Pr\left[\lv \xi - w\rv\le t\right].
\end{align*}
Let $\phi\in \mathbb{S}^{n-1}$ be a fixed unit vector. By Assumption~\ref{assumption:small_ball_probability} we know that for any $a\le b \in \R$:
\begin{align}\label{e:projection_small_ball_property}
    \Pr\left[(\Sigma^{-1/2}X^{\top}\phi)_i \in [a,b]\right] \le c|b-a|.
\end{align}
Using this fact we find that for any $i \in [p]$:
\begin{align*}
    \cL((\Sigma^{-1/2}X^{\top}\phi)_i\; ; 2t) &= \sup_{w\in \R} \Pr\left[| (\Sigma^{-1/2}X^{\top}\phi)_i - w|\le 2t\right]\\&=\sup_{w\in \R} \Pr\left[(\Sigma^{-1/2}X^{\top}\phi)_i
    \in [w-2t,w+2t]\right]\le 4c_5t.
\end{align*}
Next by invoking Theorem~1.5 in \cite{rudelson2015small} we infer that
\begin{align*}
    \cL\left(X^{\top}\phi; 2t\sqrt{\sum_{i=1}^p \lambda_i}\right) \le (ct)^{c'r_0}.
\end{align*}
This implies that
\begin{align*}
    \Pr\left[\lv X^{\top}\phi\rv \le 2t\sqrt{\sum_{i=1}^p \lambda_i}\right] &\le \sup_{w\in \R^{p}}\Pr\left[\lv X^{\top}\phi-w\rv \le 2t\sqrt{\sum_{i=1}^p \lambda_i}\right] \\&= \cL\left(X^{\top}\phi;2t\sqrt{\sum_{i=1}^p \lambda_i}\right)\le (ct)^{c'r_0}. \numberthis \label{e:small_ball_probability_single_vector}
\end{align*}
This establishes a \emph{small-ball} probability (anti-concentration) for a fixed unit vector $\phi$. We will now proceed by using an $\epsilon$-net argument. For some $\epsilon \in \Big(0, \frac{2t}{h}\sqrt{\frac{\sum_{i=1}^p\lambda_i}{\lambda_1 p}}\Big)$ let $\cN_{\epsilon}$ be an $\epsilon$-net of the unit vectors in $\R^{n}$ with respect to the Euclidean norm of size at most $\left(\frac{2}{\epsilon}+1\right)^n$ (such a net exists, see, e.g., Corollary~4.2.13 in \cite{vershynin2018high}). By a union bound over the elements of the net
\begin{align*}
    \Pr\left[\min_{\phi \in \cN_{\epsilon}}\lv X^{\top}\phi\rv \le 2t\sqrt{\sum_{i=1}^p \lambda_i}\right] \le (ct)^{c'r_0} \cdot \left(\frac{2}{\epsilon}+1\right)^n. \numberthis \label{e:net_guarantees} 
\end{align*}
Next by Lemma~\ref{l:net_to_sphere} we know that 
\begin{align*}
    &\Pr\left[\left\{s_{\min}(X^{\top}) \le 2t \sqrt{\sum_{i=1}^p \lambda_i} -\epsilon h\sqrt{\lambda_1 p}\right\}\; \cap \; \left\{\lv X\rv_{op} \le h\sqrt{\lambda_1 p}\right\}\right] \\
    &\quad=\Pr\left[\left\{\inf_{z \in \mathbb{S}^{n-1}}\lv X^{\top}z\rv \le 2t \sqrt{\sum_{i=1}^p \lambda_i}-\epsilon h\sqrt{\lambda_1 p}\right\}\; \cap \; \left\{\lv X\rv_{op} \le h\sqrt{\lambda_1 p}\right\}\right] \\
    &\quad\le \Pr\left[\min_{z\in \cN_{\epsilon}}\lv X^{\top}z\rv \le 2t\sqrt{\sum_{i=1}^p \lambda_i}\right] \\&\quad\le (ct)^{c'r_0} \cdot \left(\frac{2}{\epsilon}+1\right)^n.
\end{align*}
Setting $\epsilon = \frac{t}{h}\sqrt{\frac{\sum_{i=1}^p \lambda_i}{\lambda_1 p}} = \frac{t}{h}\sqrt{\frac{r_0}{p}}  $ we get that
\begin{align*}
    \Pr\left[\left\{s_{\min}(X^{\top}) \le t \sqrt{\sum_{i=1}^p \lambda_i}\right\}\; \cap \; \left\{\lv X\rv_{op} \le h\sqrt{\lambda_1 p}\right\}\right]&\le (ct)^{c'r_0}\cdot \left(\frac{2h}{t}\sqrt{\frac{p}{r_0}}+1\right)^n.
\end{align*}
This combined with inequalities~\eqref{e:decomposition_into_two_pieces} and \eqref{e:upper_bound_multiplicative_singular_value} above yields
\begin{align*}
\Pr\left[s_{\min}(X^{\top}) \le t \sqrt{\sum_{i=1}^p \lambda_i}\right]  &\le (ct)^{c'r_0}\cdot \left(\frac{2 h}{t}\sqrt{\frac{p}{r_0}}+1\right)^n
+e^{-c_4 h^2 p}.
\end{align*}
Finally set $h=\frac{1}{t}\sqrt{\frac{r_0}{p}}$ to obtain the bound
\begin{align*}
  \Pr\left[s_{\min}(X^{\top}) \le t \sqrt{\sum_{i=1}^p \lambda_i}\right]&\le (ct)^{c'r_0}\cdot \left(\frac{c''}{t^2}\right)^n+e^{-c_5 r_0/t^2} 
  \overset{(i)}{\le} (c_2 t)^{c_3 \cdot r_0}
\end{align*}
where $(i)$ follows since $r_0>c_0 n$ for a large enough constant $c_0$ and because $t<c_1$ for a small enough constant $c_1$. 
\end{proof}
\printbibliography

\end{document}